\theoremstyle{plain}
\newtheorem{theorem}{Theorem}[section]
\newtheorem{proposition}[theorem]{Proposition}
\newtheorem{lemma}[theorem]{Lemma}
\newtheorem{corollary}[theorem]{Corollary}
\theoremstyle{definition}
\newtheorem{definition}[theorem]{Definition}
\theoremstyle{remark}
\newtheorem{remark}[theorem]{Remark}
\newcommand{\colvector}[1]{\underset{\big|}{\overset{\big|}{#1}}}
\newcommand{\rowvector}[1]{\rule[.5ex]{1em}{.55pt} \, #1 \, \rule[.5ex]{1em}{.55pt}}
\icmltitlerunning{Sign Rank Limitations for Inner Product Graph Decoders}
\begin{document}

\twocolumn[
\icmltitle{Sign Rank Limitations for Inner Product Graph Decoders}

% It is OKAY to include author information, even for blind
% submissions: the style file will automatically remove it for you
% unless you've provided the [accepted] option to the icml2024
% package.

% List of affiliations: The first argument should be a (short)
% identifier you will use later to specify author affiliations
% Academic affiliations should list Department, University, City, Region, Country
% Industry affiliations should list Company, City, Region, Country

% You can specify symbols, otherwise they are numbered in order.
% Ideally, you should not use this facility. Affiliations will be numbered
% in order of appearance and this is the preferred way.
\icmlsetsymbol{equal}{*}

\begin{icmlauthorlist}
\icmlauthor{Su Hyeong Lee}{yyy}
\icmlauthor{Qingqi Zhang}{yyy}
\icmlauthor{Risi Kondor}{yyy,comp}
%\icmlauthor{Firstname4 Lastname4}{sch}
%\icmlauthor{Firstname5 Lastname5}{yyy}
%\icmlauthor{Firstname6 Lastname6}{sch,yyy,comp}
%\icmlauthor{Firstname7 Lastname7}{comp}
%\icmlauthor{}{sch}
%\icmlauthor{Firstname8 Lastname8}{sch}
%\icmlauthor{Firstname8 Lastname8}{yyy,comp}
%\icmlauthor{}{sch}
%\icmlauthor{}{sch}
\end{icmlauthorlist}

\icmlaffiliation{yyy}{Department of Statistics, University of Chicago}
\icmlaffiliation{comp}{Department of Computer Science, University of Chicago}
%\icmlaffiliation{sch}{School of ZZZ, Institute of WWW, Location, Country}

\icmlcorrespondingauthor{Su Hyeong Lee}{sulee@uchicago.edu}
%\icmlcorrespondingauthor{Risi Kondor}{risi@cs.uchicago.edu}

% You may provide any keywords that you
% find helpful for describing your paper; these are used to populate
% the "keywords" metadata in the PDF but will not be shown in the document
\icmlkeywords{Machine Learning, ICML}

\vskip 0.3in
]

% this must go after the closing bracket ] following \twocolumn[ ...

% This command actually creates the footnote in the first column
% listing the affiliations and the copyright notice.
% The command takes one argument, which is text to display at the start of the footnote.
% The \icmlEqualContribution command is standard text for equal contribution.
% Remove it (just {}) if you do not need this facility.

%\printAffiliationsAndNotice{}  % leave blank if no need to mention equal contribution
\printAffiliationsAndNotice{} % otherwise use the standard text.

\begin{abstract}
Inner product-based decoders are among the most influential frameworks used to extract meaningful data from latent embeddings. However, such decoders have shown limitations in representation capacity in numerous works within the literature, which have been particularly notable in graph reconstruction problems. In this paper, we provide the first theoretical elucidation of this pervasive phenomenon in graph data, and suggest straightforward modifications to circumvent this issue without deviating from the inner product framework.
\end{abstract}
\section{Introduction}

\subsection{Background}
Real world graph data pertinent to scientific applications often reside within high-dimensional, non-Euclidean spaces where a succinct representation remains elusive (\citealt{hyperbolicembedding}; \citealt{Bronstein}; \citealt{mixedembedding}). For instance, in biochemistry or molecular drug design, the multinodal nature of the molecular backbone of interest necessitates devising sophisticated statistical techniques to capture essential node-relational attributes of the data inside of a low-dimensional latent space (\citealt{transe}; \citealt{molrep1}; \citealt{molrep4}).

Deep learning has achieved great success in distilling compact latent representations from complex graph data which preserves their structural integrity (\citealt{dl2}; \citealt{dl3}). Once the latent vectors have been synthesized, they can be channeled into trainable decoders to perform various tasks such as graph/link reconstruction (\citealt{VGAE}; \citealt{node2vec}; \citealt{recentreconstruct}) or node clustering (\citealt{cluster1}; \citealt{cluster2}; \citealt{cluster3}). The literature has seen the emergence of a diverse array of decoders for shallow embedding schemes such as Laplacian Eigenmaps (\citealt{lapeigmap}), DeepWalk (\citealt{deepwalk}), and node2vec (\citealt{node2vec}). In particular, many decoders start by extracting an ideal representation of the graph input by taking inner products between the latent embeddings of each node (\citealt{dot1}; \citealt{dot2}; \citealt{dot3}). Inner product-based algorithms also play a critical role in many other fields of machine learning, such as in computing the similarity in word vector representations in Skip-Gram Negative Sampling (\citealt{word2vec}; \citealt{skipgramnegsamp}).

Since its conception and popularization by \citealp{VGAE} for \textit{Graph Neural Networks (GNNs)}, inner product decoders have empirically been demonstrated to have limitations on expressiveness (\citealt{graphRepLearnbook}). This impediment appears to persist universally whenever inner product decoders are deployed, for instance when latent representations are generated probabilistically using adversarial training to extract maximal node-relational information (\citealt{adversarialGVAE}). To our knowledge, no theoretical studies in the literature have elucidated this pervasive phenomenon. 

In this paper, we formalize the notion of low dimensionality in latent representations using the \textit{sign rank} (\citealt{signrank1}; \citealt{signrank2}), and provide a deterministic bound on the minimum latent feature dimension necessary for a faithful reconstruction of the embedded graph. Furthermore, we present examples of pedagogical graph structures for which complexifying the latent space permits significantly lower dimensional latent encodings to be used. Afterwards, we design a decoding architecture which drastically expands the representation capacity of inner product decoders that subsume the expressivity of the aforementioned complex GNN. 

Section $1$ discusses the motivation, related work, and our contributions. Section $2$ formulates the problem and generates low-rank graphs. Section $3$ categorically motivates and proves a lower bound on the minimum latent dimension necessary for a faithful graph reconstruction. Section $4$ details an augmentation scheme to boost decoder representation capacity. Section $5$ presents the experiments, and the conclusion and possible extensions are given in Section $6$.

\subsection{Problem motivation}
The adjacency $\mathbf{A} \in \{0,1\}^{N\times N}$ of an unweighted, undirected graph is a Hermitian matrix encoding binary information, where the entry $A_{ij} = 1$ indicates a connection between nodes $i$ and $j$. In a binary encoding, the values of the two objects are insignificant. While using $0$ or $1$ to represent node relations may appear innocuous, this translates to nontrivial consequences down the line, such as the adjacency matrices of sparse graph structures failing to be low-rank.

This motivates the scheme of training a neural net to discover a latent embedding while respecting the insignificance of the binary values. In such an approach, the GNN can be fit to a set of Hermitian matrices where positive entries indicate a connection while non-positive entries indicate a disconnection. This allows the net to search for a low-dimensional representation from an equivalence class of adjacencies, where each class holds uncountably infinite cardinality. 
\begin{figure}[h]
\vspace{.1in}
%\centerline{\fbox{This figure intentionally left non-blank}}
\includegraphics[width=0.46\textwidth]{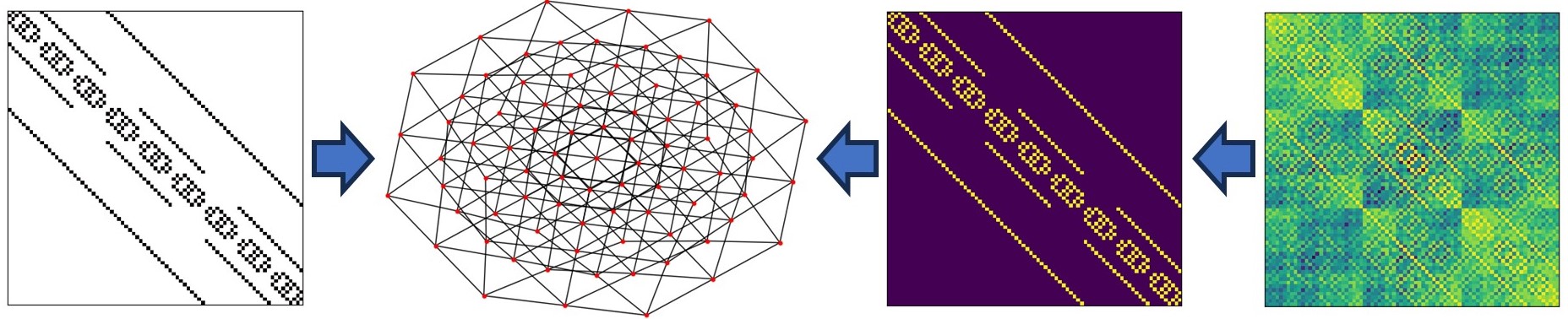}
\vspace{.1in}
\caption{The adjacency $\mathbf{A}$ of the $4$-dimensional $3\times 3\times 3\times 3$ grid graph shown on the left has matrix rank $62$, whereas the rightmost matrix has matrix rank $6$. Both matrices belong to the same equivalence class under the $\operatorname{sign}$ mapping applied elementwise.}
\end{figure}

Under this approach, we study the classes of graphs representable by low-rank embeddings, giving a complete classification of all graphs up to rank $3$. We also present a variety of different strategies for proving that a graph cannot be represented by a low-rank embedding, and introduce \textit{cutoffs} (Section~\ref{cutoff}) to drastically enhance the performance of inner product decoding architectures. Cutoffs are trainable linear weights in the decoder that can be selected to scale proportionally to the number of latent \textit{feature} dimensions, which maintains economical cost as the number of nodes grows high-dimensional.

\subsection{Notable related work}

Given a feature matrix $\mathbf{X} \in \mathbb{R}^{N\times d}$ and an adjacency matrix $\mathbf{A} \in \mathbb{R}^{N\times N}$, a \textit{Graph Convolutional Network (GCN)} can be utilized to learn a latent mapping $\mathbf{X}\mapsto \mathbf{Z} \in \mathbb{R}^{N\times f}$, where $f$ is the latent dimension. Arguably, the most famous example of a GCN employing an inner product decoder is given by the seminal paper by \citealp{VGAE}, which details a straightforward \textit{Graph Autoencoder (GAE)} architecture. In their inference model, the latent matrix $\mathbf{Z}$ is formed by message passing through the two layer network $\operatorname{GCN}(\mathbf{X}, \mathbf{A})=\tilde{\mathbf{A}} \operatorname{ReLU}(\tilde{\mathbf{A}} \mathbf{X} \mathbf{W}_0) \mathbf{W}_1$. 

After computing the latent embedding $\mathbf{Z}$ with $i$-th \textit{row vector} $\mathbf{w}_i$, a stochastic decoding model is formed by taking inner products, 
\begin{equation}\label{innerproductdecoder}
p\left(A_{i j}=1 \left| \mathbf{Z}\right.\right)=\sigma\left(\mathbf{w}_i^\top \mathbf{w}_j\right),    
\end{equation}
where $\sigma$ denotes the sigmoid. In essence, the decoding is done by interpreting entries of $\sigma(\mathbf{ZZ}^\top)$ as Bernoulli probabilities of successful connections between the nodes, which forms a distribution over computed adjacency matrices $\hat{\mathbf{A}}$. Additional details are given in Appendix $1$.

During this process, the latent dimension $f$ is tuned empirically as a hyperparameter until the preferred model behavior is observed. As the node count of the embedded graph grows, a significantly lower value of $f$ which infers or reproduces the adjacency $\mathbf{A}$ is desirable due to prohibitive computational cost. 

\subsection{Our Contributions}

The sign rank is typically studied using VC dimensions, where previous works have attempted to form subexponential time algorithms for computing the sign rank (\citealt{signrank3}; \citealt{signrank2}). In this work, we dislocate sign rank considerations from VC dimension by introducing cutoffs that transmute planar classifiers into conic classifiers. Furthermore, we show that conic classification can be learned using a very low-rank ensembling operation, which permits the classifier to attain pairwise distinct classification thresholds to establish node relational strength in graph data. 

By linking the sign rank, graph reconstructions, and non-linear classifiers, we present a powerful new theory on graph representations which explains why the standard real inner product decoder is a provably weak choice for distilling graph data despite their ubiquity in other areas of machine learning. Finally, we provide concrete examples illustrating how the algebraic latent substructure induced by the complexification of the neural net allows the decoder to sidestep the aforementioned limitations entirely. To our knowledge, this paper also presents the first use of an ML architecture to compute what may be realized as an extremely tight upper bound to the sign rank.

\section{Classifying Low Rank Graphs}

We define the function $\operatorname{sign}: \mathbb{R} \to \{+,- \}$ to take the negative value in $\mathbb{R}_{\le0}$ and positive otherwise. Assigning the positive value to $0$ does not confer any substantive changes to the results presented in this paper. The following definition is written for a complex field for clarity.

\begin{definition}
Let $\mathbf{A} \in \mathbb{R}^{N \times N}$ and $F = \mathbb{C}$. Then, the \textit{complex sign rank} of $\mathbf{A}$ is the minimal $f$ such that there exists $\mathbf{Z} \in F^{N\times f}$ with columns $\mathbf{z}_i$ satisfying 

\resizebox{0.47\textwidth}{!}{$
\operatorname{sign}(\mathbf{A}) = \operatorname{sign}\left(\mathfrak{Re}\left(\left[\colvector{\mathbf{z}_1} \colvector{\mathbf{z}_2} \dots \colvector{\mathbf{z}_f} \right] \times   \left[
\begin{array}{cc}
\rowvector{\mathbf{z}_1^\top} \\
\rowvector{\mathbf{z}_2^\top} \\ 
\vdots \\ 
\rowvector{\mathbf{z}_f^\top}
\end{array}
\right] \right)\right) .$}

The \textit{real sign rank} is given by replacing $F = \mathbb{R}$.
\end{definition}

In this paper, rank is taken by default to mean real sign rank unless noted otherwise. This is consistent with the current paradigm of working in the real field for graph machine learning.

\subsection{Graphs of rank $1$}\label{rank1graphs}

As we have relaxed the requirement that the adjacency matrix must contain binary values, it is natural to question how much more information can be embedded in the latent representation $\mathbf{Z}$. It is easy to classify all graphs of rank $1$ with $N$ nodes by taking conjugate multiples of $\mathbf{z} \in \{0,1\}^N \cup \{-1,1\}^N $, $\mathbf{z}\mathbf{z}^\top$. By a straightforward counting argument, this cannot possibly reproduce all graph structures with $N$ nodes.

\subsection{Classification of graphs of ranks $2$ or $3$}
We now work towards presenting a complete classification of rank $2$ graphs by providing a framework capable of generating all permissible sign combinations. To avoid trivial (and thus uninteresting) counterexamples in the theory, we limit the analysis to graphs that do not admit self-loops and ignore adjacency diagonals. 

\begin{figure}[h]
\vspace{.1in}
%\centerline{\fbox{This figure intentionally left non-blank}}
\includegraphics[width=0.46\textwidth]{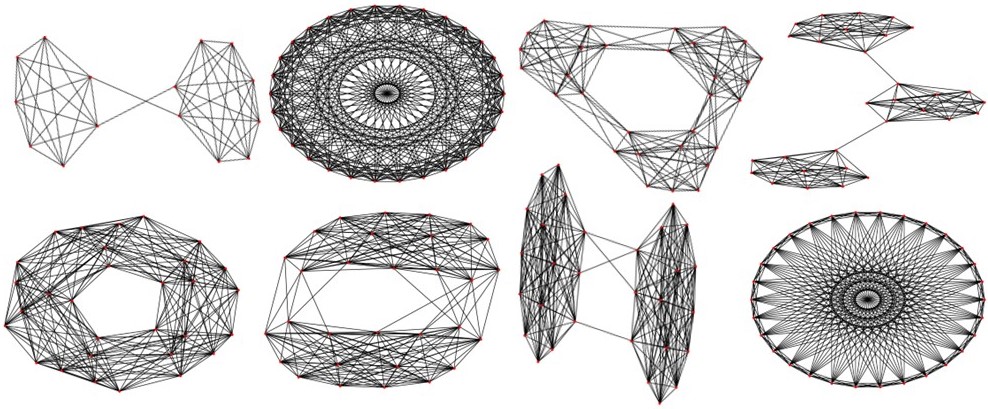}
\vspace{.1in}
\caption{Examples of rank $\le 2$ graphs.}
\label{rank2examples}
\end{figure}

The easiest approach to generating rank $2$ graphs that we could find was to take conjugate multiples of a real matrix function with two column dimensions. For $k_1, \dots, k_n \in \mathbb{R}$, let
\begin{equation}\label{choosethismatfunc}
\mathbf{Z} = \left(\begin{array}{cc}
 \cos(k_1  x)    &  \sin(k_1  x)\\
 \cos(k_2  x)     &  \sin(k_2  x) \\
\vdots   & \vdots   \\
\cos(k_N  x)  &  \sin(k_N  x) \\
\end{array} \right), \quad \tilde{\mathbf{A}} = \mathbf{ZZ}^\top.    
\end{equation}
The entries of the low rank representation $\tilde{\mathbf{A}}$ are expressible as a single trigonometric function, $\tilde{A}_{ij} = \cos ((k_i - k_j) x)$. Out of all matrix functions available, the reason for choosing~\eqref{choosethismatfunc} is suggested by Lemmas \ref{easylemma}, \ref{easylemma2}, which intuit that the expression should be capable of producing all sign combinations of rank $2$. The proofs are contained in Appendix $2$.
\begin{lemma}\label{easylemma}
Let $k_i : = \sqrt{p_i}$ where $p_1<p_2<\dots$ are any sequence of positive integer primes. Limiting the indices to the lower triangular portion $i>j$, the periods of $\tilde{A}_{ij}$ can never match. That is, there exists no $n_1,n_2 \in \mathbb{Z}_{\neq 0}$ such that $n_1 t_1 = n_2 t_2$, where $t_1,t_2$ are periods of $ \tilde{A}_{ij}, \tilde{A}_{i^\prime,j^\prime}$ for $\{i,j\} \neq \{i^\prime,j^\prime \}.$
\end{lemma}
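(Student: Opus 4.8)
The plan is to translate the period-matching condition into the irrationality of a ratio of differences of square roots of primes, and then reduce that to the classical linear independence of $\sqrt{p_1},\sqrt{p_2},\dots$ over $\mathbb{Q}$.

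First I would pin down the fundamental period of each lower-triangular entry. Since $\tilde{A}_{ij} = \cos((k_i-k_j)x)$ with $k_i - k_j = \sqrt{p_i}-\sqrt{p_j} \neq 0$ for $i>j$ (the primes are distinct), each such entry is a genuinely non-constant periodic function of $x$ with fundamental period $t_{ij} = 2\pi/(\sqrt{p_i}-\sqrt{p_j})$. Writing $t_1 = t_{ij}$ and $t_2 = t_{i'j'}$, the condition $n_1 t_1 = n_2 t_2$ with $n_1,n_2 \in \mathbb{Z}_{\neq 0}$ is then equivalent to
\[
n_1\bigl(\sqrt{p_{i'}}-\sqrt{p_{j'}}\bigr) = n_2\bigl(\sqrt{p_i}-\sqrt{p_j}\bigr),
\]
i.e. to the rationality of the ratio $(\sqrt{p_{i'}}-\sqrt{p_{j'}})/(\sqrt{p_i}-\sqrt{p_j})$. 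So it suffices to show this ratio is irrational whenever $\{i,j\}\neq\{i',j'\}$.

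Second, I would argue by contradiction: assuming such nonzero integers exist, rearrange the identity into a single vanishing rational combination
\[
n_1\sqrt{p_{i'}} - n_1\sqrt{p_{j'}} - n_2\sqrt{p_i} + n_2\sqrt{p_j} = 0,
\]
and invoke the standard theorem that square roots of distinct primes are linearly independent over $\mathbb{Q}$ (provable, for instance, by showing $[\mathbb{Q}(\sqrt{p_1},\dots,\sqrt{p_m}):\mathbb{Q}] = 2^m$). I would then split into cases according to how the index multiset $\{i',j',i,j\}$ collapses. Because $i>j$ and $i'>j'$ force distinctness within each pair, while $\{i,j\}\neq\{i',j'\}$ forbids full coincidence, the two pairs share at most one index. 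When they share none, all four primes are distinct and the relation forces $n_1=n_2=0$; when they share exactly one, the shared prime acquires a combined coefficient while at least one of the remaining primes appears in only one pair and therefore carries a coefficient equal to $\pm n_1$ or $\pm n_2$, which is nonzero. In every configuration, collecting like terms and applying linear independence forces a nonzero coefficient to vanish — a contradiction.

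The substance of the argument lies entirely in the linear independence of $\{\sqrt{p}\}$ over $\mathbb{Q}$, which I would cite rather than reprove. The main obstacle is therefore not conceptual but organizational: I must enumerate the overlap cases cleanly and verify in each that the surviving primes are genuinely distinct, so that the independence theorem applies, and that the coefficient of some isolated prime is indeed nonzero. This bookkeeping is routine, and the choice $k_i=\sqrt{p_i}$ is precisely what makes it go through, since any rational relation among the differences would collide with prime independence.
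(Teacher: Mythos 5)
Your proof is correct, but it takes a genuinely different route from the paper's. You reduce $n_1 t_1 = n_2 t_2$ to a vanishing rational linear combination $n_1\sqrt{p_{i'}} - n_1\sqrt{p_{j'}} - n_2\sqrt{p_i} + n_2\sqrt{p_j} = 0$ and invoke the classical linear independence of square roots of distinct primes over $\mathbb{Q}$, finishing with a case analysis on how the index pairs overlap (your observation that the pairs share at most one index, so some surviving $\sqrt{p}$ carries coefficient $\pm n_1$ or $\pm n_2$, is exactly the right bookkeeping). The paper instead stays entirely elementary: it squares the relation once to get $n_1^2(k_{i'}-k_{j'})^2 = n_2^2(k_i-k_j)^2$, observes that this forces $2n_1^2\sqrt{p_{i'}p_{j'}} - 2n_2^2\sqrt{p_i p_j}$ to be an integer, and squares again to conclude that $\sqrt{p_i p_j p_{i'} p_{j'}}$ is rational --- impossible because $\{i,j\}\neq\{i',j'\}$ guarantees at least two primes of odd multiplicity in the product, so it is not a perfect square. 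The paper's double-squaring needs only unique factorization (irrationality of $\sqrt{N}$ for non-square $N$) and absorbs the overlap cases into the single check that the four-fold product is not a square; your argument leans on a strictly stronger imported theorem but is arguably more transparent about \emph{why} the relation fails and generalizes immediately to longer rational relations among the $\sqrt{p_k}$. Both are complete; just make sure, if you write yours up, to state the independence theorem you are citing precisely and to note that $n_1 t_1 = n_2 t_2$ with positive periods forces $n_1, n_2$ to share a sign, so you may take both positive before dropping absolute values.
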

Therefore, each entry of $\tilde{\mathbf{A}}$ is of the form $\cos(\theta x)$, where the $\theta$ are unique to every lower triangular entry. Note that $x = 0$ initializes all such entries to be $1$, and Lemma \ref{easylemma} guarantees that this will never happen again as we vary $x$ over the real line. However, the values of any two non-diagonal entries can become arbitrarily close to $1$ pairwise as we vary $x$ far afield. This is formally expressed by Lemma~\ref{easylemma2}, which possesses an elegant proof (Appendix $2.1$). 

\begin{lemma}\label{easylemma2}
Let $t_1$, $t_2$ be as in Lemma \ref{easylemma}. Then for any $\varepsilon>0$, there exists $n_1,n_2 \in \mathbb{Z}_{>0}$ such that $|n_1 t_1 - n_2 t_2| < \varepsilon.$
\end{lemma}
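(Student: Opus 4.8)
The plan is to reduce the statement to a classical fact about irrational rotations of the circle. Since $t_1$ and $t_2$ are periods of the cosine-valued entries $\tilde{A}_{ij} = \cos(\theta x)$, they have the form $t_i = 2\pi/\theta_i > 0$ for the corresponding frequencies. Dividing the target inequality through by $t_2$, I would rewrite $|n_1 t_1 - n_2 t_2| < \varepsilon$ equivalently as $|n_1 \alpha - n_2| < \varepsilon'$, where $\alpha := t_1/t_2$ and $\varepsilon' := \varepsilon/t_2 > 0$. The conclusion of Lemma~\ref{easylemma}, namely that $n_1 t_1 \neq n_2 t_2$ for all nonzero integers, is exactly the assertion that $\alpha$ is irrational. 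Thus it suffices to show: for irrational $\alpha > 0$ and any $\varepsilon' > 0$, there exist $n_1, n_2 \in \mathbb{Z}_{>0}$ with $|n_1 \alpha - n_2| < \varepsilon'$.

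The heart of the argument is the density of the orbit $\{\{n\alpha\} : n \in \mathbb{Z}_{>0}\}$ in $[0,1)$, where $\{\cdot\}$ denotes the fractional part. I would obtain this through the elegant pigeonhole argument of Dirichlet: partition $[0,1)$ into $Q$ half-open subintervals of length $1/Q$ and consider the $Q+1$ points $\{0\cdot\alpha\}, \{1\cdot\alpha\}, \dots, \{Q\alpha\}$. Two of them, say $\{a\alpha\}$ and $\{b\alpha\}$ with $a < b$, must share a subinterval, so $m := b - a \in \{1, \dots, Q\}$ satisfies $\|m\alpha\| < 1/Q$, where $\|\cdot\|$ is the distance to the nearest integer. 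Irrationality forces $\{m\alpha\} \neq 0$, and iterating multiples of $m$ (whose fractional parts advance by steps smaller than $1/Q$) sweeps out $[0,1)$; letting $Q \to \infty$ yields density. Weyl equidistribution or Kronecker's theorem would serve equally well.

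Given density, the positive integers are immediate from the approach to $1$ from below. Choosing $n_1 \in \mathbb{Z}_{>0}$ with $\{n_1 \alpha\} \in (1 - \varepsilon', 1)$ --- guaranteed by density --- and setting $n_2 := \lceil n_1 \alpha \rceil$, we have $n_2 \geq 1$ since $n_1\alpha > 0$, and $|n_1 \alpha - n_2| = 1 - \{n_1\alpha\} < \varepsilon'$. Both $n_1$ and $n_2$ are therefore strictly positive. Undoing the normalization by $t_2$ recovers $|n_1 t_1 - n_2 t_2| < \varepsilon$, which is the claim.

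The approximation itself is entirely classical, so I do not expect a genuine obstacle; the only point requiring care is securing strict positivity of both $n_1$ and $n_2$ at once. Using the side of the orbit that approaches $1$ from below sidesteps this cleanly, since rounding $n_1\alpha$ upward to $n_2 = \lceil n_1\alpha\rceil$ can never produce $0$. This is precisely why the proof is short.
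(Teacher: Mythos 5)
Your proof is correct and follows essentially the same route as the paper's: both are instances of Dirichlet's pigeonhole argument applied to the irrational ratio $\alpha = t_1/t_2$ (whose irrationality is exactly Lemma~\ref{easylemma}). The paper pigeonholes the values $n_2 t_2 - n_1 t_1$ directly into subintervals of $[0,t_1]$ and subtracts two that collide, whereas you normalize to fractional parts and take a short detour through orbit density to secure positivity of $n_1, n_2$ via $n_2 = \lceil n_1\alpha\rceil$ --- a presentational difference rather than a different method.
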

Taken together, Lemmas~\ref{easylemma} and~\ref{easylemma2} ensure that any two distinct lower triangular entries of $\mathbf{\tilde{A}}$ may never simultaneously take the value $1$ for any $x \neq 0$, but may become arbitrarily close to $1$ for large enough $x$. At this point, their relationship resets from the perspective of their periods. As we demand that $\varepsilon$ converges to $0^+$, it is expected that this reset happens farther afar in the real line. Due to the pairwise periodic regularity of the entries of $\mathbf{\tilde{A}}$ which emerges despite their perpetually imbalanced periods (Lemma~\ref{easylemma}), it is natural and intuitive that all possible sign combinations of rank $2$ should be taken as we vary $x \in \mathbb{R}$.

The following proposition, which admits a simple proof, formally confirms that this is indeed the case. 
\begin{proposition}\label{firstproposition}
Let $\mathbf{Z}$, $\tilde{\mathbf{A}}$ be as in~\eqref{choosethismatfunc}. Then, for any rank $2$ adjacency $\mathbf{A}$ representing a connected graph, there exists $\left(k_1, \dots, k_n\right)$ such that $\operatorname{sign}(\mathbf{A}) = \operatorname{sign}(\tilde{\mathbf{A}})$.
\end{proposition}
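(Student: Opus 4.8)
The plan is to recognize that the proposed family is nothing more than a parameterization of the unit circle, and then to reduce the statement to the standard angular characterization of rank~$2$ sign patterns. Writing $\phi_i := k_i x$, the entries of the low-rank surrogate become $\tilde{A}_{ij} = \cos(k_i x)\cos(k_j x) + \sin(k_i x)\sin(k_j x) = \cos(\phi_i - \phi_j) = \langle \mathbf{u}(\phi_i), \mathbf{u}(\phi_j)\rangle$, where $\mathbf{u}(\phi) := (\cos\phi, \sin\phi)^\top$ is the unit vector at angle $\phi$. Thus each row of $\mathbf{Z}$ is a unit vector, and realizing a prescribed sign pattern with $\tilde{\mathbf{A}}$ is equivalent to placing $N$ points on the circle so that nodes $i,j$ are declared adjacent precisely when their angular separation is less than $\pi/2$. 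The task therefore collapses to showing that every rank~$2$ connected graph admits such a circular placement.

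First I would invoke the definition of real sign rank directly: since $\mathbf{A}$ has rank $2$, there is some $\mathbf{Y}\in\mathbb{R}^{N\times 2}$ with rows $\mathbf{y}_i$ satisfying $\operatorname{sign}(\mathbf{A}) = \operatorname{sign}(\mathbf{Y}\mathbf{Y}^\top)$. The single place where connectedness enters --- and the step I expect to be the crux --- is in forcing every $\mathbf{y}_i$ to be nonzero. If node $i$ were represented by $\mathbf{y}_i = 0$, then $\langle \mathbf{y}_i, \mathbf{y}_j\rangle = 0$ for all $j$, so under the convention $\operatorname{sign}(0) = -$ node $i$ would be isolated, contradicting connectedness for $N \ge 2$. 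Hence in any rank~$2$ representation of a connected graph every row is nonzero.

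Given nonvanishing rows, I would normalize $\mathbf{u}_i := \mathbf{y}_i / \lVert \mathbf{y}_i\rVert$; because $\lVert \mathbf{y}_i\rVert\,\lVert \mathbf{y}_j\rVert > 0$, normalization leaves every inner-product sign unchanged, so $\operatorname{sign}(\mathbf{A}) = \operatorname{sign}(\mathbf{U}\mathbf{U}^\top)$ with $\mathbf{U}$ having unit rows $\mathbf{u}_i = \mathbf{u}(\theta_i)$ for suitable angles $\theta_i$. Setting $k_i := \theta_i$ and $x := 1$ (the scalar $x$ may be absorbed into the $k_i$) then yields $\mathbf{Z} = \mathbf{U}$ and hence $\tilde{A}_{ij} = \cos(\theta_i - \theta_j) = \langle \mathbf{u}_i, \mathbf{u}_j\rangle$, so the two sign patterns agree on every off-diagonal entry; the diagonal is ignored by assumption, completing the argument.

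It is worth flagging why connectedness cannot be dropped, which also clarifies the role of Lemmas~\ref{easylemma} and~\ref{easylemma2}: a disconnected graph with several isolated nodes may only be representable using genuinely vanishing vectors, which the purely trigonometric unit-norm family can never reproduce. An alternative, more quantitative route would instead keep the $k_i = \sqrt{p_i}$ of Lemma~\ref{easylemma} fixed and produce the target angles $\theta_i$ by choosing $x$ large: the $\mathbb{Q}$-linear independence of $\sqrt{p_1},\dots,\sqrt{p_N}$ makes $x\mapsto(k_1 x,\dots,k_N x)$ equidistribute modulo $2\pi$ (the simultaneous-approximation phenomenon captured by Lemma~\ref{easylemma2}), so some $x$ drives the configuration arbitrarily close to any target angles, and the strict inequalities defining the sign pattern persist under small perturbations. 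I would present the direct construction as the main proof and relegate this equidistribution perspective to a remark, since the former is elementary and already suffices.
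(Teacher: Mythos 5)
Your argument is correct and coincides with the paper's own proof: both take a rank-$2$ witness $\mathbf{Y}$ with $\operatorname{sign}(\mathbf{A})=\operatorname{sign}(\mathbf{Y}\mathbf{Y}^\top)$, use connectedness to rule out zero rows, normalize to unit vectors on $S^1$, and observe that any point of $S^1$ is realized as $(\cos(kx),\sin(kx))$ for a suitable $k$ (the paper fixes $x\neq 0$ and solves for $k$; you equivalently set $x=1$ and $k_i=\theta_i$). The additional equidistribution remark is not needed and is not part of the paper's argument, but it does not affect correctness.
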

\begin{proof}
As $\mathbf{A}$ is rank $2$, there exists $\mathbf{Z_A} \in \mathbb{R}^{N\times 2}$ such that $\operatorname{sign}(\mathbf{A}) = \operatorname{sign}(\mathbf{Z_AZ_A}^\top)$. Denoting the $i$-th row of $\mathbf{Z}_\mathbf{A}$ as $\mathbf{w}_i$, note that $\mathbf{w}_i \neq \overline{0}$ for all $i$ (otherwise the $i$-th node is disconnected). Scaling the length of $\mathbf{w}_i$ by a positive scalar does not impact the sign of $\tilde{\mathbf{A}}_{ij} = \langle \mathbf{w}_i, \mathbf{w}_j\rangle$, which allows us to normalize $||\mathbf{w}_i||_{\ell_2} = 1.$ Finally, note that for any point $(a,b) \in S^1$ and $x \neq 0$ arbitrarily fixed, there exists a $k$ such that $(\cos(kx),\sin(kx)) = (a,b)$. 
\end{proof}
An analogous geometric argument on the sphere $S^2$ generalizes Proposition~\ref{firstproposition} to rank $3$ graphs.
\begin{proposition}\label{firstpropgeneralized}
For $k_1, k_1^\prime, \dots, k_N,k_N^\prime \in \mathbb{R}$, let

\resizebox{0.48\textwidth}{!}{$
\mathbf{Z} = \left(\begin{array}{ccc}
 \cos(k_1  x) \sin(k_1^\prime x)   &  \cos(k_1  x) \cos(k_1^\prime x)   &  \sin(k_1  x)\\
\cos(k_2  x) \sin(k_2^\prime x)    &  \cos(k_2  x) \cos(k_2^\prime x)    &  \sin(k_2  x) \\
\vdots   & \vdots   & \vdots   \\
\cos(k_N  x) \sin(k_N^\prime x) & \cos(k_N  x)\cos(k_N^\prime x)  &  \sin(k_N  x) \\
\end{array} \right)
$}

and $\tilde{\mathbf{A}} = \mathbf{ZZ}^\top$. Then for any connected rank $3$ adjacency $\mathbf{A}$, there exists $k_i,k_i^\prime$ such that $\operatorname{sign}(\mathbf{A}) = \operatorname{sign}(\tilde{\mathbf{A}})$.
\end{proposition}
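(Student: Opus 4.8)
The plan is to replicate the proof of Proposition~\ref{firstproposition} at the level of strategy, replacing the circle $S^1$ with the sphere $S^2$. First I would invoke the definition of real sign rank to obtain a witness $\mathbf{Z_A} \in \mathbb{R}^{N\times 3}$ with $\operatorname{sign}(\mathbf{A}) = \operatorname{sign}(\mathbf{Z_A Z_A}^\top)$, and denote its $i$-th row by $\mathbf{w}_i$. Exactly as before, connectedness forces $\mathbf{w}_i \neq \overline{0}$ for every $i$: if some row vanished, the entire $i$-th row of $\mathbf{Z_A Z_A}^\top$ would be zero, hence everywhere non-positive in sign, making node $i$ isolated. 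Since rescaling a row by a positive scalar leaves every inner product sign unchanged, I may normalize $\|\mathbf{w}_i\|_{\ell_2} = 1$, so that each $\mathbf{w}_i$ lies on $S^2$.

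The substantive step is to verify that the $i$-th row of the matrix function in the statement sweeps out all of $S^2$ as $(k_i, k_i')$ varies, for any fixed $x \neq 0$. Writing $\phi = k_i x$ and $\theta = k_i' x$, the row is exactly the spherical-coordinate point $(\cos\phi\sin\theta, \cos\phi\cos\theta, \sin\phi)$, which one checks has unit norm. Because $x$ is fixed and nonzero, $k_i$ and $k_i'$ range over $\mathbb{R}$ independently, so $\phi$ and $\theta$ may be prescribed independently. I would therefore realize a target $(a,b,c) \in S^2$ by first choosing $\phi = k_i x \in [-\tfrac{\pi}{2}, \tfrac{\pi}{2}]$ with $\sin\phi = c$, which forces $\cos\phi = \sqrt{1-c^2} =: r \geq 0$. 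When $r > 0$, the pair $(a/r, b/r)$ lies on $S^1$ since $a^2 + b^2 = 1 - c^2 = r^2$, so the circle argument already used in Proposition~\ref{firstproposition} supplies a $\theta = k_i' x$ with $(\sin\theta, \cos\theta) = (a/r, b/r)$, recovering the first two coordinates after multiplication by $r = \cos\phi$.

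The only point requiring separate attention is the degenerate case $r = 0$, i.e. the poles $c = \pm 1$; there $a = b = 0$ automatically and any choice of $k_i'$ works, since $\cos\phi = 0$ annihilates the first two coordinates. This confirms surjectivity onto $S^2$ and is the sole place where the argument differs in substance from the rank $2$ case. With surjectivity in hand, I would select $(k_i, k_i')$ so that the $i$-th row of $\mathbf{Z}$ equals $\mathbf{w}_i$ for every $i$; then $\tilde{\mathbf{A}} = \mathbf{ZZ}^\top$ and $\mathbf{Z_A Z_A}^\top$ agree entrywise, whence $\operatorname{sign}(\tilde{\mathbf{A}}) = \operatorname{sign}(\mathbf{A})$. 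I expect no real obstacle beyond the bookkeeping of the pole case and the observation that the spherical parametrization is genuinely surjective — the entire difficulty of the rank $2$ proposition was packaged into the one-line surjectivity of $(\cos kx, \sin kx)$ onto $S^1$, and the rank $3$ statement simply stacks this circle fact beneath a latitude sweep.
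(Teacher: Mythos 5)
Your proposal is correct and follows exactly the route the paper intends: the paper's appendix states only that the result follows by ``an analogous geometric argument on $S^2$ \ldots using spherical coordinates instead of polar coordinates,'' and your write-up is precisely that argument, with the surjectivity of $(\cos\phi\sin\theta,\cos\phi\cos\theta,\sin\phi)$ onto $S^2$ (including the pole case $\cos\phi=0$) spelled out in more detail than the paper provides. No gap.
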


\begin{figure}[h]
\vspace{.1in}
\includegraphics[width=0.46\textwidth]{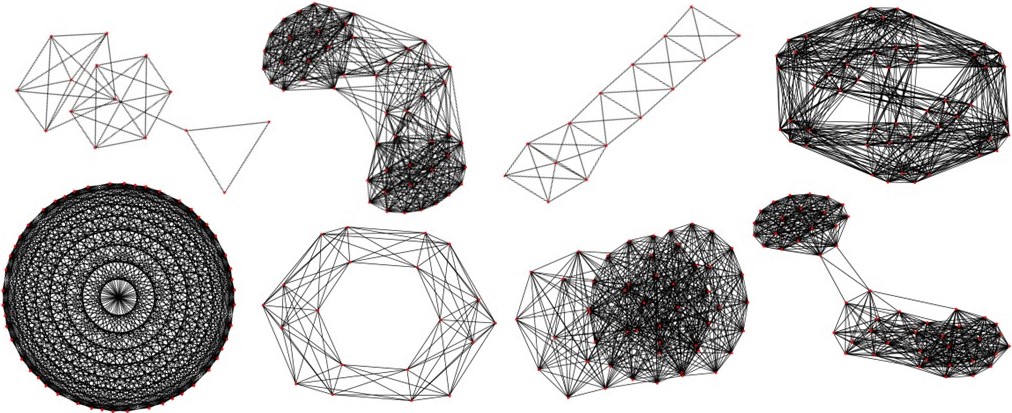}
\vspace{.1in}
\caption{Examples of rank $\le 3$ graphs.}
\label{rank3examples}
\end{figure}

\section{Lower Bounds for Graph Rank}\label{lowerboundinggraphrank}
The discussion in Section~\ref{rank1graphs} leads naturally to a similar question: \textit{Can all graphs be embedded in a rank $2$ representation}? We give an answer to the negative by constructing a counterexample. Due to the mechanical nature of the construction, we have elected to detail full proofs in the appendix while providing only key outlines in the main text. 

\begin{theorem}\label{beginningtheorem}
There exists a graph with adjacency $\mathbf{A}$ such that

\resizebox{0.47\textwidth}{!}{$
\operatorname{sign}(\mathbf{A}) \neq \operatorname{sign}\left(\left[\colvector{\mathbf{z}_1} \colvector{\mathbf{z}_2} \right] \times   \left[
\begin{array}{cc}
\rowvector{\mathbf{z}_3^\top} \\
\rowvector{\mathbf{z}_4^\top} 
\end{array}
\right] \right)  = \operatorname{sign}\left(\mathbf{RC} \right)$}

for $\mathbf{R}, \mathbf{C}^\top \in \mathbb{R}^{N\times 2}$.
\end{theorem}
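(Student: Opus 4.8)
The plan is to reinterpret the identity $\operatorname{sign}(\mathbf{A}) = \operatorname{sign}(\mathbf{R}\mathbf{C})$ geometrically and then exhibit a graph whose sign pattern provably escapes this geometry. Writing $\mathbf{r}_i$ for the rows of $\mathbf{R}$ and $\mathbf{c}_j$ for the columns of $\mathbf{C}$, the entrywise condition reads $\operatorname{sign}(A_{ij}) = \operatorname{sign}\langle \mathbf{r}_i, \mathbf{c}_j\rangle$. After a generic perturbation that removes any vanishing inner products without altering the remaining signs, and after normalizing each $\mathbf{c}_j \neq \overline{0}$ to the unit circle $S^1$ (the sign depends only on direction), every row $i$ selects the columns lying in an open half-plane through the origin; that is, the positive entries of row $i$ are exactly the $\mathbf{c}_j$ contained in one open semicircle.

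First I would extract the combinatorial consequence. Fixing the cyclic order of the indices induced by the angles of the $\mathbf{c}_j$ on $S^1$, and recalling that each row's positive set is an arc cut out by a semicircle, the sign sequence of row $i$ read cyclically, with the diagonal entry $A_{ii}$ left free (self-loops are ignored), exhibits at most two sign changes; equivalently, in this single common cyclic order every vertex's neighborhood is a contiguous arc. This is precisely a circular consecutive-ones condition on the rows of $\mathbf{A}$, and it is a \emph{necessary} condition for $\operatorname{sign}$-rank at most $2$.

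Then I would construct an explicit graph violating it. The target is a graph for which no cyclic ordering of the vertices makes all neighborhoods simultaneously arcs, i.e.\ an over-constrained interleaving pattern in the spirit of the forbidden submatrices obstructing the circular-ones property. To neutralize the diagonal freedom I would realize the gadget bipartitely, so that the rows whose neighborhoods must interleave draw their neighbors from a disjoint vertex block and the free diagonal entries fall outside the contested positions. With the gadget fixed, verification reduces to a finite (mechanical) check that each of the finitely many inequivalent cyclic orders forces some neighborhood to split into two separate arcs, contradicting the necessary condition above; hence the graph has $\operatorname{sign}$-rank at least $3$, which is exactly the asserted $\operatorname{sign}(\mathbf{A}) \neq \operatorname{sign}(\mathbf{R}\mathbf{C})$ for every admissible $\mathbf{R},\mathbf{C}$.

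I expect the main obstacle to be precisely this last step: pinning down a concrete gadget and ruling out \emph{all} cyclic orders while correctly accounting for the diagonal freedom, since one unguarded self-loop or a single overlooked ordering could collapse the argument. As a fallback that already settles the purely existential claim, I would invoke a counting bound: the entries of $\mathbf{R}\mathbf{C}$ are $N^2$ quadratic polynomials in the $4N$ coordinates of $\mathbf{R}$ and $\mathbf{C}$, so by the Warren--Milnor--Thom sign-pattern estimate at most $2^{O(N\log N)}$ distinct sign matrices arise from rank-$2$ products, which is far fewer than the $2^{\binom{N}{2}}$ symmetric sign patterns available to graphs; for $N$ large, some graph is therefore unreachable.
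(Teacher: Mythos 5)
Your primary route and the paper's proof share the same geometric core: read $\operatorname{sign}(A_{ij})=\operatorname{sign}\langle \mathbf{r}_i,\mathbf{c}_j\rangle$ as a separation condition by lines through the origin in $\mathbb{R}^2$, and find a sign pattern whose required separations interleave. But you stop exactly where the paper does the work. The paper's Appendix $3.1$ exhibits the gadget explicitly on seven nodes: column $1$ of $\mathbf{A}$ forces $\mathbf{c}_1\neq\overline{0}$ and pins rows $4$--$7$ to the open half-plane $\langle\cdot,\mathbf{c}_1\rangle>0$; column $2$ splits them two-and-two across the hyperplane of $\mathbf{c}_2$, placing the four $\mathbf{r}_i$ into two adjacent sectors; column $3$ then demands that a single line through the origin separate one point from each sector from the other two, which is impossible for the interleaved pair. (The paper's choice of which pair to mark $+$ is made adaptively against the hypothetical $\mathbf{R},\mathbf{C}$, which is essentially your ``rule out all cyclic orders'' step collapsed to a two-case check.) So the concrete gadget you flag as the main obstacle is not optional -- it is the proof -- and your circular consecutive-ones necessary condition, while correct (modulo care with zero columns and the $0\mapsto -$ convention, which makes negative entries only require $\langle\mathbf{r}_i,\mathbf{c}_j\rangle\le 0$), does not by itself finish the argument; note for instance that the $3$-star satisfies circular consecutive-ones and is in fact realizable as $\operatorname{sign}(\mathbf{RC})$ with $\mathbf{C}\neq\mathbf{R}^\top$, so the obstruction must come from a genuinely interleaved pattern, not merely from star-like neighborhoods.

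Your fallback, however, is a complete and correct proof of the stated existence claim by a genuinely different, non-constructive route: the Warren--Milnor--Thom bound gives at most $2^{O(N\log N)}$ sign patterns for the $N^2$ bilinear forms in the $4N$ entries of $\mathbf{R},\mathbf{C}$, against $2^{\binom{N}{2}}$ symmetric patterns, so for large $N$ almost every graph is unreachable. (In fact your own necessary condition already yields this count without Warren: at most $N!$ cyclic orders times at most $O(N^2)$ arcs per row gives $2^{O(N\log N)}$ circular-arc systems.) What the counting buys is generality and a quantitative ``most graphs fail'' statement; what it loses, and what the paper's construction provides, is an explicit small graph and the geometric picture that motivates the cutoff architecture in Section~\ref{cutoff}.
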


\begin{proof}
Appendix $3.1$. The construction is performed by assigning particular sign combinations to the entries of $\operatorname{sign}(\mathbf{A})$. In this way, a choice of latent representations\footnote{Latent vectors are denoted by $\mathbf{r}_i$ to be consistent with the notation used in the Appendix.} $\mathbf{r}_1$, $\mathbf{r}_2$ for nodes $1, 2$ satisfying the relationship pictorially depicted in Figure~\ref{threekindsofnodes} (a) can be enforced. If node $3$, represented by $\mathbf{r}_3$ (not shown), is to be connected to nodes $4,6$ and disconnected with nodes $5,7$, this implies there exists a line $\ell_3$ passing through the origin $O$ in $\mathbb{R}^2$ which leaves $\mathbf{r}_4$, $\mathbf{r}_6$ on one side of the induced hyperplane while leaving $\mathbf{r}_5$, $\mathbf{r}_7$ on the other. Clearly, this is impossible.  
\end{proof}
\begin{remark}
Instead of learning $\operatorname{sign}(\mathbf{A})$, one may consider learning $-\operatorname{sign}(\mathbf{A})$ as the latter is also a binary encoding. But even in this case, if there exists $\mathbf{R},\mathbf{C}$ such that 

\resizebox{0.47\textwidth}{!}{$
-\operatorname{sign}(\mathbf{A}) = \operatorname{sign}\left(\left[\colvector{\mathbf{z}_1} \colvector{\mathbf{z}_2} \right] \times   \left[
\begin{array}{cc}
\rowvector{\mathbf{z}_3^\top} \\
\rowvector{\mathbf{z}_4^\top} 
\end{array}
\right] \right)  = \operatorname{sign}\left(\mathbf{RC} \right)$,}

then this must imply that 

\resizebox{0.47\textwidth}{!}{$
\operatorname{sign}(\mathbf{A}) = \widetilde{\operatorname{sign}}\left(\left[\colvector{\mathbf{z}_1} \colvector{\mathbf{z}_2} \right] \times   \left[
\begin{array}{cc}
\rowvector{-\mathbf{z}_3^\top} \\
\rowvector{-\mathbf{z}_4^\top} 
\end{array}
\right] \right)  = \operatorname{sign}\left(\mathbf{R}(-\mathbf{C}) \right),$}

where $\widetilde{\operatorname{sign}}: \mathbb{R} \to \{+,-\}$ takes the negative sign on $\mathbb{R}_{<0}$. An analogous argument can be made to demonstrate the impossiblity of such a result for general $\mathbf{A}$.
\end{remark}

\begin{figure}[h!]
  \begin{subfigure}[b]{0.15\textwidth}
    \includegraphics[width=\textwidth]{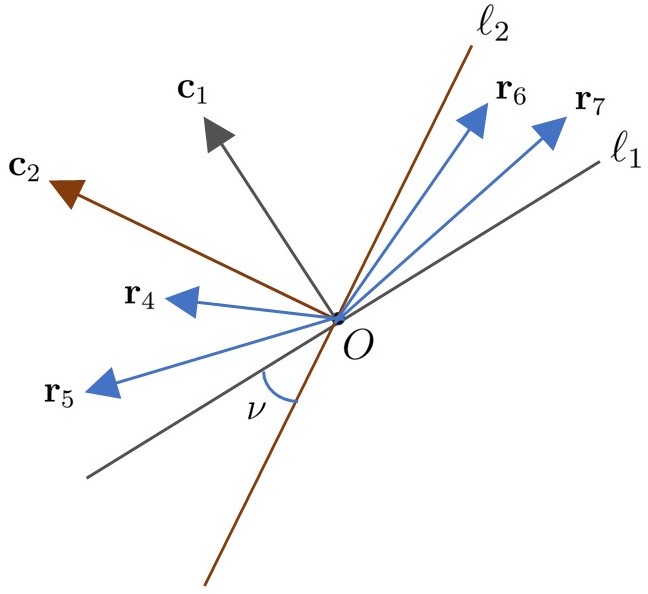}
    \caption{ }
    %\label{no1}
  \end{subfigure}
  %\hfill
    \begin{subfigure}[b]{0.15\textwidth}
    \includegraphics[width=\textwidth]{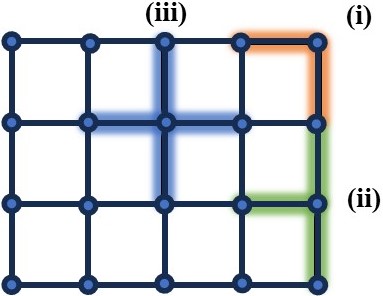}
    \caption{ }
    %\label{no1}
  \end{subfigure}
  %\hfill
  \begin{subfigure}[b]{0.15\textwidth}
    \includegraphics[width=\textwidth]{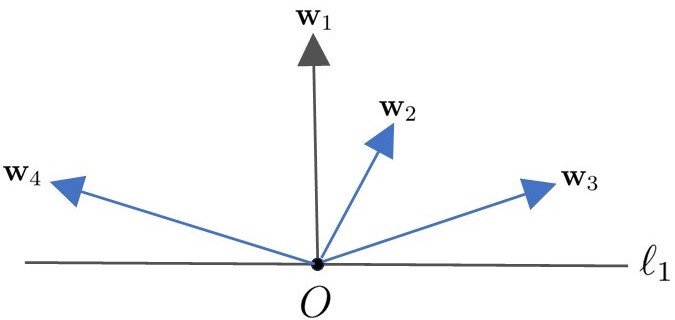}
    \caption{ }
    %\label{no1}
  \end{subfigure}
  \hfill
  \caption{(a) Intermediary step in the construction of a class of graphs which do not admit a latent representation of rank $2$. (b) The three node types appearing in a planar grid graph. (c) $\mathbf{w}_1$ corresponds to the center node in (b)-\textbf{(ii)}, whereas $\mathbf{w}_i$ for $i \in [2,4]$ depicts the neighboring nodes. For each of these nodes to be disconnected, any two distinct $\mathbf{w}_i,\mathbf{w}_j$ vectors for $i,j \in [2,4]$ must form an obtuse angle. }
  \label{threekindsofnodes}
\end{figure}

An alternative proof strategy is formed by studying the permissible low-rank representations of induced subgraph structures. It is clear that the rank of a graph must be no smaller than the rank of a node-induced subgraph.
\begin{theorem}\label{gridgrapharenotrank2}
Any planar (two dimensional) square grid graph with more than $4$ nodes cannot be rank $2$.    
\end{theorem}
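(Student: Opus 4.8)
The plan is to localize the argument to a single well-chosen vertex and reduce the global statement to the impossibility of placing three vectors in $\mathbb{R}^2$ under incompatible angular constraints. This is exactly the configuration suggested by Figure~\ref{threekindsofnodes}(b)--(c), and the proof mirrors the geometric obstruction used in Theorem~\ref{beginningtheorem}.

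First I would establish the combinatorial input: every two-dimensional square grid with more than $4$ nodes contains a vertex $v$ of degree $3$ whose three neighbours are pairwise non-adjacent. Since the graph is genuinely two-dimensional we have $m,n\ge 2$, and $mn>4$ forces at least one side to have length $\ge 3$; taking $v$ to be the boundary non-corner node at grid position $(1,2)$, its neighbours are $(1,1)$, $(1,3)$, $(2,2)$, and inspecting grid coordinates shows no two of these are adjacent. (Equivalently one may take any interior degree-$4$ node and discard one neighbour, since the four axis-neighbours of an interior node are pairwise non-adjacent.) This is the only place the hypothesis enters: the $2\times 2$ grid has only corner nodes of degree $2$ and admits no such vertex.

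Next I argue by contradiction. Suppose the grid is rank $2$, so there exist row vectors $\mathbf{w}_1,\dots,\mathbf{w}_N \in \mathbb{R}^2$ with $\operatorname{sign}(\langle \mathbf{w}_i,\mathbf{w}_j\rangle)$ matching the off-diagonal adjacency. Writing $\mathbf{w}_1$ for $v$ and $\mathbf{w}_2,\mathbf{w}_3,\mathbf{w}_4$ for its three neighbours, the sign convention forces $\langle \mathbf{w}_1,\mathbf{w}_i\rangle>0$ for $i\in\{2,3,4\}$ (each edge is present) and $\langle \mathbf{w}_i,\mathbf{w}_j\rangle\le 0$ for distinct $i,j\in\{2,3,4\}$ (the neighbours are pairwise disconnected). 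In particular each of these vectors is nonzero, since a zero vector would yield inner product $0$ and hence force a disconnection.

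The contradiction is then purely geometric, and I expect this to be the conceptual crux rather than a computational one. The first family of inequalities places $\mathbf{w}_2,\mathbf{w}_3,\mathbf{w}_4$ inside the open half-plane $H=\{x\in\mathbb{R}^2:\langle \mathbf{w}_1,x\rangle>0\}$, an angular sector of width exactly $180^\circ$. The second family says each pair among them subtends an angle of at least $90^\circ$. Ordering the three vectors by polar angle within $H$ and summing the two consecutive gaps shows the extreme pair must subtend at least $180^\circ$, which is impossible for two vectors confined to an open half-plane. Hence no rank-$2$ representation can reproduce the sign pattern at $v$, so the grid cannot be rank $2$. The one point to handle with care is verifying that, because all three neighbour vectors share the common open half-plane $H$, each pairwise angle genuinely equals the difference of polar angles (so the two gaps add), which is what rules out the otherwise-admissible symmetric configuration at $0^\circ,120^\circ,240^\circ$.
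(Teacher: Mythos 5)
Your proposal is correct and follows essentially the same route as the paper's proof: both reduce to the induced $3$-star subgraph at a degree-$3$ vertex and then derive a contradiction from the fact that three nonzero vectors confined to the open half-plane $\{x : \langle \mathbf{w}_1, x\rangle > 0\}$ cannot be pairwise separated by angles of at least $\pi/2$ (the paper phrases this as a pigeonhole argument, you make it explicit via polar-angle ordering). Your write-up is somewhat more careful on the two points the paper leaves implicit, namely exhibiting the induced $3$-star combinatorially and verifying that pairwise angles add along the ordering within the half-plane.
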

\begin{proof}
Appendix $3.2$. It suffices to show that there exists an induced subgraph of rank greater than $2$. We target the $3$-star graph shown in Figure~\ref{threekindsofnodes} (b)-\textbf{(ii)}, where the central node is represented by $\mathbf{w}_1$ in (c). Afterwards, the pigeonhole principle gives that any three vectors which belong in the interior of the hyperplane induced by $\ell_1$ in the direction of $\mathbf{w}_1$ must form at least one pairwise acute angle (thereby admitting a positive value under the inner product). This induces at least $1$ connection between the non-central nodes, violating a core property of star graphs that non-central nodes are pairwise disconnected. 
\end{proof}

The proof of Theorem~\ref{gridgrapharenotrank2} is particularly insightful in its use of the pigeonhole principle. Any two vectors in a space isomorphic to $\mathbb{R}^{N}$ has a natural notion of an angle induced by the inner product, and two nodes are connected if and only if their vector representations form an acute angle. Due to the restriction that the (minimum) angle between any two vectors must lie between $[0,\pi/2)$ for the corresponding nodes to be connected, we quickly consume the available degrees of freedom for latent vectors to accurately reconstruct the original graph. The ideas behind the proofs of Theorems~\ref{beginningtheorem} and~\ref{gridgrapharenotrank2} can be collected into the following theorem, which subsumes the previous results. 
\begin{figure*}[h!]
    \begin{subfigure}[b]{1\textwidth}
    \includegraphics[width=\textwidth]{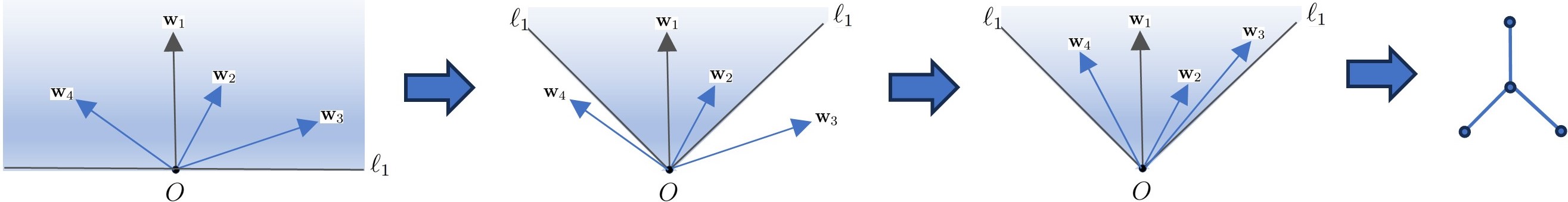}
    \caption{Utilizing cutoffs to represent the $3$-star graph in two dimensions.}
    %\label{thistwoexample}
  \end{subfigure}
  \hfill
  \begin{subfigure}[b]{0.24\textwidth}
    \includegraphics[width=\textwidth]{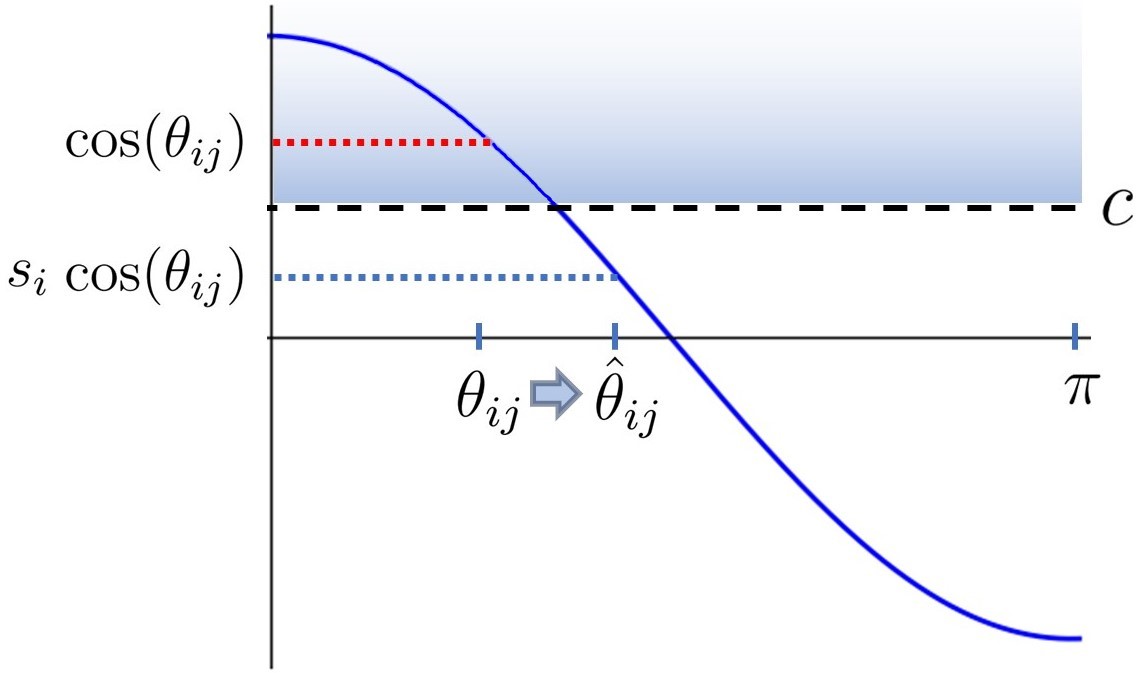}
    \caption{$0<s_i< 1$}
    %\label{thisexample}
  \end{subfigure}
  \hfill
    \begin{subfigure}[b]{0.24\textwidth}
    \includegraphics[width=\textwidth]{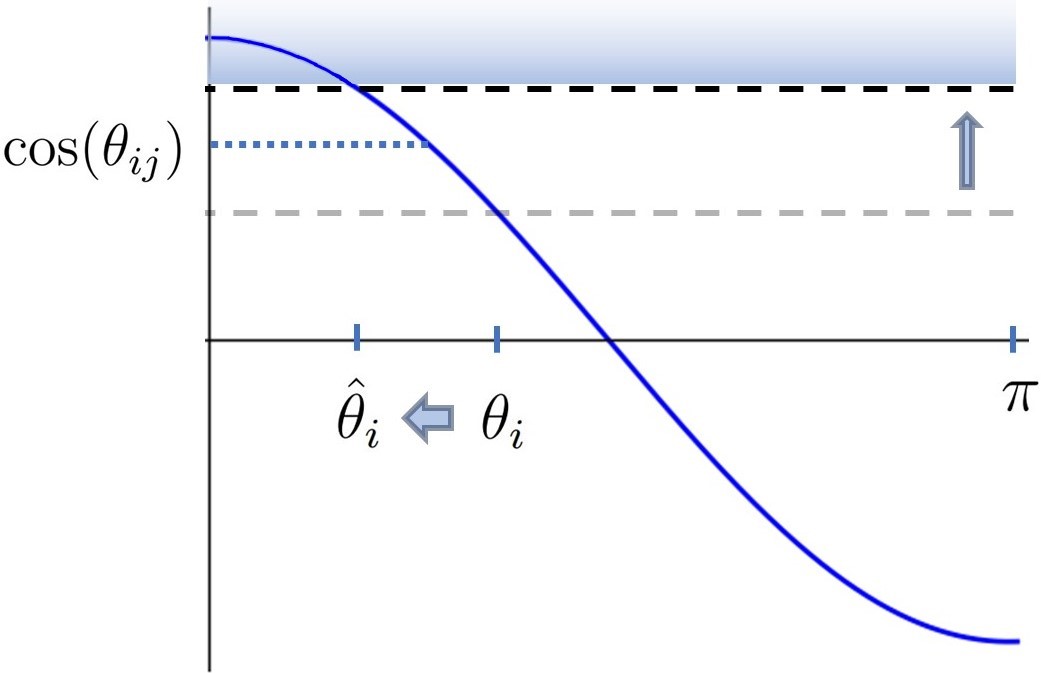}
    \caption{$0<s_i< 1$}
    %\label{thisexample}
  \end{subfigure}
  \hfill
  \begin{subfigure}[b]{0.24\textwidth}
    \includegraphics[width=\textwidth]{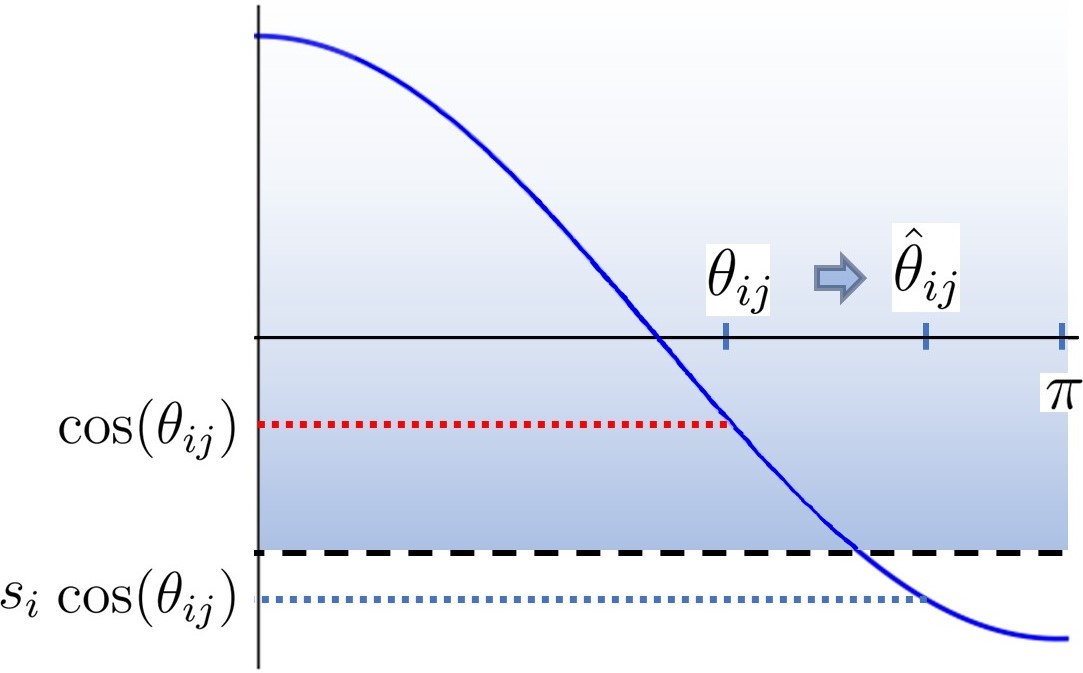}
    \caption{$1<s_i$}
    %\label{thistwoexample}
  \end{subfigure}
  \hfill
  \begin{subfigure}[b]{0.24\textwidth}
    \includegraphics[width=\textwidth]{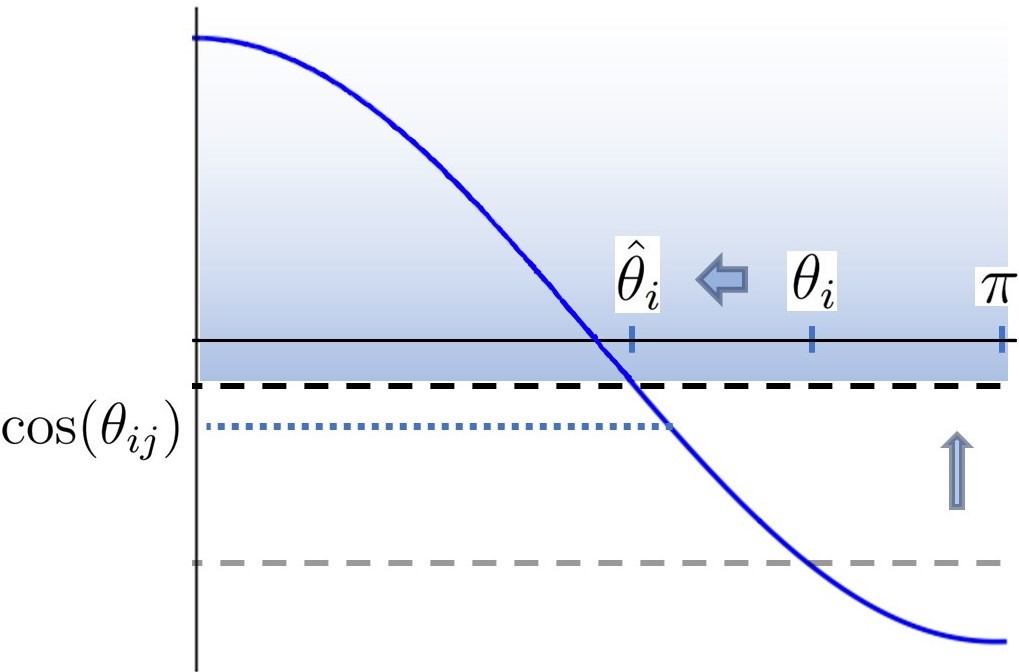}
    \caption{$1<s_i$}
    %\label{thistwoexample}
  \end{subfigure}
  \hfill
  \caption{In (b-e), the connection of node $i$ with node $j$ is severed due to $s_i$ where a cutoff value $c$ is depicted as a solid dashed line. (b-c) shows the action of small $s_i$ on $\theta_{ij}$, $\theta_i$. $\theta_{ij}$ is pushed to $\pi/2$ while $\theta_i$ is pulled toward the origin. (d-e) shows a similar phenomenon, but for larger $s_i$. 
  }
  %\label{onelabel}
  \label{gentle}
\end{figure*}
\begin{theorem}\label{realstargraph}
The rank of an $N-1$ star graph is lower bounded by $(N+1)/2$.     
\end{theorem}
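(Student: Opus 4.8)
The plan is to convert the star's sign pattern into a separation condition, use the central node to discard one dimension, and finish with a classical cardinality bound on mutually obtuse vectors. Write $f$ for the real sign rank and let $\mathbf{w}_1,\dots,\mathbf{w}_N\in\mathbb{R}^f$ be latent rows realizing the required signs, with $\mathbf{w}_1$ the center and $\mathbf{w}_2,\dots,\mathbf{w}_N$ the $N-1$ leaves. Adjacency of the center to every leaf forces $\langle\mathbf{w}_1,\mathbf{w}_i\rangle>0$ for all $i\ge 2$ (in particular $\mathbf{w}_1\neq 0$), and pairwise non-adjacency of the leaves forces $\langle\mathbf{w}_i,\mathbf{w}_j\rangle\le 0$ for all $2\le i<j\le N$. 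Geometrically, the first family of inequalities says that every leaf makes an acute angle with $\mathbf{w}_1$, hence lies strictly inside the open half-space bounded by the hyperplane $\mathbf{w}_1^{\perp}$.

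First I would use this half-space to drop into $\mathbf{w}_1^{\perp}\cong\mathbb{R}^{f-1}$. Setting $\hat{\mathbf{w}}_1=\mathbf{w}_1/\lVert\mathbf{w}_1\rVert$ and decomposing each leaf as $\mathbf{w}_i=\alpha_i\hat{\mathbf{w}}_1+\mathbf{p}_i$ with $\alpha_i=\langle\mathbf{w}_i,\hat{\mathbf{w}}_1\rangle>0$ and $\mathbf{p}_i\perp\mathbf{w}_1$, the leaf--leaf constraint becomes $\langle\mathbf{p}_i,\mathbf{p}_j\rangle=\langle\mathbf{w}_i,\mathbf{w}_j\rangle-\alpha_i\alpha_j\le-\alpha_i\alpha_j<0$. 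Thus the $N-1$ projections $\mathbf{p}_2,\dots,\mathbf{p}_N$ are nonzero and pairwise obtuse inside an $(f-1)$-dimensional space. This is exactly the reduction already performed in Theorem~\ref{gridgrapharenotrank2}, where $f=2$ collapses the projections onto a line and the elementary ``at most one positive and one negative scalar'' pigeonhole rules out three leaves.

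The remaining step is to supply the higher-dimensional analogue of that line pigeonhole. I would invoke the classical cardinality bound that $\mathbb{R}^{d}$ contains at most $2d$ nonzero vectors with pairwise non-positive inner products, sharply realized by $\pm\mathbf{e}_1,\dots,\pm\mathbf{e}_d$. Applying it with $d=f-1$ to $\mathbf{p}_2,\dots,\mathbf{p}_N$ yields $N-1\le 2(f-1)$, which rearranges to $f\ge(N+1)/2$. I expect the one genuine obstacle to be importing this counting lemma cleanly rather than the geometry; for a self-contained treatment I would observe that the $\mathbf{p}_i$ are in fact \emph{strictly} obtuse, so the short linear-independence argument bounding mutually obtuse vectors by $d+1$ already applies in dimension $f-1$ and even delivers the stronger estimate $f\ge N-1$, from which the stated $(N+1)/2$ bound follows a fortiori for every star with at least two leaves.
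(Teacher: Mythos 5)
Your argument is correct and arrives at the paper's bound by a genuinely different, and in fact stronger, route. The paper stays in the ambient space $\mathbb{R}^f$: it normalizes the center to $\mathbf{e}_1$, exhibits the antipodal family $E=\{\pm\mathbf{e}_i\}_{i=2}^{f}$ of $2f-2$ pairwise non-positively correlated vectors orthogonal to the center, and argues that no additional vector can be acute with $\mathbf{e}_1$ while non-acute with all of $E$, reading off the count from $N-1=2f-2$. You instead project the leaves onto $\mathbf{w}_1^{\perp}$ and reduce to a counting bound for pairwise obtuse systems in $\mathbb{R}^{f-1}$. This buys two things. First, your version is a deduction valid for an \emph{arbitrary} leaf configuration, whereas the paper's reasoning is anchored to one extremal configuration whose members (being orthogonal to the center) are not themselves admissible leaves; the projection step is precisely what turns that heuristic into a watertight argument. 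Second, as you note, the projections $\mathbf{p}_i$ inherit \emph{strictly} negative pairwise inner products, so the linear-independence bound of $d+1$ mutually obtuse vectors applies with $d=f-1$ and yields $f\ge N-1$, which strictly improves the paper's $(N+1)/2$ for every $N\ge 4$ (already the $4$-star needs four real dimensions, not the three the stated bound guarantees); the $2d$ lemma you cite for the non-strict case is classical and is only needed if one insists on recovering exactly the paper's constant. The sole caveat is the degenerate single-edge case $N=2$, where the graph is plainly rank $1$; your restriction to stars with at least two leaves disposes of it, and the same restriction is implicit in the paper.
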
   
\begin{proof}
Let $\mathbf{w}_1 \in \mathbb{R}^f$ represent the central node in the latent space. As  $\mathbf{w}_1\neq \overline{0}$, set $\mathbf{w}_1 = \mathbf{e}_1$ without loss of generality for $\{\mathbf{e}_i\}_{1:f}$ the standard basis. Then, $E = \{\pm\mathbf{e}_i\}_{2:f}$ with $2f-2$ elements is the maximal set satisfying perpendicularity with $\mathbf{e}_1$. Furthermore, there exists no latent vector $\mathbf{w} \neq \mathbf{e}_1$ satisfying $\langle \mathbf{w},\mathbf{e}_1\rangle > 0$ and $\langle \mathbf{w},\mathbf{e}\rangle \le 0$ for $\forall \mathbf{e} \in E$. This implies that the $(2f-2)$-star graph cannot be represented in $\mathbb{R}^f$, and rearranging $N-1 = 2f-2$ gives the bound. 
\end{proof}
In contrast, Theorem~\ref{complexstargraph} shows that the complex field is far more economical for embedding star graphs.

\begin{theorem}\label{complexstargraph}
An $(N-1)$-star graph is complex rank $1$ for any $N$.    
\end{theorem}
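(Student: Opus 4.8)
The plan is to produce a single complex latent vector $\mathbf{z} = (z_1,\dots,z_N)^\top \in \mathbb{C}^N$ (so $f=1$) whose induced sign pattern matches that of the star graph, with node $1$ the center and nodes $2,\dots,N$ the leaves. The decisive structural observation is that the decoder uses the bilinear form $\mathbf{Z}\mathbf{Z}^\top$ rather than a conjugate form, so the $(i,j)$ entry is $\mathfrak{Re}(z_i z_j)$ and not $\mathfrak{Re}(z_i \overline{z_j})$. Writing each coordinate in polar form $z_k = r_k e^{i\theta_k}$ with $r_k > 0$, this entry equals $r_i r_j \cos(\theta_i + \theta_j)$. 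Since every magnitude is positive and the diagonal is ignored, the entire sign pattern is governed by the angle \emph{sums} $\theta_i + \theta_j$; this is precisely the feature that breaks the pigeonhole obstruction of Theorem~\ref{realstargraph}, where the real inner product forces the relevant quantity to be $\cos$ of an angle \emph{difference}.

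First I would translate the star adjacency into angular constraints. The center--leaf connections require $\cos(\theta_1 + \theta_j) > 0$ for every $j \ge 2$, while the pairwise disconnection of the leaves requires $\cos(\theta_i + \theta_j) \le 0$ for all distinct $i,j \ge 2$. Next I would exhibit an explicit solution: normalize all radii to $r_k = 1$, set $\theta_1 = 0$ for the center, and place every leaf at the common angle $\theta_j = \pi/2 - \epsilon$ for a fixed $\epsilon \in (0,\pi/4)$. Then each center--leaf sum is $\theta_1 + \theta_j = \pi/2 - \epsilon$, giving $\cos(\pi/2 - \epsilon) = \sin\epsilon > 0$, while each leaf--leaf sum is $\theta_i + \theta_j = \pi - 2\epsilon$, giving $\cos(\pi - 2\epsilon) = -\cos(2\epsilon) < 0$ since $2\epsilon < \pi/2$. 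This verifies all required signs, so $\mathbf{z} = (1, e^{i(\pi/2 - \epsilon)}, \dots, e^{i(\pi/2 - \epsilon)})^\top$ realizes the star pattern at complex rank $1$ independently of $N$.

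The computation itself is routine; the only genuinely delicate point is the strictness at the center--leaf edges. Had one placed the leaves exactly at $\theta_j = \pi/2$ (i.e.\ $z_j = i$), the center--leaf entries would collapse to $\mathfrak{Re}(i) = 0$, which carries the \emph{negative} sign under our convention and would erroneously disconnect the star. The small tilt $\epsilon > 0$ is exactly what keeps $\sin\epsilon$ strictly positive while leaving $\pi - 2\epsilon$ in the obtuse regime, and this is the step I would flag as the one to handle carefully. Conceptually, the moral to emphasize is that all $N-1$ leaves may be piled onto a single ray near the imaginary axis: their angle sums are all near $\pi$ (obtuse) simultaneously, something impossible for angle differences, which is why complexification drives the rank down from the linear-in-$N$ lower bound of Theorem~\ref{realstargraph} all the way to $1$.
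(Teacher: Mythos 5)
Your proof is correct and follows essentially the same route as the paper's: both exploit that the non-conjugate bilinear decoder turns $\mathfrak{Re}(z_i z_j)$ into $r_i r_j\cos(\theta_i+\theta_j)$ (equivalently, an inner product against a reflected vector in $\mathbb{R}^2$) and then place the center on the real axis and all leaves at one common angle. The paper's construction is given only via a figure with the leaves at $\pi/4$ from the horizontal (so the leaf--leaf entries vanish and rely on the convention $\operatorname{sign}(0)=-$), whereas your $\epsilon$-tilt keeps those entries strictly negative; this is a cosmetic difference only.
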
   
A proof is given in Appendix $3.3$ by construction. Theorem~\ref{complexstargraph} confirms that moving to the complex latent space allows for alternative graph structures to be faithfully (i.e. $||\mathbf{A}-\hat{\mathbf{A}}||_F = 0$ for $\hat{\mathbf{A}}$ the computed adjacency) represented by the low-dimensional encoding. Finally, we note that Theorem~\ref{realstargraph} implies a lower bound on the minimal number of latent dimensions required to reconstruct an arbitrary graph.
\begin{corollary}
For graph $G$, let $H$ be the largest induced $(N_H-1)$-star subgraph. Then any faithful real latent encoding of $G$ must possess at least $\lceil(N_H+1)/2\rceil$ dimensions. 
\end{corollary}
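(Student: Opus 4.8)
The plan is to reduce the statement to Theorem~\ref{realstargraph} via the monotonicity of the real sign rank under passage to induced subgraphs, the fact asserted informally in the paragraph preceding Theorem~\ref{gridgrapharenotrank2}. So the first step I would carry out is to make this monotonicity precise. Suppose $\mathbf{Z} \in \mathbb{R}^{N \times f}$ is any faithful encoding of $G$, meaning $\operatorname{sign}(\mathbf{A}) = \operatorname{sign}(\mathbf{Z}\mathbf{Z}^\top)$ with $f$ the number of latent dimensions. Let $S \subseteq \{1,\dots,N\}$ be the vertex set of $H$, and let $\mathbf{Z}_S$ denote the $N_H \times f$ row-submatrix of $\mathbf{Z}$ indexed by $S$. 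The key observation is that the Gram matrix of a subset of rows is exactly the corresponding principal submatrix, i.e. $\mathbf{Z}_S \mathbf{Z}_S^\top = (\mathbf{Z}\mathbf{Z}^\top)[S,S]$. Since $H$ is an \emph{induced} subgraph, its adjacency equals the principal submatrix $\mathbf{A}[S,S]$, so taking signs entrywise yields $\operatorname{sign}(\mathbf{A}[S,S]) = \operatorname{sign}(\mathbf{Z}_S \mathbf{Z}_S^\top)$. Thus $\mathbf{Z}_S$ is a faithful real encoding of $H$ in $f$ dimensions, which forces the sign rank of $H$ to satisfy $\operatorname{rank}(H) \le f$.

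Next I would invoke Theorem~\ref{realstargraph} directly. Because $H$ is by hypothesis an $(N_H-1)$-star graph, the theorem supplies $\operatorname{rank}(H) \ge (N_H+1)/2$. Chaining the two inequalities shows that every faithful real encoding of $G$ uses $f \ge \operatorname{rank}(H) \ge (N_H+1)/2$ dimensions. As the final step, since the feature dimension $f$ is necessarily a positive integer, the bound sharpens to $f \ge \lceil (N_H+1)/2 \rceil$, which is exactly the claim.

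The argument is essentially bookkeeping once the reduction is in place, so I do not anticipate a genuine obstacle. The only point requiring care is the matrix identity $\mathbf{Z}_S \mathbf{Z}_S^\top = (\mathbf{Z}\mathbf{Z}^\top)[S,S]$ combined with the observation that it is the \emph{induced} subgraph, and hence the exact principal submatrix of $\mathbf{A}$, whose sign pattern must be reproduced; were $H$ merely a subgraph rather than an induced one, off-diagonal entries could differ and the restriction would fail. I would also note that the maximality of $H$ plays no role in the logic beyond optimizing the constant: any induced star subgraph yields a valid lower bound by the identical reasoning, and choosing the largest one simply makes $N_H$, and thus $\lceil (N_H+1)/2\rceil$, as large as possible.
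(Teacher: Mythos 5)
Your proposal is correct and follows exactly the route the paper intends: monotonicity of the real sign rank under passage to induced subgraphs (which the paper asserts just before Theorem~\ref{gridgrapharenotrank2}, and which you rightly make precise via the principal-submatrix identity $\mathbf{Z}_S\mathbf{Z}_S^\top = (\mathbf{Z}\mathbf{Z}^\top)[S,S]$), combined with the star-graph lower bound of Theorem~\ref{realstargraph} and integrality of $f$. No gaps.
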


\section{Decoder Augmentation with Cutoffs}\label{cutoff}
In Section~\ref{lowerboundinggraphrank}, we saw that the inner product decoder imputes a connection whenever an acute angle is formed between the latent vectors, quickly depleting the available degrees of freedom. To remedy this issue, we motivate a strategy using \textit{cutoffs} to relax the angle constraint $[0,\pi/2)$ that indicates a connection between nodes, which assigns a different angle connection range $[0,\theta_i)$ to every node $i$. For simplicity, we work in the real space. To enhance clarity, we begin with a visual explanation.

In Figure \ref{gentle} (a), the leftmost picture depicts the region of connectivity for node $1$, where a connection to all nodes with latent vector representations within the region is established. In the vanilla decoder $\operatorname{sign}(\mathbf{ZZ}^\top)$, the region of connectivity is a hyperplane induced by the line passing through the origin which is perpendicular to the latent vector. This indicates that nodes $2,3$ and $2,4$ are also connected as the angle between their latent embeddings is acute. Introducing a cutoff then morphs the connective region from a hyperplane to a cone due to the action of $s_i$ (see following discussion) on the connective angle $\theta_1$, severing the connection with node $1$ and nodes $3,4$. In the third picture, the GNN learns to modify the latent embeddings and cutoffs so that the connection to node $1$ is reestablished, as well as sharpening the pairwise connective angles between $\mathbf{w}_i$,$\mathbf{w}_j$ for $i,j \in [2,4]$ causing nodes $2,3,4$ remain disconnected. In the rightmost picture, the latent embedding is decoded to represent the $3$-star graph, which is not expressible in two dimensions without cutoffs. The following discussion elucidates this procedure in finer detail.

Consider an $f$-dimensional latent embedding

\resizebox{0.47\textwidth}{!}{$
\operatorname{sign}(\mathbf{A}) =  \operatorname{sign}\left(\left[
\begin{array}{cc}
\rowvector{\mathbf{w}_1^\top} \\
\rowvector{\mathbf{w}_2^\top} \\ 
\vdots \\ 
\rowvector{\mathbf{w}_N^\top}
\end{array}
\right] \times \left[\colvector{\mathbf{w}_1} \colvector{\mathbf{w}_2} \dots \colvector{\mathbf{w}_N} \right] \right)
$}

where $\mathbf{w}_i$ are the rows of the latent matrix $\mathbf{Z}$. Multiplying each $\mathbf{w}_i$ by a positive scalar does not impact the sign of the inner product $\langle \mathbf{w}_i,\mathbf{w}_j\rangle = ||\mathbf{w}_i||_{\ell_2}||\mathbf{w}_j||_{\ell_2} \cos(\theta_{ij})$, thus we normalize $\mathbf{w}_i$ to unit $\ell_2$-length. In what follows, we will multiply $\mathbf{w}_i$ by a small positive scalar $s_i$, but due to constraining $||\mathbf{w}_i||_{\ell_2} = 1$, the action of the scalar will be interpreted to have been passed onto the angle $\theta_{ij}$, $\hat{\theta}_{ij} = s_i(\theta_{ij})$. In other words, $\hat{\theta}_{ij}$ will be computed from $\langle s_i\mathbf{w}_i,\mathbf{w}_j\rangle = \cos(\hat{\theta}_{ij})$. For clarity, we leave all other $\mathbf{w}_j$ untouched for $j \neq i$ and work singularly with $\mathbf{w}_i$. 

Under this interpretation, the action of $s_{i}$ with small magnitude pushes $\theta_{ij} \in [0,\pi/2) \bigcup (\pi/2,\pi]$ toward $\hat{\theta}_{ij}\approx \pi/2$, while large $s_i$ pulls $\theta_{ij}$ toward $\hat{\theta}_{ij}\approx0,\pi$ depending on whether $\theta_{ij}$ was on the left or right of $\pi/2$, respectively (Figure \ref{gentle} (b),(d)). A cutoff $c$ enforces that nodes $i,j$ are connected if and only if $\langle \mathbf{w}_i,\mathbf{w}_j\rangle > c$ (see equation~\eqref{manualcutoffdecoder}). This can be interpreted as reducing the angle of connectivity assigned to node $i$ from $\theta_i = \pi/2$ to $0< \theta_i < \pi/2$ (Figure \ref{gentle} (a)). The action of $s_i$ with small magnitude then pulls $\theta_i$ closer to the origin to form $\hat{\theta}_i \approx 0$ (Figure \ref{gentle} (c)), while large $s_i$ pushes $\theta_i$ away to $\hat{\theta}_i \approx \pi/2$ (Figure \ref{gentle} (e)).

Now, acting on all other $\mathbf{w}_j$ simultaneously with the $s_j$'s motivates the decoder
\begin{equation}\label{manualcutoffdecoder}
\operatorname{sign}(\mathbf{A}) = \operatorname{sign}(\mathbf{ZZ}^\top - c\mathbf{1}\mathbf{1}^\top),    
\end{equation}
where $\mathbf{1} \in \mathbb{R}^{N}$ is the standard $1$-vector. If the $\mathbf{w}_i$ are normalized to unit length, this decoder assigns a singular angle of connectivity $\theta_i$ for $\mathbf{w}_i$. But as this normalization need not hold in practice, the value of $\theta_i$ \textit{varies} depending on which $j$ is being considered for each $\mathbf{w}_i,\mathbf{w}_j$ pair\footnote{One way to see this is to assume $||\mathbf{w}_1||_{\ell_2} = 1$, $||\mathbf{w}_j||_{\ell_2} \neq 1$ for $j \neq 1$. The lengths of the $\mathbf{w}_j$ act on $\theta_1$ accordingly.}. Enriching the degree of freedom in which the decoder is allowed to vary the connective angle $\theta_i$ for each $\mathbf{w}_i,\mathbf{w}_j$ pair, we have the decoding architecture 
\begin{equation}\label{notfinaldecoder}
\operatorname{sign}(\mathbf{A}) = \operatorname{sign}(\mathbf{Z}_1\mathbf{Z}_1^\top - \mathbf{Z}_2\mathbf{CZ}_2^\top),    
\end{equation}
where $\mathbf{Z}_i \in \mathbb{R}^{N\times f_i}$ and $\mathbf{C}\in \mathbb{R}^{f_2\times f_2}$ is diagonal with $C_{ii}=c_i$.

\subsection{Complex latent embeddings}
We remark that decoder~\eqref{notfinaldecoder} is reminiscent of decoding in the complex field, in that if $\mathbf{Z} = \mathbf{Z}_r + i \mathbf{Z}_i,$ then $\mathfrak{Re}(\mathbf{ZZ}^\top) = \mathbf{Z}_r\mathbf{Z}_r^\top - \mathbf{Z}_i\mathbf{Z}_i^\top$. Thus, the decoding scheme derived here can also be seen as a generalization of a decoder acting on complex latent representations. The imaginary component of a complex embedding is responsible for discovering the optimal cutoff tolerance for each $i,j$ node pair, which is learnable as a parameter.

As a practical matter, working in the complex field is natural during the decoding phase. Any symmetric real adjacency matrix has an orthogonal eigendecomposition $\mathbf{A} = \mathbf{U}\boldsymbol{\Lambda}\mathbf{U}^\top$, and there may exist no real latent embedding such that $\mathbf{A} \approx \mathbf{ZZ}^\top$ in contrast to complex latent embeddings which are immediate from the eigendecomposition.

\begin{figure*}[h!]
    \begin{subfigure}[b]{1\textwidth}
    \includegraphics[width=\textwidth]{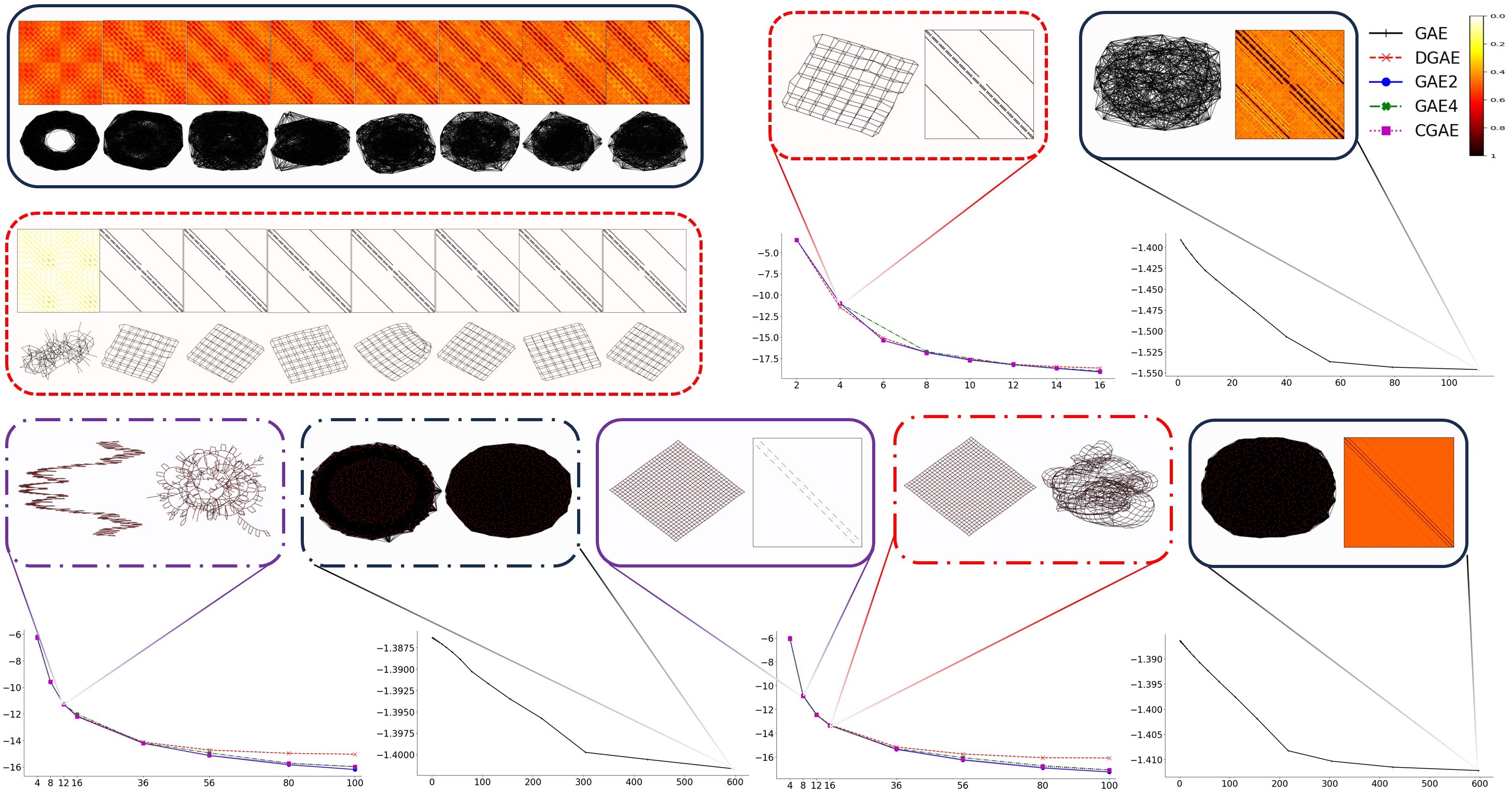}
    %\caption{Utilizing cutoffs to represent the $3$-star graph in two dimensions.}
    %\label{thistwoexample}
  \end{subfigure}
  \hfill
  \caption{Introducing cutoffs offer drastic improvements to the representation capacity of the inner product (GAE) architecture. All numerical plots display the log-normalized distance against $h_2$, and dotted lines indicate a probabilistic decoding whereas solid lines denote a sign decoding. Dotted-dash lines indicate both a sign (left) and probabilistic (right) reconstruction. $h_2$ is varied $2$ to $16$, equispaced, in upper left where GAE (top) and DGAE (bottom) are compared. Training is done for $30000$ epochs on NVIDIA GeForce RTX 3060 GPU with learning rate $10^{-4}$ and weak regularization $\lambda = 10^{-7}$. Boundary color coding is matched with legend architecture colors given in the upper right, and all colormaps visualize computed adjacencies constrained by latent rank. }
  \label{experiment}
\end{figure*}

\subsection{Generalizations}\label{generalizations}

We can enhance the expressivity of the decoder further by noting that~\eqref{notfinaldecoder} is subsumed by
\begin{equation}\label{diagdecoder}
\operatorname{sign}(\mathbf{A}) = \operatorname{sign}(\mathbf{Z}_1\mathbf{C}_1\mathbf{Z}_1^\top - \mathbf{Z}_2\mathbf{C}_2\mathbf{Z}_2^\top), %= \mathbf{Z \mathbf{C} \mathbf{Z}^\top}
\end{equation}
where the weights of the decoder, $\mathbf{C}_i$, are diagonal. The RHS is equivalent to $\operatorname{sign}(\mathbf{Z} \mathbf{C} \mathbf{Z}^\top)$ for $\mathbf{Z} = [\mathbf{Z}_1,\mathbf{Z}_2]$ and $\mathbf{C} = \operatorname{diag}(\mathbf{C}_1,-\mathbf{C}_2)$. This formulation allows the algorithm to learn a latent embedding and cutoff dimensions $f_1, f_2$ given their sum $f_1+ f_2$. Note that this architecture is very similar in structure to the eigendecomposition. Furthermore, the matrix rank of $\mathbf{Z} \mathbf{C} \mathbf{Z}^\top$ is upper bounded by $f_1 + f_2 \ll N$.

Alternatively, consider the decoder
\begin{equation}\label{mdecoder}
\operatorname{sign}\left(\sum_{n = 0}^k (-1)^n \mathbf{C}_{4n}\mathbf{Z}_n\mathbf{C}_{4n+1}\mathbf{C}_{4n+2}\mathbf{Z}_n^\top \mathbf{C}_{4n+3}\right)
\end{equation}
for diagonal, (upper or lower) bidiagonal, or tridiagonal $\mathbf{C}_i$. The intuition is to act on the latent encoding $\mathbf{Z}_j$ from the left and the right by highly sparse matrix multiplication $\mathbf{C}_i$ to allow $\mathbf{Z}_j$ to uphold maximal flexiblity while maintaining economical cost. This decoder can be interpreted as accumulating the messages from $k+1$ linear layers with weights $\mathbf{C}_i$ acting as a `pivot' to optimize the latent representations $\mathbf{C}_i \mathbf{Z}_j \mathbf{C}_k$. 

Activating $\mathbf{C}_{4n+1}$ or $\mathbf{C}_{4n+2}$ increases the number of learnable parameters by $\mathcal{O}(f_i)$, whereas activating $\mathbf{C}_{4n}$ or $\mathbf{C}_{4n+3}$ costs $\mathcal{O}(N)$. In particular, activating all entries of the former two $\mathbf{C}_i$ leaves minimal impact on the number of parameters while enhancing the expressive capacity of the neural net. Thus while making only modest changes to the general architecture, these decoding strategies should drastically enhance the representation capacity by utilizing cutoffs, which are straightforward to implement.

\section{Architectures and Experiments}
\subsection{Architectures}

Arguably, the most well-known GCN utilizing the inner product decoder is the seminal \citeauthor{VGAE} framework. We target their autoencoding inference model (\textbf{GAE}) by learning a latent embedding for grids and chains. Previous discussions intuit that the grid and chain backbones will suffer from superfluous and erroneous node connections in the reconstruction. Introducing learnable cutoffs as in Section~\ref{generalizations} should then sever these edges during model training.

The GAE is given by
$$
\mathbf{A} \operatorname{ReLU}\left(\mathbf{A} \mathbf{X} \mathbf{W}_0\right) \mathbf{W}_1 = \mathbf{Z}_0, \quad \hat{\mathbf{A}} = \mathcal{B}\left(\sigma\left(\mathbf{Z}_0\mathbf{Z}_0^\top \right)\right),
$$
where $\mathcal{B}$ indicates elementwise Bernoulli sampling. We allow a flexible cutoff space to the decoder by introducing a sparse diagonal parameter $\mathbf{C}_0$,  
$$
\mathbf{A} \operatorname{ReLU}\left(\mathbf{A} \mathbf{X} \mathbf{W}_0\right) \mathbf{W}_1 = \mathbf{Z}_0, \quad \hat{\mathbf{A}} = \operatorname{sign}\left(\mathbf{Z}_0\mathbf{C}_0\mathbf{Z}_0^\top \right),
$$
denoted by \textbf{DGAE}~\eqref{diagdecoder}. $\mathbf{C}_0$ is learned by propagating the latent representation $\mathbf{Z}_0$ through a linear layer prior to decoding. 

An alternative architecture is formed via equation \eqref{mdecoder}, denoted by \textbf{$m$GAE}:
\begin{align*}
&\mathbf{A} \operatorname{ReLU}\left(\mathbf{A} \mathbf{X} \mathbf{W}_{2n+1}\right) \mathbf{W}_{2n+2} = \mathbf{Z}_n, \\
\hat{\mathbf{A}} = \operatorname{sign}&\left(\sum_{n = 0}^k (-1)^n \mathbf{C}_{4n}\mathbf{Z}_n\mathbf{C}_{4n+1}\mathbf{C}_{4n+2}\mathbf{Z}_n^\top \mathbf{C}_{4n+3}\right).
\end{align*}
We take $m$ to be the number of $\mathbf{Z}_n$. Finally, the \textbf{CGAE} is given as the complex counterpart to the GAE, where the entries of $\mathbf{A}, \mathbf{X},$ and $\mathbf{W}_i$ live in the complex field. The imaginary portion of $\mathbf{Z}_0\mathbf{Z}_0^\top$ is truncated prior to applying the $\operatorname{sign}$ map to form $\hat{\mathbf{A}}$, and $\operatorname{ReLu}$ acts independently on each dimension during training, i.e. $\operatorname{ReLu}(a+ib) = \operatorname{ReLu}(a) + i\operatorname{ReLu}(b)$. In all nets, edges can be decoded probabilistically by replacing $\operatorname{sign}$  with $\mathcal{B}\circ \sigma(\cdot)$ and vice versa.

\subsection{Experiments}

Feature information is first propagated to $h_1$ hidden dimensions, then to $h_2$ latent dimensions during encoding. For the CGAE, we first project to $h_1/2$ \textit{complex} hidden dimensions (corresponding to $h_1$ real dimensions considering the real and imaginary parts separately), then to $h_2/2$ complex latent dimensions for fair comparison. The training loss is given by
\begin{equation*}
L\left(\mathbf{A},\mathbf{Z}\right) = \operatorname{BCE}\left(\mathbf{A},\sigma \circ \mathfrak{Re}(\mathbf{Z}\mathbf{Z}^\top ) \right) +  \lambda ||\mathbf{A} - \mathfrak{Re}(\mathbf{Z}\mathbf{Z}^\top )||_{F}.
\end{equation*}

We instantiate the $2$GAE by inactivating all $\mathbf{C}_i = \mathbf{I}$ to establish a comparison to the CGAE, and the $4$GAE by inactivating all but one diagonal $\mathbf{C}_{4n+1}$ for every $\mathbf{Z}_n$, $\mathbf{I}=\mathbf{C}_{4n}=\mathbf{C}_{4n+2}=\mathbf{C}_{4n+3}$. 

For visual clarity, we first present several results using pedagogical graph structures. Figure~\ref{experiment} targets the $8\times 8 \times 2$ (short grid), $27 \times 27$ (long grid) planar grid graphs, and a chain composed of linking $6$-cycles $140$ times (long chain). For short graphs, the hidden dimensions used were $h_1 = 30$ while varying $h_2$ from $2$ to $16$; long graphs used $h_1 = 120$ and $h_2$ was varied from $4$ to $100$. For the GAE only, $h_1$ was set to node count when plotting log-normal distance in order to meaningfully project to large latent dimension $h_2$. Appendix $4$ contains full details as well as additional results (e.g. long diverse chain of $1200$ nodes successfully embedded in only $8$ latent dimensions). Treatment of Cora and CiteSeer graphs (\citealt{CoraCiteSeer}) with 2708, 3327 nodes, respectively, are given in Appendix~\ref{realworldgraphnetworks} and similarly reinforce the strength of the augmented architectures. % offer no qualitative variation to the results already provided in reinforcing the superiority of the augmented architectures. 

\begin{figure}[h]
\vspace{.1in}
\includegraphics[width=0.46\textwidth]{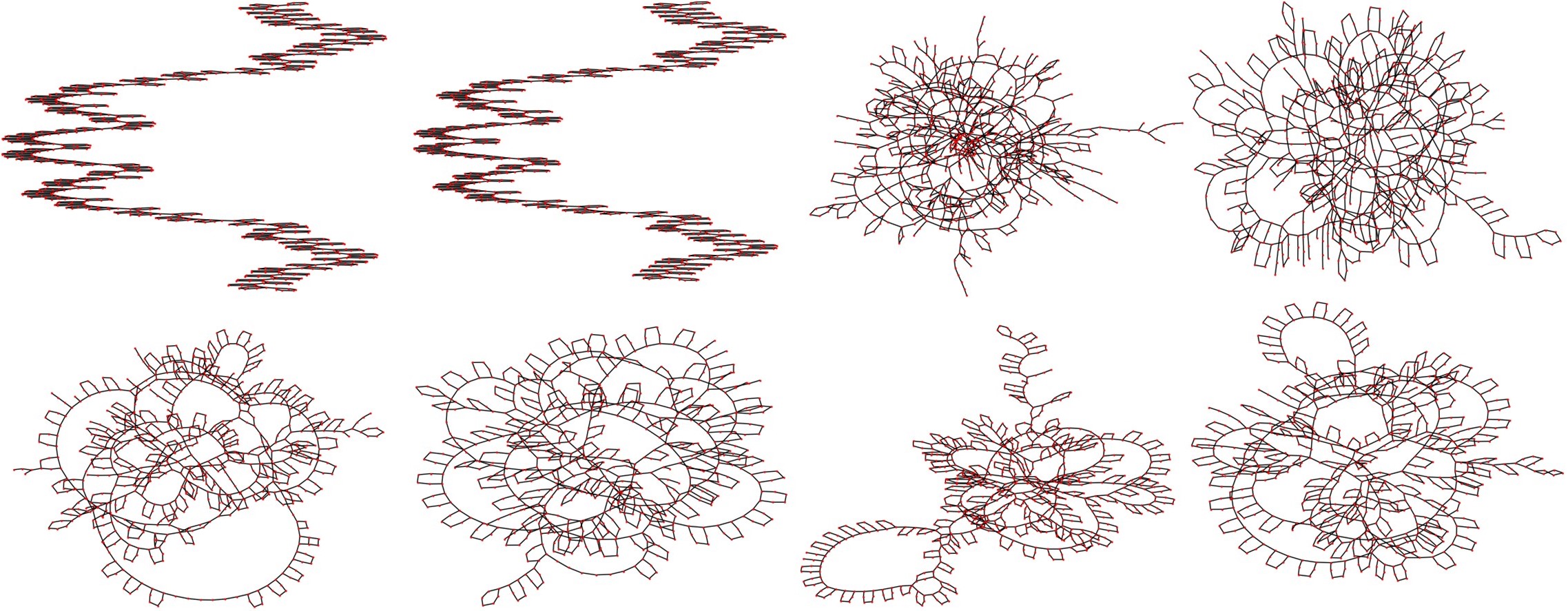}
\vspace{.1in}
\caption{(Top) Input long chain graph of $140$ linked $6$-cycles, $h_2 = 8$ deterministic reconstruction, and $h_2 = 8, 16$ probabilistic reconstructions. (Bottom) $h_2 = 36, 56, 80, 100$ probabilistic reconstructions, all synthesized via $4$GAE under identical hyperparameters as Figure~\ref{experiment}, left to right.}
\label{longchainprobabilisticreconstruction}
\end{figure}

While being prone to bias, an inherent advantage of the $\operatorname{sign}$ decoder is its ability to deterministically extract a robust structure from a low-confidence environment. All architectures augmented by cutoffs flawlessly reconstructed input graphs for $h_2 \ge 8$ (Figure $7$, Appendix $4$). We further note that a faithful reconstruction via sign decoding necessarily implies a faithful probabilistic reconstruction via $\mathcal{B} \circ \sigma$ under weaker normalization $\lambda$, in that scaling the magnitude of the latent vector entries by a large positive constant will drive the positive and negative entries of $\tilde{\mathbf{A}}$ toward $1,-1$ respectively under the sigmoid. 

The normal distance $d(\mathbf{A}, \tilde{\mathbf{A}}) = ||\mathbf{A}- \sigma(\tilde{\mathbf{A}})||_{F}^2/N^2$ confirms that the entangled pictorial reconstruction of the probabilistically sampled $\hat{\mathbf{A}}$ is the result of unrobustness in graph visualization for large node count. In all examples, the augmented architectures captured essential structural attributes such as $n$-cycles and chained structures, while the GAE struggles significantly even for very high latent dimensions. We note that the augmented architectures empirically remained stable under reduction in $h_1$ (Appendix $4.1$). 

\subsubsection{Ranks of real-world graphs}
All molecules with more than 27, 36, 40, 60 nodes from QM-9, Zinc, TU-Enzymes, TU-Protein datasets (\citealt{QM9}; \citealt{Zinc}; \citealt{TUDataset}) are treated via GAE and DGAE, totaling 35, 118, 162, 176 graphs. For all datasets, the DGAE succeeds in distilling a faithful embedding under $h_2 = 10$ latent dimensions while the GAE mostly fails for up to $h_2 = 80$. Results are summarized in Figure~\ref{rankfigure}, where the histograms are plotted as `densities' for better viewability. We note that there are datasets in which the GAE performs competitively (e.g. QM7-b), but nevertheless still fails to exceed the performance of DGAE. A shallow embedding architecture was enforced by taking $h_1 = 2h_2$ while varying $h_2$ in order to achieve maximal model compression. 

\begin{figure}[h]
\vspace{.1in}
\includegraphics[width=0.46\textwidth]{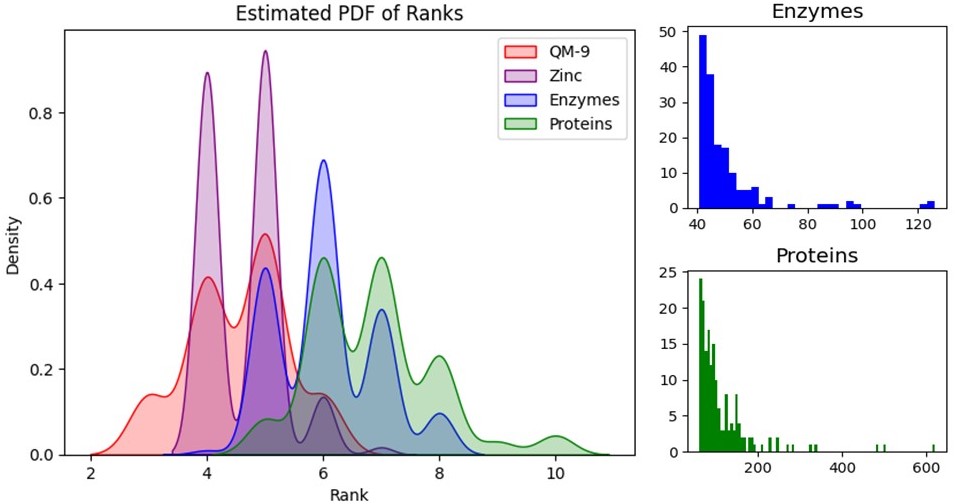}
\vspace{-.1in}
\caption{(Left) All graphs in the molecular benchmarks were faithfully embedded in $\le 10$ latent dimensions. Histograms are normalized to densities for better viewability. (Right) Node number versus molecule count histograms of filtered Enzyme and Protein datasets. Note that all molecules in filtered QM9 dataset have 29 nodes, and 37 nodes for all but one (38 nodes) for the Zinc dataset [both not shown].}
\label{rankfigure}
\end{figure}

\section{Conclusion}

In this paper, we show that the widely used inner product decoder is unable to efficaciously establish node relational strength for graph memorization tasks. Furthermore, algebraic arguments advocate utilitarian design principles such as latent complexification that provably enhances the performance of GNN architectures utilizing the inner product decoder. To our knowledge, this is the first theoretical study elucidating the pervasive phenomenon of the often inadequate performance of inner product decoders for graph structured data. 

\section{Acknowledgments}
We would like to thank the anonymous reviewers for their feedback. Su Hyeong Lee is supported by the Kwanjeong Educational Foundation Scholarship and the University of Chicago McCormick Fellowship. 

%enable the construction of advantageous GCN architectures.

%\clearpage

% In the unusual situation where you want a paper to appear in the
% references without citing it in the main text, use \nocite
%\nocite{langley00}

%\bibliography{example_paper}
\bibliography{mybib.bib}
\bibliographystyle{icml2024}

%%%%%%%%%%%%%%%%%%%%%%%%%%%%%%%%%%%%%%%%%%%%%%%%%%%%%%%%%%%%%%%%%%%%%%%%%%%%%%%
%%%%%%%%%%%%%%%%%%%%%%%%%%%%%%%%%%%%%%%%%%%%%%%%%%%%%%%%%%%%%%%%%%%%%%%%%%%%%%%
% APPENDIX
%%%%%%%%%%%%%%%%%%%%%%%%%%%%%%%%%%%%%%%%%%%%%%%%%%%%%%%%%%%%%%%%%%%%%%%%%%%%%%%
%%%%%%%%%%%%%%%%%%%%%%%%%%%%%%%%%%%%%%%%%%%%%%%%%%%%%%%%%%%%%%%%%%%%%%%%%%%%%%%
\newpage
\appendix
\onecolumn
\begin{figure}[h]
\vspace{.1in}
\includegraphics[width=\textwidth]{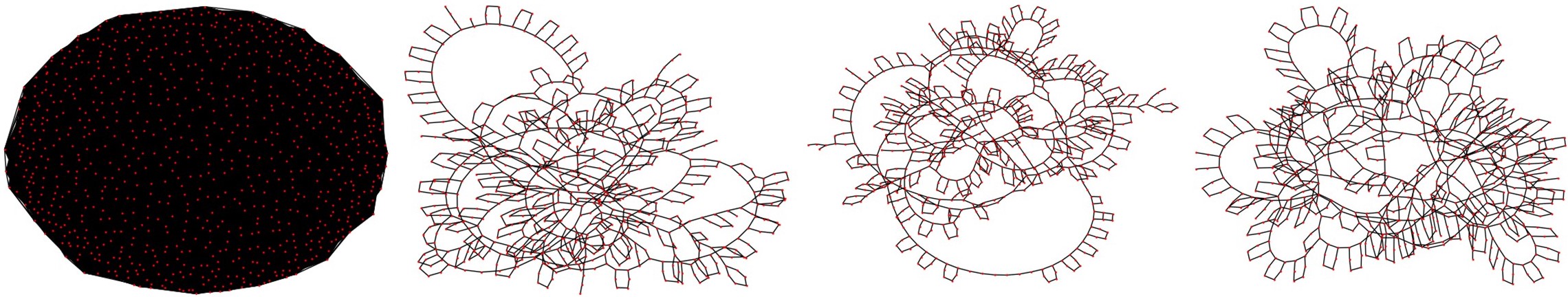}
\vspace{.1in}
\caption{VAE with $h_2 = 597$, and DVAE, $4$VAE, CVAE with $h_2 = 36$, all reconstructed from latent projections of the long chain of $140$ linked $6$-cycles via $\mathcal{B}\circ \sigma$, left to right. We prove that inner product decoding, while universally utilized in machine learning, is not suitable for establishing node strength relations in graph reconstruction problems due to restrictive degrees of freedom in the latent space. Introducing cutoffs, a straightforward yet drastically powerful design choice, significantly augments the expressive capacity of the GCN architecture without deviating from the inner product framework. Identical hyperparameters as Figure $6$ in the main paper were used.}
\label{suitablepicture}
\end{figure}

\section{Kipf \& Welling's GAE Framework}

The adjacency $\mathbf{A} \in \mathbb{R}^{N\times N}$ of an unweighted, undirected graph is a Hermitian matrix encoding binary information, where the entry $1$ indicates a connection between two nodes while $0$ indicates a disconnection. Given a feature matrix $\mathbf{X} \in \mathbb{R}^{N\times d}$, a \textit{Graph Convolutional Network (GCN)} can be utilized to find a latent mapping $\mathbf{X}\mapsto \mathbf{Z} \in \mathbb{R}^{N\times f}$, where $f$ is the latent dimension. In Kipf and Welling (2016), a simple variational inference model is given:
$$
q(\mathbf{Z} \mid \mathbf{X}, \mathbf{A})=\prod_{i=1}^N q\left(\mathbf{z}_i \mid \mathbf{X}, \mathbf{A}\right), \quad \text{ for }  \quad q\left(\mathbf{z}_i \left|   \mathbf{X}, \mathbf{A}\right.\right)=\mathcal{N}\left(\mathbf{z}_i \left| \boldsymbol{\mu}_i, \operatorname{diag}\left(\boldsymbol{\sigma}_i^2\right)\right.\right) .
$$
The mean matrix $\boldsymbol{\mu}$ with columns  $\boldsymbol{\mu}_i$ and the log-variance matrix $\log\left(\boldsymbol{\sigma}\right)$ are found by message passing through the two layer $\operatorname{GCN}(\mathbf{X}, \mathbf{A})=\tilde{\mathbf{A}} \operatorname{ReLU}\left(\tilde{\mathbf{A}} \mathbf{X} \mathbf{W}_0\right) \mathbf{W}_1$. All weights are real while the first layer parameters $\mathbf{W}_0$ are tied to be identical. $\tilde{\mathbf{A}}=\mathbf{D}^{-\frac{1}{2}} \mathbf{A} \mathbf{D}^{-\frac{1}{2}}$ was taken to be the symmetrically normalized adjacency matrix to control for node degree variance. After computing the latent embedding $\mathbf{Z}$, a generative decoding model is formed by taking inner products, 
$$
p(\mathbf{A} \mid \mathbf{Z})=\prod_{i=1}^N \prod_{j=1}^N p\left(A_{i j} \left| \mathbf{z}_i, \mathbf{z}_j\right.\right), \quad\text { for } \quad p\left(A_{i j}=1 \left| \mathbf{z}_i, \mathbf{z}_j\right.\right)=\sigma\left(\mathbf{z}_i^{\top} \mathbf{z}_j\right),
$$
where $\sigma$ denotes the sigmoid. In essence, the decoding is done by interpreting entries of $\sigma\left(\mathbf{ZZ}^\top\right)$ as Bernoulli probabilities of successful connections between the nodes, which forms a distribution over (not necessarily symmetric) adjacency matrices $\hat{\mathbf{A}}$. $\hat{\mathbf{A}}$ may be artificially symmetrized by substituting $\hat{A}_{ij} \leftarrow 1$ if the $ij$-th entry of $\left(\hat{\mathbf{A}} + \hat{\mathbf{A}}^\top \right)/2$ is nonzero, which permits an interpretation as an undirected graph. 

This model is minimized over the \textit{Evidence Lower Bound (ELBO)}, which induces an adversarial competition between a reconstruction of the desired adjacency $\mathbf{A}$ and the KL-divergence, meant to constrain the generative model to remain faithful to its Gaussian prior. During this process, the latent dimension $f$ is tuned empirically as a hyperparameter until the preferred model behavior is observed. Especially as the graph grows high dimensional, a significantly lower value of $f$ which reproduces the original adjacency $\mathbf{A}$ is desirable due to the prohibitive computational cost. Note that the removal of the variance matrix $\log(\boldsymbol{\sigma})$ deduces an autoencoder model, which is commonly denoted by GAE within the literature. 

This paper formalizes the notion of dimensionality in latent representations using the \textit{sign rank}, and supplies examples of pedagogical graph structures (stars, grids, and chains) for which complex decoding structures permit significantly lower-dimensional latent encoding to be used. In particular, we provide a theoretical justification as to why transitioning to the complex field for Graph Neural Networks (GNNs) greatly diversifies the range of permissible low-dimensional latent embeddings while minimally sacrificing expressivity. Guided by theory, we design a decoding architecture which expands the representation capacity of low dimensional embeddings and subsumes the expressivity of complex GNNs. 

\section{Generation of Rank $2$ and Rank $3$ Graphs}

The easiest approach to generating rank $2$ graphs that we could find was to take conjugate multiples of a real matrix function with two column dimensions. For $k_1, \dots, k_n \in \mathbb{R}_{\ge 0}$, let
\begin{equation}\tag{\ref{choosethismatfunc}}
\mathbf{Z} = \left(\begin{array}{cc}
 \cos(k_1  x)    &  \sin(k_1  x)\\
 \cos(k_2  x)     &  \sin(k_2  x) \\
\vdots   & \vdots   \\
\cos(k_N  x)  &  \sin(k_N  x) \\
\end{array} \right), \quad \boldsymbol{\tilde{A}} = \mathbf{ZZ}^\top.    
\end{equation}
The entries of the low rank representation $\tilde{\mathbf{A}}$ are expressible as a single trigonometric function, $\tilde{A}_{ij} = \cos ((k_i - k_j) x)$. Out of all matrix functions available, the reason for choosing~\eqref{choosethismatfunc} is given by Lemmas \ref{easylemma}, \ref{easylemma2}, which naturally intuits that the expression should be capable of producing all sign combinations of rank $2$.

\subsection{Proofs of Lemmas~\ref{easylemma} and~\ref{easylemma2}}
\begin{lemma}\label{easylemma3}
Let $k_i : = \sqrt{p_i}$ where $p_1<p_2<\dots$ are any sequence of positive integer primes. Limiting the indices to the lower triangular portion $i>j$, the periods of $\tilde{A}_{ij}$ can never match. That is, there exists no $n_1,n_2 \in \mathbb{Z}_{\neq 0}$ such that $n_1 t_1 = n_2 t_2$, where $t_1,t_2$ are periods of $ \tilde{A}_{ij}, \tilde{A}_{i^\prime,j^\prime}$ for $\{i,j\} \neq \{i^\prime,j^\prime \}.$
\end{lemma}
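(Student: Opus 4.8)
The plan is to reduce the statement about commensurability of periods to a question of $\mathbb{Q}$-linear independence of square roots of distinct primes, and then invoke the classical theorem that the family $\{\sqrt{p} : p \text{ prime}\}$ is linearly independent over $\mathbb{Q}$. First I would record that, viewed as a function of $x$, the entry $\tilde{A}_{ij} = \cos((k_i - k_j)x) = \cos((\sqrt{p_i} - \sqrt{p_j})x)$ has fundamental period $t_{ij} = 2\pi/(\sqrt{p_i} - \sqrt{p_j})$, where I use that $i>j$ forces $p_i > p_j$ so the difference is positive and no absolute value is needed. Writing $t_1 = t_{ij}$ and $t_2 = t_{i'j'}$, the existence of nonzero integers $n_1,n_2$ with $n_1 t_1 = n_2 t_2$ is equivalent to $t_1/t_2 \in \mathbb{Q}$ (the periods are positive, so the ratio is a well-defined nonzero rational), which is in turn equivalent to
$$\frac{\sqrt{p_{i'}} - \sqrt{p_{j'}}}{\sqrt{p_i} - \sqrt{p_j}} \in \mathbb{Q}.$$
Thus it suffices to show this ratio is irrational whenever $\{i,j\} \neq \{i',j'\}$; i.e. that the two periods are incommensurable.

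Next I would argue by contradiction. Suppose the ratio equals some $q \in \mathbb{Q}$. Clearing the denominator yields the $\mathbb{Q}$-linear relation
$$\sqrt{p_i} - \sqrt{p_j} - q\sqrt{p_{i'}} + q\sqrt{p_{j'}} = 0.$$
This is where the key ingredient enters: since the $p_\bullet$ are distinct primes, the square roots occurring are linearly independent over $\mathbb{Q}$. The remaining work is casework on how many indices the pairs $\{i,j\}$ and $\{i',j'\}$ share. They cannot share both (else the pairs coincide), so they share either none or exactly one. If they share none, the four primes are distinct and the displayed relation is nontrivial—the coefficient of $\sqrt{p_i}$ is $1$—immediately contradicting independence. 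If they share exactly one index, then collecting terms leaves a relation among three distinct radicals $\sqrt{p_\bullet}$ in which at least one of the two non-shared primes carries coefficient $\pm 1$, a value independent of $q$ and hence nonzero; this again contradicts independence. Either way we reach a contradiction, so no such $n_1,n_2$ exist.

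The main obstacle is the $\mathbb{Q}$-linear independence of square roots of distinct primes, which I would cite as a known result (for instance Besicovitch's theorem, or a short argument bounding the degree of the field $\mathbb{Q}(\sqrt{p_1},\dots,\sqrt{p_m})$ over $\mathbb{Q}$). Alternatively, because at most four primes ever appear in any single relation, one can prove exactly the special case required by hand via repeated squaring—isolate one radical, square to reduce the number of surviving radicals, and iterate—thereby avoiding any heavy machinery. The only other point requiring care is the bookkeeping in the shared-index case, where one must verify for each of the possible identifications ($i=i'$, $i=j'$, $j=i'$, $j=j'$) that the surviving coefficient on a non-shared radical is genuinely $\pm 1$, and therefore nonzero for every value of $q$.
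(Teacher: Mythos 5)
Your proof is correct, but it reaches the conclusion by a different route than the paper. You convert commensurability of the periods into a rational linear relation $a\sqrt{p_i}+b\sqrt{p_j}+c\sqrt{p_{i'}}+d\sqrt{p_{j'}}=0$ with at least one coefficient equal to $\pm 1$, and then appeal to the classical fact (Besicovitch) that square roots of distinct primes are linearly independent over $\mathbb{Q}$; your case analysis correctly verifies that in every identification pattern a non-shared radical survives with coefficient $\pm 1$, so the relation is genuinely nontrivial. The paper instead squares the relation $n_1^2(k_{i'}-k_{j'})^2=n_2^2(k_i-k_j)^2$ directly: expanding leaves an integer equal to $2n_1^2\sqrt{p_{i'}p_{j'}}-2n_2^2\sqrt{p_i p_j}$, and squaring once more forces $\sqrt{p_i p_j p_{i'} p_{j'}}$ to be rational, which fails because the product of these primes is not a perfect square when $\{i,j\}\neq\{i',j'\}$. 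The trade-off is that the paper's argument is entirely self-contained, needing only the irrationality of $\sqrt{n}$ for non-square $n$, whereas yours imports a stronger (though standard) independence theorem but in exchange avoids the slightly delicate bookkeeping of which cross terms cancel; your fallback suggestion of proving the four-radical special case by repeated squaring would essentially reproduce the paper's computation. One cosmetic slip: from $\bigl(\sqrt{p_{i'}}-\sqrt{p_{j'}}\bigr)/\bigl(\sqrt{p_i}-\sqrt{p_j}\bigr)=q$ the relation you should display is $q\sqrt{p_i}-q\sqrt{p_j}-\sqrt{p_{i'}}+\sqrt{p_{j'}}=0$ (yours corresponds to the reciprocal ratio); since commensurability is symmetric and $q\neq 0$, this does not affect the argument.
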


\begin{proof}
The period $t_{ij}$ of $\tilde{A}_{ij}$ is given by $2\pi/(|k_i - k_j|)$. Let us assume for contradiction that $n_1 t_1 = n_2 t_2$ is satisfied. Then, we have
\begin{equation*}
    \frac{2n_1\pi}{|k_i - k_j|} = \frac{2n_2\pi}{|k_{i^\prime} - k_{j^\prime}|} \implies n_1^2 (k_{i^\prime} - k_{j^\prime})^2 = n_2^2 (k_i - k_j)^2.
\end{equation*}
Thus, there must exist an integer $m$ such that 
$$
m = 2n_1^2 \sqrt{p_{i^\prime}p_{j^\prime}} - 2n_2^2 \sqrt{p_{i}p_{j}},
$$
but taking squares give that $\sqrt{p_{i^\prime}p_{j^\prime}p_{i}p_{j}}$ must be rational, a contradiction to $\{i,j\} \neq \{i^\prime,j^\prime \}$.
\end{proof}

Therefore, each entry of $\tilde{\mathbf{A}}$ is of the form $\cos(\theta x)$, where the $\theta$ are unique to every lower triangular entry. Note that $x = 0$ initializes all such entries to be $1$, and Lemma \ref{easylemma} guarantees that this will never happen again as we vary $x$ over the real line. However, the values of any two non-diagonal entries can become arbitrarily close near $1$ as we vary $x$ far afield. This is formally expressed by Lemma~\ref{easylemma2}, which possesses an elegant proof. 

\begin{lemma}%\label{easylemma2}
Let $t_1$, $t_2$ be as in Lemma \ref{easylemma3}. Then for any $0<|\varepsilon| < 1$ there exists $n_1,n_2 \in \mathbb{Z}_{>0}$ such that $|n_1 t_1 - n_2 t_2| < \varepsilon.$
\end{lemma}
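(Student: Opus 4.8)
The plan is to reduce the statement to the classical fact that the orbit $\{n\alpha \bmod 1\}$ of an irrational $\alpha$ is dense in $[0,1)$. First I would record that the ratio $\alpha := t_1/t_2$ is irrational. Writing $t_1 = 2\pi/|k_i-k_j|$ and $t_2 = 2\pi/|k_{i'}-k_{j'}|$, suppose for contradiction that $\alpha = n_2/n_1$ for some $n_1,n_2 \in \mathbb{Z}_{>0}$; then $n_1 t_1 = n_2 t_2$, directly contradicting Lemma~\ref{easylemma3}. Hence $\alpha$ is irrational, and since both periods are positive we have $\alpha > 0$.

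Next I would normalize the target inequality. Dividing $|n_1 t_1 - n_2 t_2| < \varepsilon$ through by $t_2 > 0$, it suffices to exhibit positive integers $n_1,n_2$ with $|n_1\alpha - n_2| < \delta$, where $\delta := \varepsilon/t_2 > 0$. This is now a pure simultaneous-approximation statement about the single irrational $\alpha$. I would invoke density of the sequence $(\{n\alpha\})_{n\ge 1}$ in $[0,1)$ — either via Weyl equidistribution, or, in keeping with the elementary spirit of the paper, via a Dirichlet pigeonhole argument: partitioning $[0,1)$ into $M$ bins with $1/M < \delta$, two of the $M+1$ fractional parts $\{0\},\{\alpha\},\dots,\{M\alpha\}$ must share a bin, producing a positive integer $n$ with $\|n\alpha\| < 1/M < \delta$, where $\|\cdot\|$ denotes distance to the nearest integer. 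In either formulation one obtains infinitely many positive integers $n_1$ with fractional part $\{n_1\alpha\} < \delta$; for each such $n_1$ set $n_2 := \lfloor n_1\alpha\rfloor$, so that $|n_1\alpha - n_2| = \{n_1\alpha\} < \delta$.

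The only genuine obstacle is ensuring that the second index is strictly positive, i.e. $n_2 = \lfloor n_1\alpha\rfloor \ge 1$, since a naive pigeonhole pair could yield $n_2 = 0$ when $n_1\alpha < 1$. Here I would exploit that the approximation step furnishes \emph{infinitely many} admissible $n_1$: discarding the finitely many with $n_1 \le 1/\alpha$, any remaining $n_1$ satisfies $n_1\alpha > 1$ and hence $n_2 = \lfloor n_1\alpha\rfloor \ge 1$. With both indices now positive, $|n_1 t_1 - n_2 t_2| = t_2\,|n_1\alpha - n_2| < t_2\delta = \varepsilon$, which completes the argument. The crux of the proof is thus the transfer of Lemma~\ref{easylemma3} into irrationality of $\alpha$, after which everything follows from a standard density/pigeonhole estimate together with this bookkeeping on positivity.
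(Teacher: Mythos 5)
Your proof is correct and is essentially the paper's argument: the paper also runs a Dirichlet-style pigeonhole, partitioning $[0,t_1]$ into subintervals of width less than $|\varepsilon|$ and subtracting two values of $n_2t_2 - n_1t_1$ that fall in the same bin, which is exactly your fractional-part argument before normalizing by $t_2$. The only differences are cosmetic --- you make the irrationality of $\alpha = t_1/t_2$ and the strict positivity of $n_2$ explicit, both of which the paper leaves implicit.
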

\begin{proof}
We assume without loss of generality that $t_1 < t_2$. Then, there exists two sequences of increasing integers $m_1 \ll m_2 \ll \dots$ and $\ell_1 \ll \ell_2 \ll \dots$ which satisfies $n_2t_2 - n_1t_1 \in [0,t_1]$, where $n_1 = \ell_i$, $n_2 = m_i$ for $i = 1, 2, \dots$. We divide the interval $[0,t_1]$ into $k = \operatorname{ceil}(t_1/|\varepsilon|) + 1$ subintervals. By the pigeonhole principle, we know that at least two values of $n_2t_2 - n_1t_1$ must fall within the same subinterval by considering $n_1 = \ell_i$, $n_2 = m_i$ for $i = 1, 2, \dots, k+1$. Denote these two indices $(\ell_1,m_1)$ and $(\ell_2,m_2)$ for simplicity. This immediately gives that $|(m_2-m_1)t_2 - (\ell_2-\ell_1)t_1| < |\varepsilon|$, which concludes the proof.
\end{proof}

Taken together, Lemmas~\ref{easylemma} and~\ref{easylemma2} ensure that any two distinct lower triangular entries of $\mathbf{\tilde{A}}$ may never simultaneously take the value $1$ for any $x \neq 0$, but may become arbitrarily close to $1$ for large enough $x$. At this point, their relationship resets from the perspective of their periods. As we demand that $|\varepsilon|$ converges to $0^+$, it is expected that this reset happens farther afar in the real line. Due to the pairwise periodic regularity of the entries of $\mathbf{\tilde{A}}$ which emerges despite their perpetually imbalanced periods (Lemma~\ref{easylemma}), it is clear that all possible sign combinations of rank $2$ should be taken as we vary $x \in \mathbb{R}$.

The following Proposition formally proves that this is indeed the case. The proof is given by a straightforward geometric argument on $S^1$, which is detailed in the main paper.  
\begin{proposition}%\tag{\ref{firstproposition}}
Let $\mathbf{Z}$, $\tilde{\mathbf{A}}$ be as in~\eqref{choosethismatfunc}. Then, for any rank $2$ adjacency $\mathbf{A}$ representing a connected graph, there exists $\left(k_1, \dots, k_n\right)$ such that $\operatorname{sign}(\mathbf{A}) = \operatorname{sign}(\boldsymbol{\tilde{A}})$.
\end{proposition}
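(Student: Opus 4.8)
The plan is to bypass the commensurability machinery of Lemmas~\ref{easylemma} and~\ref{easylemma2} entirely and reduce the claim to a single geometric observation about the unit circle $S^1$. Those lemmas motivate why the trigonometric ansatz~\eqref{choosethismatfunc} is a natural generator of rank~$2$ sign patterns, but the existence statement itself only asks for \emph{some} admissible tuple $(k_1,\dots,k_N)$, so no control over the periods or distinctness of the $k_i$ is actually required.

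First I would unpack the hypothesis. By the definition of real sign rank, $\mathbf{A}$ being rank~$2$ yields a matrix $\mathbf{Z}_\mathbf{A} \in \mathbb{R}^{N\times 2}$ with $\operatorname{sign}(\mathbf{A}) = \operatorname{sign}(\mathbf{Z}_\mathbf{A}\mathbf{Z}_\mathbf{A}^\top)$; write $\mathbf{w}_i$ for its $i$-th row. Connectivity is used exactly once, to rule out zero rows: if $\mathbf{w}_i = \overline{0}$ then every $\langle \mathbf{w}_i, \mathbf{w}_j\rangle$ is $0$ and hence receives a negative sign, so node~$i$ would be isolated, contradicting the assumption that the graph is connected. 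Next I would invoke scale-invariance of the sign of an inner product: dividing each $\mathbf{w}_i$ by $\|\mathbf{w}_i\|_{\ell_2}$ scales every off-diagonal entry by a positive factor and thus preserves $\operatorname{sign}(\mathbf{Z}_\mathbf{A}\mathbf{Z}_\mathbf{A}^\top)$ (the diagonal being ignored by assumption), placing each row on $S^1$.

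Finally I would match these unit vectors to the ansatz. Fixing any $x \neq 0$, the map $k \mapsto (\cos(kx), \sin(kx))$ is surjective onto $S^1$ because $kx$ sweeps all of $\mathbb{R}$ as $k$ does, so for each normalized row $\mathbf{w}_i = (a_i, b_i)$ I may choose $k_i$ with $(\cos(k_i x), \sin(k_i x)) = (a_i, b_i)$. With this choice the $i$-th row of $\mathbf{Z}$ in~\eqref{choosethismatfunc} equals the normalized $\mathbf{w}_i$ exactly, so $\tilde{\mathbf{A}} = \mathbf{Z}\mathbf{Z}^\top$ agrees entrywise with the normalized $\mathbf{Z}_\mathbf{A}\mathbf{Z}_\mathbf{A}^\top$, and in particular in sign, giving $\operatorname{sign}(\tilde{\mathbf{A}}) = \operatorname{sign}(\mathbf{A})$.

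I do not expect a genuine obstacle here; the argument is immediate once the reduction to $S^1$ is in place. The only subtlety worth flagging is conceptual rather than technical: it is the connectivity hypothesis alone that guarantees normalizability, and nothing forces the $k_i$ to be distinct, so the prime-indexed choice $k_i = \sqrt{p_i}$ from Lemma~\ref{easylemma} plays no role in this particular statement.
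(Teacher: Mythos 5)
Your proposal is correct and follows essentially the same route as the paper's own proof: extract $\mathbf{Z}_\mathbf{A}\in\mathbb{R}^{N\times 2}$ from the rank-$2$ hypothesis, use connectivity to exclude zero rows, normalize the rows onto $S^1$, and then realize each unit vector as $(\cos(k_i x),\sin(k_i x))$ for a fixed $x\neq 0$ by surjectivity of $k\mapsto kx$. Your observation that Lemmas~\ref{easylemma} and~\ref{easylemma2} are motivational rather than logically necessary is consistent with the paper, whose proof likewise does not invoke them.
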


It is clear that an analogous geometric argument on $S^2$ generalizes Proposition~\ref{firstproposition} to rank $3$ graphs. The only difference is that the argument is made using spherical coordinates instead of polar coordinates. 
\begin{proposition}%\label{firstpropgeneralized}
For $k_1, k_1^\prime, \dots, k_n,k_n^\prime \in \mathbb{R}_{\ge 0}$, let
\begin{equation}
\mathbf{Z} = \left(\begin{array}{ccc}
 \cos(k_1  x) \sin(k_1^\prime x)   &  \cos(k_1  x) \cos(k_1^\prime x)   &  \sin(k_1  x)\\
\cos(k_2  x) \sin(k_2^\prime x)    &  \cos(k_2  x) \cos(k_2^\prime x)    &  \sin(k_2  x) \\
\vdots   & \vdots   & \vdots   \\
\cos(k_N  x) \sin(k_N^\prime x) & \cos(k_N  x)\cos(k_N^\prime x)  &  \sin(k_N  x) \\
\end{array} \right)
\end{equation}
and $\boldsymbol{\tilde{A}} = \mathbf{ZZ}^\top$.Then for any rank $3$ adjacency $\mathbf{A}$, there exists $k_i,k_i^\prime$ such that $\operatorname{sign}(\mathbf{A}) = \operatorname{sign}(\boldsymbol{\tilde{A}})$.
\end{proposition}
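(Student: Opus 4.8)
The plan is to reprise the $S^1$ argument from Proposition~\ref{firstproposition} verbatim, now lifted to the two-sphere $S^2$. The pivotal observation is that each row of $\mathbf{Z}$ already lies on $S^2$: writing $\phi_i = k_i x$ and $\psi_i = k_i' x$, the squared row norm collapses to $\cos^2(\phi_i)\bigl(\sin^2(\psi_i)+\cos^2(\psi_i)\bigr)+\sin^2(\phi_i)=1$, so the $i$-th row is exactly the spherical-coordinate image of the angle pair $(\phi_i,\psi_i)$, with $\sin(\phi_i)$ playing the role of latitude and $\psi_i$ the azimuth. Thus $\mathbf{Z}$ is a matrix whose rows are unit vectors, and all the relevant sign information sits in the pairwise inner products $\langle \mathbf{w}_i,\mathbf{w}_j\rangle$.

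First I would invoke the hypothesis that $\mathbf{A}$ is rank $3$ to produce $\mathbf{Z_A}\in\mathbb{R}^{N\times 3}$ with $\operatorname{sign}(\mathbf{A})=\operatorname{sign}(\mathbf{Z_A}\mathbf{Z_A}^\top)$, and denote its rows by $\mathbf{w}_i$. Connectedness of the graph forces $\mathbf{w}_i\neq \overline{0}$ for every $i$, since a zero row would give $\langle\mathbf{w}_i,\mathbf{w}_j\rangle=0$ for all $j$ and hence a node disconnected from the rest. Exactly as in the rank $2$ proof, rescaling each $\mathbf{w}_i$ by the positive scalar $1/\|\mathbf{w}_i\|_{\ell_2}$ leaves every off-diagonal sign $\operatorname{sign}(\langle\mathbf{w}_i,\mathbf{w}_j\rangle)$ unchanged, so we may assume $\mathbf{w}_i\in S^2$ for all $i$.

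The only remaining step is to show that, for any fixed $x\neq 0$, the spherical parameterization $(k,k')\mapsto(\cos(kx)\sin(k'x),\cos(kx)\cos(k'x),\sin(kx))$ is surjective onto $S^2$; then choosing for each $i$ a preimage $(k_i,k_i')$ of $\mathbf{w}_i=(a_i,b_i,c_i)$ yields $\operatorname{sign}(\tilde{\mathbf{A}})=\operatorname{sign}(\mathbf{Z_A}\mathbf{Z_A}^\top)=\operatorname{sign}(\mathbf{A})$. Surjectivity is immediate: solve $\sin(k_i x)=c_i$ for $k_i$ (so that $\cos(k_ix)=\sqrt{a_i^2+b_i^2}\ge 0$ by the spherical constraint $a_i^2+b_i^2+c_i^2=1$), then solve $(\sin(k_i'x),\cos(k_i'x))=(a_i,b_i)/\sqrt{a_i^2+b_i^2}$ for $k_i'$ using precisely the $S^1$ surjectivity already exploited in Proposition~\ref{firstproposition}.

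The main obstacle, and it is a mild one, will be the polar degeneracy where $c_i=\pm 1$ and the normalizing factor $\sqrt{a_i^2+b_i^2}$ vanishes, making the azimuthal angle $k_i'$ ill-defined. This turns out to be harmless: at a pole one automatically has $a_i=b_i=0$, so $\cos(k_ix)=0$ annihilates both ambiguous coordinates and any choice of $k_i'$ reproduces the target point. Hence the construction goes through for every point of $S^2$, completing the argument. As the appendix remarks, the entire content of the generalization from rank $2$ to rank $3$ is precisely this passage from polar to spherical coordinates.
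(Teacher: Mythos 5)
Your proof is correct and follows exactly the route the paper intends: the paper's own ``proof'' of this proposition is a one-line remark that the $S^1$ argument of Proposition~\ref{firstproposition} generalizes via spherical coordinates on $S^2$, which is precisely the normalization-plus-surjectivity argument you carry out. You in fact supply more detail than the paper does, including the harmless polar degeneracy at $c_i=\pm 1$, which the paper leaves implicit.
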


Propositions~\ref{firstproposition} and~\ref{firstpropgeneralized} provide a formal algorithm to generate graphs of ranks $2$ or $3$. Figure~\ref{ranks2and3} provides additional examples of low-rank graphs as well as parameters used. 

\begin{figure}[h!]
  \begin{subfigure}[b]{0.24\textwidth}
    \includegraphics[width=\textwidth]{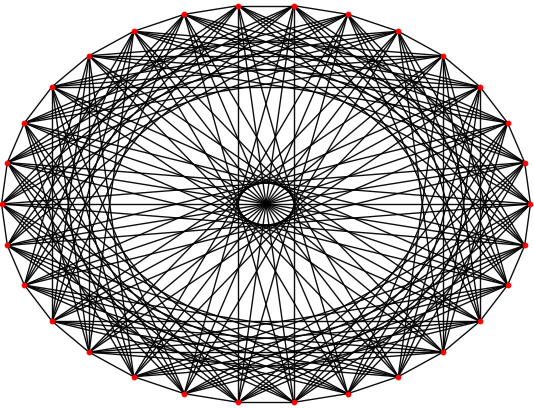}
    \caption{Rank $\le2$}
    %\label{thisexample}
  \end{subfigure}
  \hfill
    \begin{subfigure}[b]{0.24\textwidth}
    \includegraphics[width=\textwidth]{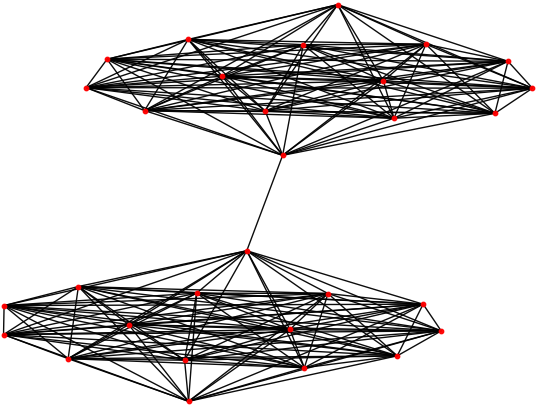}
    \caption{Rank $\le2$}
    %\label{thisexample}
  \end{subfigure}
  \hfill
  \begin{subfigure}[b]{0.24\textwidth}
    \includegraphics[width=\textwidth]{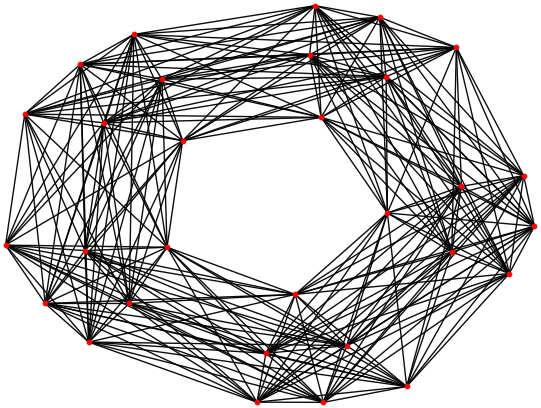}
    \caption{Rank $\le2$}
    %\label{thistwoexample}
  \end{subfigure}
  \hfill
  \begin{subfigure}[b]{0.24\textwidth}
    \includegraphics[width=\textwidth]{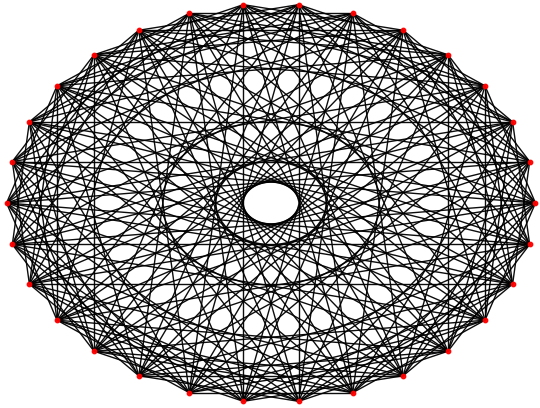}
    \caption{Rank $\le2$}
    %\label{thistwoexample}
  \end{subfigure}
  \hfill
  \begin{subfigure}[b]{0.24\textwidth}
    \includegraphics[width=\textwidth]{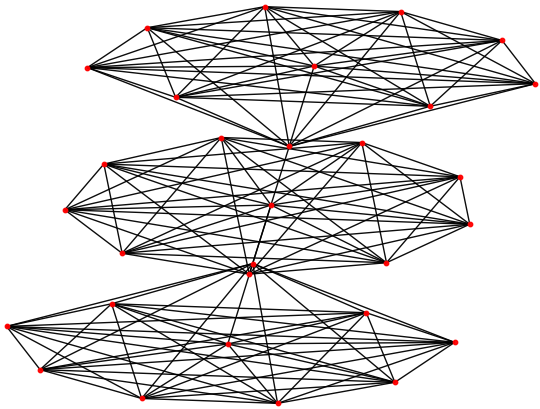}
    \caption{Rank $\le2$}
    %\label{thisexample}
  \end{subfigure}
  \hfill
    \begin{subfigure}[b]{0.24\textwidth}
    \includegraphics[width=\textwidth]{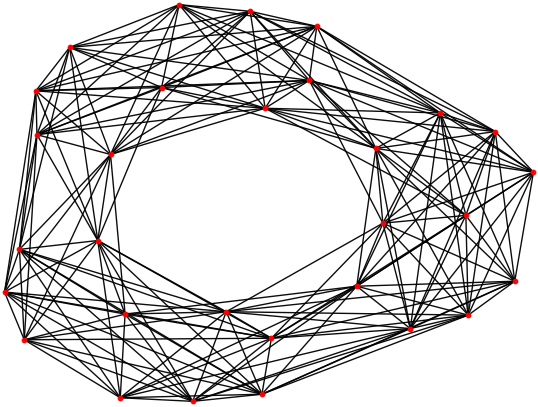}
    \caption{Rank $\le2$}
    %\label{thisexample}
  \end{subfigure}
  \hfill
  \begin{subfigure}[b]{0.24\textwidth}
    \includegraphics[width=\textwidth]{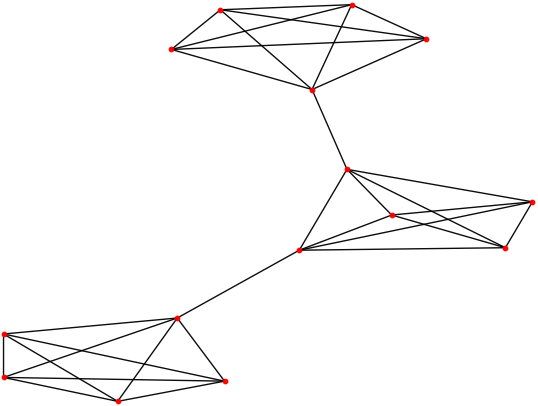}
    \caption{Rank $\le2$}
    %\label{thistwoexample}
  \end{subfigure}
  \hfill
  \begin{subfigure}[b]{0.24\textwidth}
    \includegraphics[width=\textwidth]{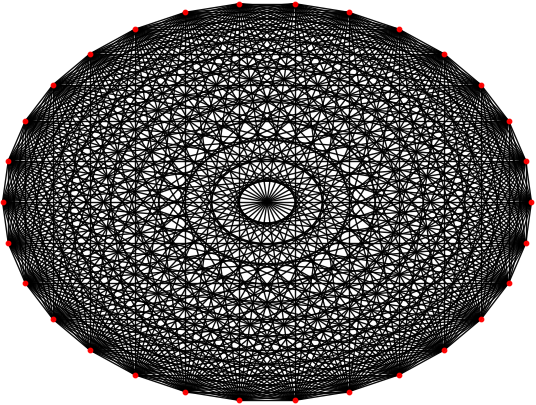}
    \caption{Rank $\le2$}
    %\label{thistwoexample}
  \end{subfigure}
   \hfill
  \begin{subfigure}[b]{0.24\textwidth}
    \includegraphics[width=\textwidth]{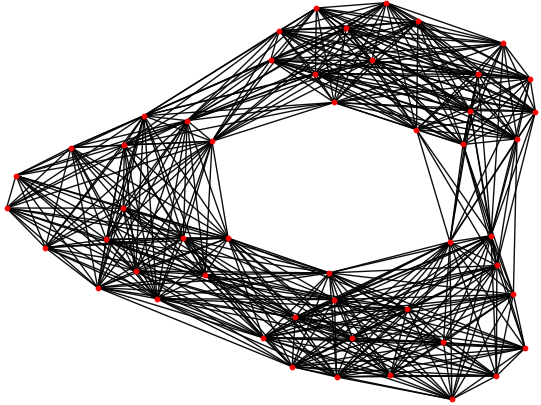}
    \caption{Rank $\le3$}%$0.2$
    %\label{thisexample}
  \end{subfigure}
  \hfill
    \begin{subfigure}[b]{0.24\textwidth}
    \includegraphics[width=\textwidth]{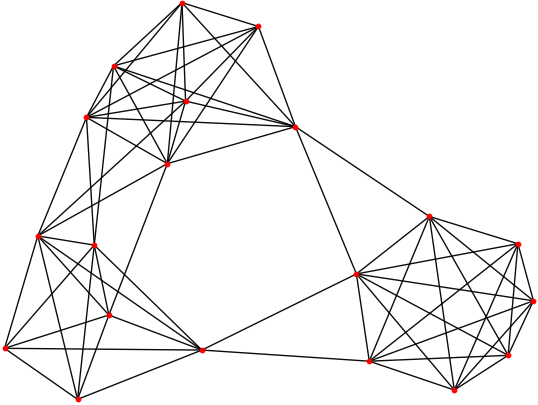}
    \caption{Rank $\le3$}%0.4
    %\label{thisexample}
  \end{subfigure}
  \hfill
  \begin{subfigure}[b]{0.24\textwidth}
    \includegraphics[width=\textwidth]{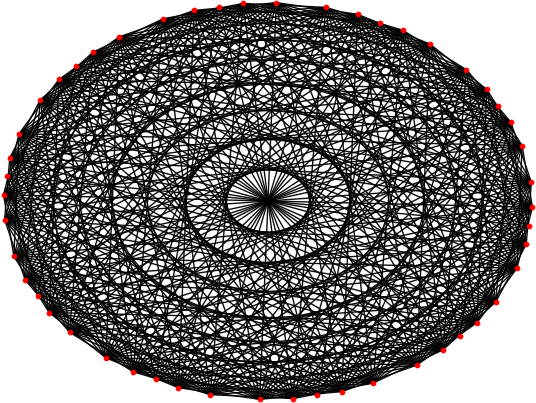}
    \caption{Rank $\le3$}%0.7
    %\label{thistwoexample}
  \end{subfigure}
  \hfill
  \begin{subfigure}[b]{0.24\textwidth}
    \includegraphics[width=\textwidth]{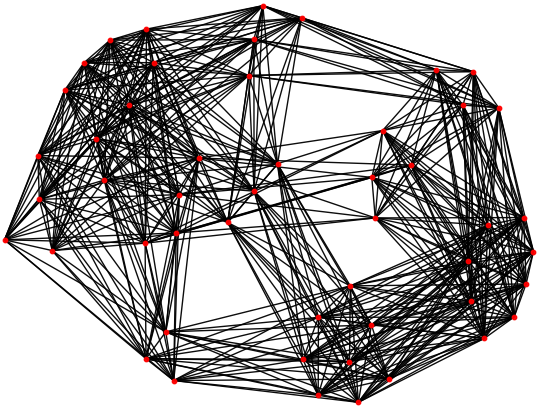}
    \caption{Rank $\le3$} %1.2 200to
    %\label{thistwoexample}
  \end{subfigure}
   \hfill
  \begin{subfigure}[b]{0.24\textwidth}
    \includegraphics[width=\textwidth]{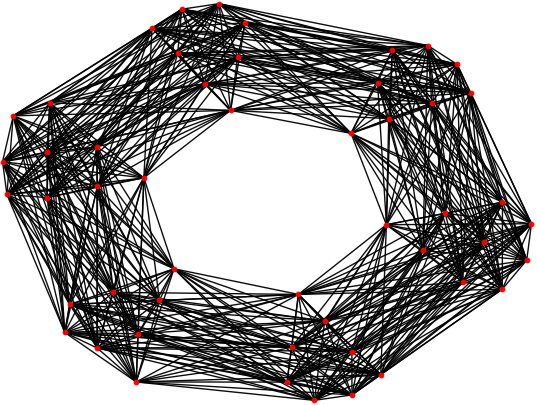}
    \caption{Rank $\le3$}
    %\label{thisexample}
  \end{subfigure}
  \hfill
    \begin{subfigure}[b]{0.24\textwidth}
    \includegraphics[width=\textwidth]{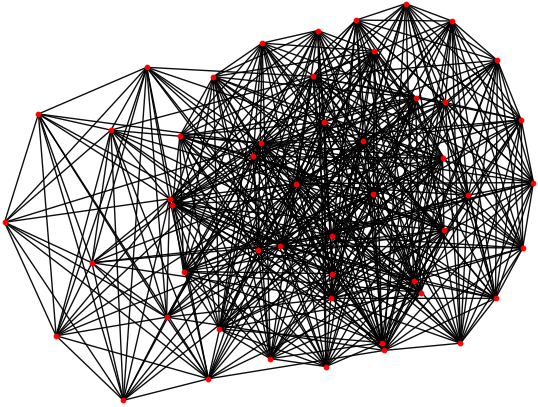}
    \caption{Rank $\le3$}
    %\label{thisexample}
  \end{subfigure}
  \hfill
  \begin{subfigure}[b]{0.24\textwidth}
    \includegraphics[width=\textwidth]{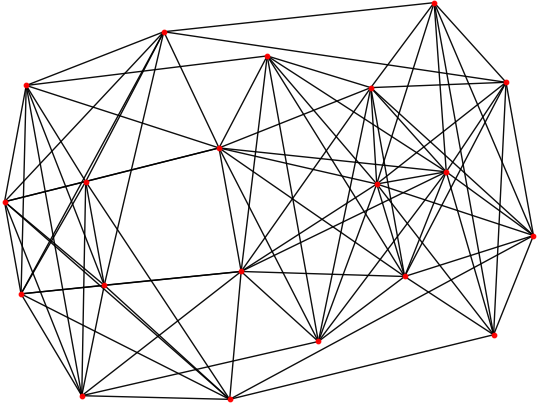}
    \caption{Rank $\le3$}
    %\label{thistwoexample}
  \end{subfigure}
  \hfill
  \begin{subfigure}[b]{0.24\textwidth}
    \includegraphics[width=\textwidth]{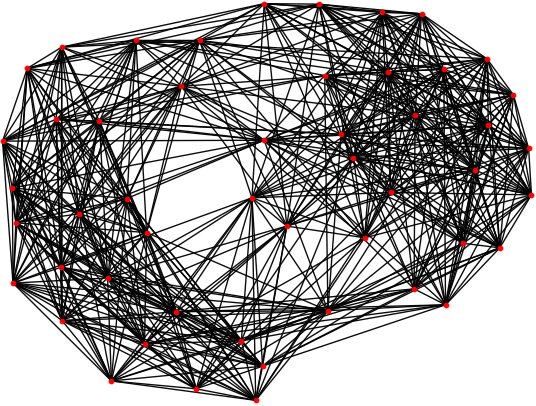}
    \caption{Rank $\le3$}
    %\label{thistwoexample}
  \end{subfigure}
\caption{ Propositions~\ref{firstproposition} and~\ref{firstpropgeneralized} induce a straightforward algorithm that generates low-rank graphs given the input $(N, a, b, m)$. $N$ administers the total node count, and $x$ is varied over the endpoints of the $m-1$ equidistant interval partition of $[a,b]$. The parameters $k$ were chosen to be $k_i = i/N$ and $k_i^\prime = k_i/2$. The sign mapping was applied to $\tilde{\mathbf{A}}$ and the adjacency appended given an identical graph had not already been discovered. Afterwards, graphs displaying symmetric or otherwise intriguing node-relational attributes in the visualization were manually selected from the thousands of graphs generated. All rank $\le 2$ graphs in the main paper and supplement were collected from the pool $(30,0,50,1000)$, $(15,0,50,1000)$, $(50,0,100,2000)$, $(30,0,100,5000)$ while rank $\le 3$ graphs were chosen from $(15,0,50,1000)$, $(20,0,50,5000)$, $(10,0,150,5000)$, $(50,0,150,500)$. }
  \label{ranks2and3} 
\end{figure}

\section{Guaranteed lower bounds for graph rank}
In this section, we construct a class of graphs that cannot admit a representation of rank $2$. If all graphs are rank $2$, then for any adjacency $\mathbf{A}$, there must exist vectors $\mathbf{z}_1, \mathbf{z}_2, \mathbf{z}_3, \mathbf{z}_4 \in \mathbb{R}^N$ such that 
\begin{equation}
\operatorname{sign}(\mathbf{A}) = \operatorname{sign}(\mathbf{ZZ}^\top) = \operatorname{sign}\left(\left[\colvector{\mathbf{z}_1} \colvector{\mathbf{z}_2} \right] \times   \left[
\begin{array}{cc}
\rowvector{\mathbf{z}_3^\top} \\
\rowvector{\mathbf{z}_4^\top} 
\end{array}
\right] \right).    
\end{equation}
We prove that this cannot happen for $\mathbf{Z}$ real. 
\subsection{Not all graphs are of rank $2$}
\begin{theorem}%\label{beginningtheorem}
There exists a graph adjacency $\mathbf{A}$ such that
\begin{equation}
\operatorname{sign}(\mathbf{A}) \neq \operatorname{sign}\left(\left[\colvector{\mathbf{z}_1} \colvector{\mathbf{z}_2} \right] \times   \left[
\begin{array}{cc}
\rowvector{\mathbf{z}_3^\top} \\
\rowvector{\mathbf{z}_4^\top} 
\end{array}
\right] \right)  = \operatorname{sign}\left(\mathbf{RC} \right)    
\end{equation}
for $\mathbf{R}, \mathbf{C}^\top \in \mathbb{R}^{N\times 2}$.
\end{theorem}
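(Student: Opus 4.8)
The plan is to argue by contradiction from an explicit $7$-node graph, reducing the claim to the unrealizability of a single $3\times 4$ sign block. Suppose $\operatorname{sign}(\mathbf{A}) = \operatorname{sign}(\mathbf{R}\mathbf{C})$ with $\mathbf{R},\mathbf{C}^\top \in \mathbb{R}^{N\times 2}$. Writing $\mathbf{a}_i \in \mathbb{R}^2$ for the $i$-th row of $\mathbf{R}$ and $\mathbf{b}_j \in \mathbb{R}^2$ for the $j$-th column of $\mathbf{C}$, the hypothesis reads $\operatorname{sign}(A_{ij}) = \operatorname{sign}\langle \mathbf{a}_i, \mathbf{b}_j\rangle$ for all $i,j$. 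Thus each node $i$ supplies a homogeneous halfplane $\{\mathbf{x} : \langle \mathbf{a}_i, \mathbf{x}\rangle > 0\}$, and node $i$ connects to node $j$ exactly when $\mathbf{b}_j$ lies strictly inside it; on the unit circle this halfplane is an open semicircle. Since restricting a rank-$\le 2$ factorization to any set of rows and columns again has inner dimension $2$, it suffices to produce a sign pattern on the $3\times 4$ block indexed by rows $\{1,2,3\}$ and columns $\{4,5,6,7\}$ that no such $\mathbf{a}_i,\mathbf{b}_j$ can realize; the remaining entries of $\mathbf{A}$ (and its diagonal, which we ignore) are then filled in arbitrarily to form a simple graph.

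First I would fix the block so that rows $1$ and $2$ pin down the geometry of the four column points. Choosing the patterns $(+,+,-,-)$ for node $1$ and $(-,+,+,-)$ for node $2$ against columns $4,5,6,7$ forces $\mathbf{b}_4,\mathbf{b}_5,\mathbf{b}_6,\mathbf{b}_7$ to realize, respectively, the four distinct sign pairs $(+,-),(+,+),(-,+),(-,-)$ relative to $(\mathbf{a}_1,\mathbf{a}_2)$. Because all four combinations occur, $\mathbf{a}_1$ and $\mathbf{a}_2$ must be linearly independent (a parallel pair realizes at most two combinations), and every $\mathbf{b}_j$ is nonzero and off both lines; hence the two lines $\mathbf{a}_1^\perp,\mathbf{a}_2^\perp$ cut the plane into four open sectors, each containing one column point. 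The cyclic order of the sign-pair labels around the origin is $(+,+),(+,-),(-,-),(-,+)$, since adjacent sectors differ in a single sign, so in angular order the points appear as $5,4,7,6$: nodes $4$ and $6$ occupy antipodal sectors, separated on either side by the sectors of $5$ and $7$.

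I would then derive the contradiction from node $3$. Assigning node $3$ the pattern $(+,-,+,-)$ demands an open semicircle containing $\mathbf{b}_4$ and $\mathbf{b}_6$ while excluding $\mathbf{b}_5$ and $\mathbf{b}_7$. But a semicircle containing two points from antipodal sectors must contain the entire intervening sector (the minor arc between the two points sweeps across all of one neighboring sector), and that sector houses either $\mathbf{b}_5$ or $\mathbf{b}_7$; so the semicircle cannot exclude both, which already fails in the boundary case where $\mathbf{b}_4,\mathbf{b}_6$ are exactly antipodal. This is precisely the planar XOR obstruction phrased for lines through the origin, matching the geometric picture in Figure~\ref{threekindsofnodes}(a), and it contradicts the assumed realization. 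The main obstacle — and the step requiring the most care — is making the forcing in the second paragraph airtight: one must verify that the prescribed signs genuinely exclude the degenerate configurations (a vanishing $\mathbf{a}_i$ or $\mathbf{b}_j$, parallel $\mathbf{a}_1,\mathbf{a}_2$, or a point landing on one of the lines), and that the minor arc between $\mathbf{b}_4$ and $\mathbf{b}_6$ really sweeps an entire neighboring sector rather than merely touching it, so that the forbidden point is provably captured.
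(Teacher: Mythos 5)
Your approach is essentially the paper's own (Appendix 3.1), up to transposition: the paper pins four \emph{row} vectors using two \emph{column} vectors $\mathbf{c}_1,\mathbf{c}_2$ (all four rows forced into the open halfplane of $\mathbf{c}_1$, then split two--two by $\mathbf{c}_2$) and shows no $\mathbf{c}_3$ can select one point from each group, while you pin four column vectors into the four sectors cut out by $\mathbf{a}_1^\perp,\mathbf{a}_2^\perp$ and show no $\mathbf{a}_3$ can select the two antipodal sectors. Both reduce to the same obstruction for homogeneous linear separators in $\mathbb{R}^2$, and both must finish by completing the small sign block to a full adjacency in a way that rules out degenerate latent vectors.

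It is at that last step that your write-up has a genuine gap. Under the paper's convention $\operatorname{sign}(0)=-$, a $-$ entry only forces $\langle\mathbf{a}_i,\mathbf{b}_j\rangle\le 0$, so your claim that ``every $\mathbf{b}_j$ is nonzero and off both lines'' is false: $\mathbf{b}_4$ may lie on $\mathbf{a}_2^\perp$, $\mathbf{b}_6$ on $\mathbf{a}_1^\perp$, and, critically, $\mathbf{b}_7$ may be the zero vector, since the column $(-,-,-)$ is consistent with $\mathbf{b}_7=\mathbf{0}$. With $\mathbf{b}_7=\mathbf{0}$ your $3\times 4$ block \emph{is} realizable in rank $2$: place $\mathbf{a}_1,\mathbf{a}_2,\mathbf{a}_3$ at angles $90^\circ,210^\circ,330^\circ$ and $\mathbf{b}_4,\mathbf{b}_5,\mathbf{b}_6$ at $30^\circ,150^\circ,270^\circ$, so that every prescribed $+$ corresponds to a $60^\circ$ separation and every prescribed $-$ to a $180^\circ$ separation, and set $\mathbf{b}_7=\mathbf{0}$. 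Consequently ``the remaining entries of $\mathbf{A}$ are then filled in arbitrarily'' does not suffice: you must fill them so that column $7$ (and each row and column you rely on) contains at least one $+$ outside the block, which forces $\mathbf{b}_7\neq\mathbf{0}$; this is exactly the paper's closing step of choosing the remaining signs so that no relevant row or column is single-signed. Once $\mathbf{b}_7\neq\mathbf{0}$ is secured, your sector argument does close, provided you run it with closed sectors: an open semicircle containing $\mathbf{b}_4$ and $\mathbf{b}_6$ contains one of the two arcs joining them, and each such arc contains an entire closed intervening sector, hence either $\mathbf{b}_5$ (strictly interior to the $(+,+)$ sector) or the nonzero $\mathbf{b}_7$ (lying in the closed $(-,-)$ sector). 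You flagged the degenerate configurations as the step ``requiring the most care,'' but you asserted the nondegeneracy rather than arranging for it, and for $\mathbf{b}_7$ it genuinely fails without the extra construction.
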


\begin{proof}
As relabeling the nodes permutes the rows of $\mathbf{R}$ and columns of $\mathbf{C}$, we arbitrarily fix a permutation without loss of generality. Let $\mathbf{A}$ be such that the first column $\mathbf{A}[:,1]$ admits a singular $-$ only on the last entry and $+$ otherwise, that is,  $(+,\dots,+, -)^\top$. This ensures that the first column of $\mathbf{C}$ is nontrivial, $\mathbf{c}_1 \neq \overline{0}$. Now, let $\mathbf{A}[4:7,2]$ admit two $+$ and two $-$. For $\mathbf{r}_i$ the $i$-th row of $\mathbf{R}$, we denote nodes $i \in [4,7]$ such that $\langle \mathbf{r}_i,\mathbf{c}_1 \rangle > 0,$  $\langle \mathbf{r}_i,\mathbf{c}_2 \rangle > 0$ to be Group $1$, and $\langle \mathbf{r}_i,\mathbf{c}_1 \rangle > 0$, $\langle \mathbf{r}_i,\mathbf{c}_2 \rangle \le 0$ to be Group $2$. Choose precisely $1$ node each from Group $1$ and Group $2$, and assign $+$ to the corresponding entries in $\mathbf{A}[:,3]$. For the two nodes not chosen, we assign $-$. As $\mathbf{c}_1$ and $\mathbf{c}_2$ are distinct and non-zero, there cannot exist $\mathbf{c}_3$ which admits this sign combination (see Figure \ref{threekindsofnodesa} (a) caption for details). Now, choose the remaining signs of $\mathbf{A}$ so that no first seven rows or columns admit a singular sign excluding the diagonals, ensuring that none of $\mathbf{r}_i$ or $\mathbf{c}_i$ are $\overline{0}$ for $i = 1, \dots, 7$.
\end{proof}
\begin{remark}
Instead of learning $\operatorname{sign}(\mathbf{A})$, one may consider learning $-\operatorname{sign}(\mathbf{A})$ instead. But even in this case, if there exists $\mathbf{R},\mathbf{C}$ such that 
\begin{equation}
-\operatorname{sign}(\mathbf{A}) = \operatorname{sign}\left(\left[\colvector{\mathbf{z}_1} \colvector{\mathbf{z}_2} \right] \times   \left[
\begin{array}{cc}
\rowvector{\mathbf{z}_3^\top} \\
\rowvector{\mathbf{z}_4^\top} 
\end{array}
\right] \right)  = \operatorname{sign}\left(\mathbf{RC} \right)    
\end{equation}
for the $\mathbf{A}$ constructed above, then this must imply that 
\begin{equation}
\operatorname{sign}(\mathbf{A}) = \widetilde{\operatorname{sign}}\left(\left[\colvector{\mathbf{z}_1} \colvector{\mathbf{z}_2} \right] \times   \left[
\begin{array}{cc}
\rowvector{-\mathbf{z}_3^\top} \\
\rowvector{-\mathbf{z}_4^\top} 
\end{array}
\right] \right)  = \operatorname{sign}\left(\mathbf{R}(-\mathbf{C}) \right),    
\end{equation}
where $\widetilde{\operatorname{sign}}: \mathbb{R} \to \{+,-\}$ takes the negative sign on $\mathbb{R}_{<0}$. An analogous argument can be made to demonstrate the impossibility of such a result for general $\mathbf{A}$.
\end{remark}

\begin{figure*}[h!]
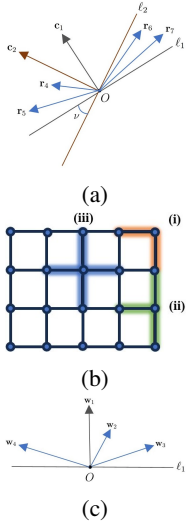

  \begin{minipage}[c]{0.14\textwidth}
    \begin{subfigure}[b]{\textwidth}
    \includegraphics[width=\textwidth]{pp4.jpg}
    \caption{ }
    %\label{no1}
  \end{subfigure}
    \begin{subfigure}[b]{\textwidth}
    \includegraphics[width=\textwidth]{pp3imp.jpg}
    \caption{ }
    %\label{no1}
  \end{subfigure}
  \begin{subfigure}[b]{\textwidth}
    \includegraphics[width=\textwidth]{ppp6m.jpg}
    \caption{ }
    %\label{no1}
  \end{subfigure}
  \end{minipage}\hfill
  \begin{minipage}[c]{0.79\textwidth}
  \captionsetup{singlelinecheck=off}
 \caption{(a) As $\mathbf{c}_1,\mathbf{c}_2$ are distinct and non-zero, the angle $\nu >0$ between $\ell_1,\ell_2$ ($\ell_i$ is the line perpendicular to $\mathbf{c}_i$ passing through the origin $O$) is nontrivial. $\mathbf{r}_4,\mathbf{r}_5$ are in Group $1$, $\mathbf{r}_6,\mathbf{r}_7$ in Group $2$. Suppose we have selected $\mathbf{r}_4,\mathbf{r}_6$ in the proof of Theorem \ref{beginningtheorem}, assigning $A_{43}=A_{63}=+$ to the corresponding entries in $\mathbf{A}[:,3]$ and $A_{53}=A_{73}=-$. There exists no line $\ell_3 \neq \ell_1 $ going through $O$ that leaves $\mathbf{r}_4,\mathbf{r}_6$ on one side of the hyperplane induced by $\ell_3$ while leaving $\mathbf{r}_5,\mathbf{r}_7$ on the other. Thus, no vector $\mathbf{c}_3$ can exist satisfying the imposed sign configuration as $\mathbf{c}_3$ is pairwise linearly independent to $\mathbf{c}_1$. (b) The three kinds of nodes appearing in a planar grid graph. The majority are of form \textbf{(iii)} as $N \to \infty$. (c) $\mathbf{w}_1$ corresponds to the center node in (b)-\textbf{(ii)}, whereas $\mathbf{w}_i$ for $i \in [2,4]$ depicts the neighboring nodes. For each of these nodes to be disconnected, any two distinct $\mathbf{w}_i,\mathbf{w}_j$ vectors for $i,j \in [2,4]$ must form an obtuse angle. However, the pidegonhole principle immediately gives that the existence of three such vectors $\mathbf{w}_i$ is sufficient to violate this requirement, as being connected with node $1$ constrains the three vectors to lying in the interior of a hyperplane induced by $\ell_1$. As the $3$-star graph may be identified as an induced subgraph of the $4$-star graph in (b)-\textbf{(iii)}, the $4$-star graph is rank greater than $2$. }
  \label{threekindsofnodesa}
  \end{minipage}
\end{figure*}

\subsection{Grid graphs are not rank $2$}
\begin{theorem}
Two dimensional planar grid graphs composed of more than one $4$-cycle cannot be rank $2$.    
\end{theorem}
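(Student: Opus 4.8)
The plan is to reduce the claim to a statement about a single small induced subgraph, exploiting the monotonicity of sign rank under taking node-induced subgraphs noted just before Theorem~\ref{gridgrapharenotrank2}. Concretely, if $\operatorname{sign}(\mathbf{A})=\operatorname{sign}(\mathbf{Z}\mathbf{Z}^\top)$ with $\mathbf{Z}\in\mathbb{R}^{N\times 2}$, then deleting the rows outside a vertex subset $S$ together with the matching columns yields $\operatorname{sign}(\mathbf{A}_S)=\operatorname{sign}(\mathbf{Z}_S\mathbf{Z}_S^\top)$, so the principal submatrix is still rank $\le 2$. Contrapositively, exhibiting one induced subgraph of rank greater than $2$ forces the whole grid to have rank greater than $2$. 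Hence it suffices to locate such a subgraph inside any grid with more than one $4$-cycle.

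First I would argue that such a grid contains an induced $3$-star (the claw). Any planar square grid strictly larger than a single $4$-cycle — i.e.\ at least a $2\times 3$ array — contains a node $v$ of degree at least $3$; picking $v$ together with three of its grid neighbors, one checks that in a square grid the neighbors of any node are pairwise non-adjacent, so the induced subgraph on these four vertices is exactly the claw depicted in Figure~\ref{threekindsofnodesa}(b)-(ii),(c). This reduces the problem to showing the claw is not rank $2$, which is the $N=4$ case of Theorem~\ref{realstargraph}.

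Next I would prove this directly with the half-plane/pigeonhole argument. Let $\mathbf{w}_1\in\mathbb{R}^2$ represent the center and $\mathbf{w}_2,\mathbf{w}_3,\mathbf{w}_4$ the leaves; all four are nonzero, since a vanishing vector gives a non-positive inner product with $\mathbf{w}_1$ and breaks a required connection. Connectivity to the center forces $\langle\mathbf{w}_i,\mathbf{w}_1\rangle>0$ for $i\in\{2,3,4\}$, placing the three leaf vectors in the open half-plane $H=\{\mathbf{v}:\langle\mathbf{v},\mathbf{w}_1\rangle>0\}$, whose directions form an open angular interval of length exactly $\pi$. Ordering the three leaf directions, their two consecutive angular gaps are non-negative and their total span is strictly below $\pi$ by openness of $H$; the two gaps therefore cannot both be $\ge \pi/2$, so some pair of leaves subtends an angle strictly less than $\pi/2$ and hence has strictly positive inner product. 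This contradicts the disconnection requirement $\langle\mathbf{w}_i,\mathbf{w}_j\rangle\le 0$, so the claw, and with it the grid, is not rank $2$.

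The hard part will not be the geometric core but the careful bookkeeping around strict versus non-strict inequalities: the disconnection condition tolerates a zero inner product (angle exactly $\pi/2$), so the pigeonhole step must genuinely use that $H$ is \emph{open} to exclude the degenerate placement of three leaves at mutual right angles, which would otherwise escape the contradiction. A secondary point to handle cleanly is confirming, for every admissible grid shape — including thin $2\times n$ grids and the various boundary nodes — that a genuine \emph{induced} claw exists, with all three leaf-pairs non-adjacent in the full graph rather than merely in a convenient drawing.
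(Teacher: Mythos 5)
Your proposal is correct and follows essentially the same route as the paper's proof: reduce to an induced $3$-star (claw) subgraph via monotonicity of sign rank under principal submatrices, then apply the pigeonhole principle to the three leaf vectors confined to the open half-plane determined by the center's latent vector. Your added care about the openness of that half-plane (excluding the degenerate mutually-orthogonal placement) and about exhibiting a genuine induced claw in every admissible grid merely makes explicit details the paper's argument leaves implicit.
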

\begin{proof}
To show that a grid graph is not of rank $2$, we decompose the grid into subgraphs, where we allow for nodes and edges to be repeatedly represented in multiple subgraphs. We need only impose that all edges connecting any two subgraph nodes in the original graph must be preserved in each subgraph. If any of the subgraph structures do not admit a representation of rank $2$, the entire graph cannot be represented by rank $2$. In particular, we will show that the $3$-star graph shown in Figure \ref{threekindsofnodesa} (b)-\textbf{(iii)} possesses this property. For contradiction, assume that there exists a real $\mathbf{Z}$ of latent dimension $f = 2$ such that $\operatorname{sign}(\mathbf{A}) = \operatorname{sign}(\mathbf{ZZ}^\top)$. We note that none of the \textit{row} vectors of $\mathbf{Z}$, $\mathbf{w}_i$, can be $\overline{0}$ as no node is fully disconnected with all other nodes. Let the central node in \textbf{(iii)} be represented by $\mathbf{w}_1$. Then the other three nodes $\mathbf{w}_2$, $\mathbf{w}_3$, $\mathbf{w}_4$ must be bounded by the \textit{interior} of a hyperplane separated by $\ell_1$ (see Figure \ref{threekindsofnodesa} (c) caption), which is impossible due to the pigeonhole principle.    
\end{proof}

\subsection{Complexification of latent space for star graph encoding}
\begin{theorem}%\label{complexstargrapha}
Any star graph is of complex rank $1$.    
\end{theorem}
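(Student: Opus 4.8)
The plan is to prove this constructively by exhibiting a single complex coordinate ($f=1$) that realizes the star adjacency. By the definition of complex sign rank, rank $1$ requires a column $\mathbf{z} = (z_1,\dots,z_N)^\top \in \mathbb{C}^N$ with $\operatorname{sign}(\mathbf{A}) = \operatorname{sign}(\mathfrak{Re}(\mathbf{z}\mathbf{z}^\top))$, where the $(i,j)$ entry is $z_i z_j$ --- note the plain transpose, so \emph{phases add} rather than cancel. Writing $z_j = r_j e^{i\theta_j}$ with $r_j>0$, the relevant quantity is $\mathfrak{Re}(z_i z_j) = r_i r_j \cos(\theta_i+\theta_j)$, whose sign depends only on $\cos(\theta_i+\theta_j)$. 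Hence the problem collapses to choosing $N$ phases, and I may take all $r_j=1$.

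Next I would translate the star structure into phase constraints. Labeling the center as node $1$ and the leaves as $2,\dots,N$, a faithful encoding demands $\cos(\theta_1+\theta_j)>0$ for every leaf $j$ (edge to the center) and $\cos(\theta_i+\theta_j)\le 0$ for every distinct leaf pair $i,j$ (leaves mutually nonadjacent); diagonal entries are ignored since there are no self-loops. The key observation is that the additive phase structure of $\mathbf{z}\mathbf{z}^\top$ lets all leaves share one phase: set $z_1=1$ (so $\theta_1=0$) and $z_j = e^{i\pi/4}$ for all $j\ge 2$. Then $\mathfrak{Re}(z_1 z_j)=\cos(\pi/4)>0$ while $\mathfrak{Re}(z_i z_j)=\cos(\pi/2)=0$, which the convention $\operatorname{sign}(0)=-$ places in the disconnected class. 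This reproduces $\operatorname{sign}(\mathbf{A})$ exactly and is uniform in $N$, yielding complex rank $1$.

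The subtlety worth flagging --- the closest thing to an obstacle in what is essentially a one-line construction --- is that it hinges on two conventions that must be checked against the definition: that the decoder uses $\mathbf{Z}\mathbf{Z}^\top$ (transpose, not conjugate transpose) and that $\operatorname{sign}(0)$ is negative. If one instead used $\mathbf{z}\,\overline{\mathbf{z}}^\top$, the leaf--leaf entries would read $e^{i(\theta_i-\theta_j)}$, which for equal phases equals $1$ and forces spurious leaf--leaf edges; the plain transpose is precisely what makes a shared leaf phase produce a purely imaginary product. To make the argument robust rather than resting on the boundary value $0$, I would also note that replacing $\pi/4$ by $\pi/2-\varepsilon$ for small $\varepsilon>0$ gives $\mathfrak{Re}(z_i z_j)=-\cos(2\varepsilon)<0$ strictly while keeping $\mathfrak{Re}(z_1 z_j)=\sin\varepsilon>0$, so the encoding is stable under perturbation and independent of the sign assigned to $0$. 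Contrasting this single-parameter complex solution against the $\lceil (N+1)/2\rceil$ real lower bound of Theorem~\ref{realstargraph} is the conceptual payoff I would emphasize in closing.
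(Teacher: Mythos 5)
Your construction is exactly the paper's: the paper proves this theorem via a figure whose caption writes $\mathfrak{Re}(\mathbf{z}\mathbf{z}^\top)=\mathbf{z}_r\mathbf{z}_r^\top-\mathbf{z}_{im}\mathbf{z}_{im}^\top$ and places the leaf vectors at a $\pi/4$ angle so that center--leaf products are positive while leaf--leaf products land on $\cos(\pi/2)=0$, which the convention $\operatorname{sign}(0)=-$ classifies as disconnected. Your additional $\pi/2-\varepsilon$ perturbation making the leaf--leaf entries strictly negative is a small robustness improvement over the paper's boundary-value construction, but the argument is essentially identical.
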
 
\begin{proof}
A construction is given in Figure~\ref{starconstruction}.    
\end{proof}

\begin{figure*}[h!]
  \begin{minipage}[c]{0.17\textwidth}
    \includegraphics[width=\textwidth]{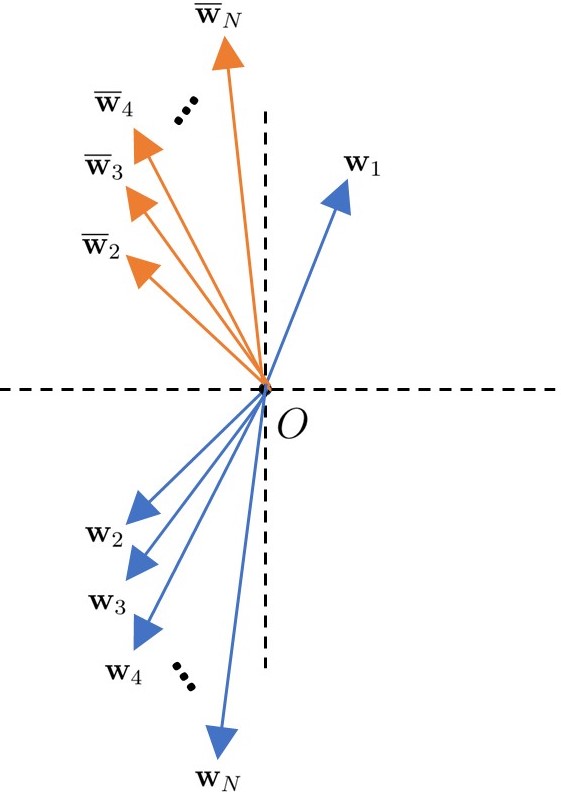}
  \end{minipage}\hfill
  \begin{minipage}[c]{0.79\textwidth}
  \captionsetup{singlelinecheck=off}
  \caption{For $\mathbf{z}_r,$ $\mathbf{z}_{im} \in \mathbb{R}^N$ the real and imaginary components of $\mathbf{z} \in \mathbb{C}^N$,\\$\operatorname{sign}(\mathbf{A}) = \operatorname{sign}\left(\mathfrak{Re}\left(\mathbf{zz}^T\right) \right) = \operatorname{sign}\left(\left[\colvector{\mathbf{z}_r} \colvector{\mathbf{z}_{im}} \right] \times   \left[\begin{array}{cc} \rowvector{\mathbf{z}_r^\top} \\ \rowvector{-\mathbf{z}_{im}^\top} \end{array} \right] \right) = \operatorname{sign}\left(\mathbf{RC}\right). $ \\ Denote the $i$-th row of $\mathbf{R}$ as $\mathbf{w}_i$, and its reflection in the axis of the first coordinate to be $\overline{\mathbf{w}}_i$. The representation of an $(N-1)$-star graph can be constructed in $\mathbb{R}^2$ as shown in the left. $\mathbf{w}_2$ forms a $\pi/4$ angle with the horizontal axis.}
  \label{starconstruction}
  \end{minipage}
\end{figure*}

\section{Hyperparameter Selection}

\subsection{On minimal $\ell_2$ regularization}

While being robust to alternations in the hidden dimension $h_1$ (Figure~\ref{largeplot} (b,f)), the architectures presented are more sensitive to $\ell_2$ regularization. Figure~\ref{largeplot} (a-e) demonstrates the effect of varying the regularization rate $\lambda = 1, 10^{-7}, 10^{-14}$, as well as using the squared Frobenius norm in the loss. Increasing regularization inhibits graph memorization as the backpropagation is less able to respect the insignificance of the binary encoding in the graph adjacency. In fact, regularization was utilized only to forbid entries of $\tilde{\mathbf{A}}$ from diverging in magnitude during training. We note also that choosing a smaller learning rate $\gamma = 10^{-4}$ enables more precise memorization than higher learning rates (e.g. Figure~\ref{largeplot} (a,e)). 

\begin{figure}[h!]
  \begin{subfigure}[b]{0.32\textwidth}
    \includegraphics[width=\textwidth]{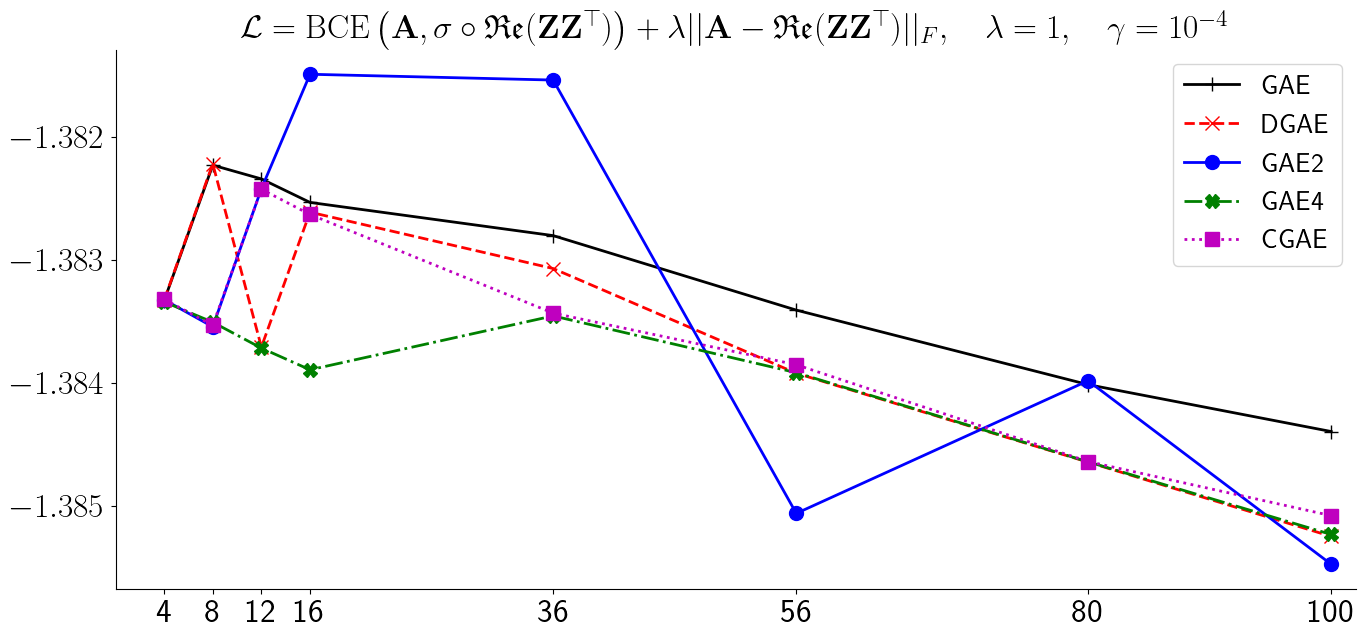}
    \caption{$\lambda = 1$}
    %\label{thisexample}
  \end{subfigure}
  \hfill
    \begin{subfigure}[b]{0.32\textwidth}
    \includegraphics[width=\textwidth]{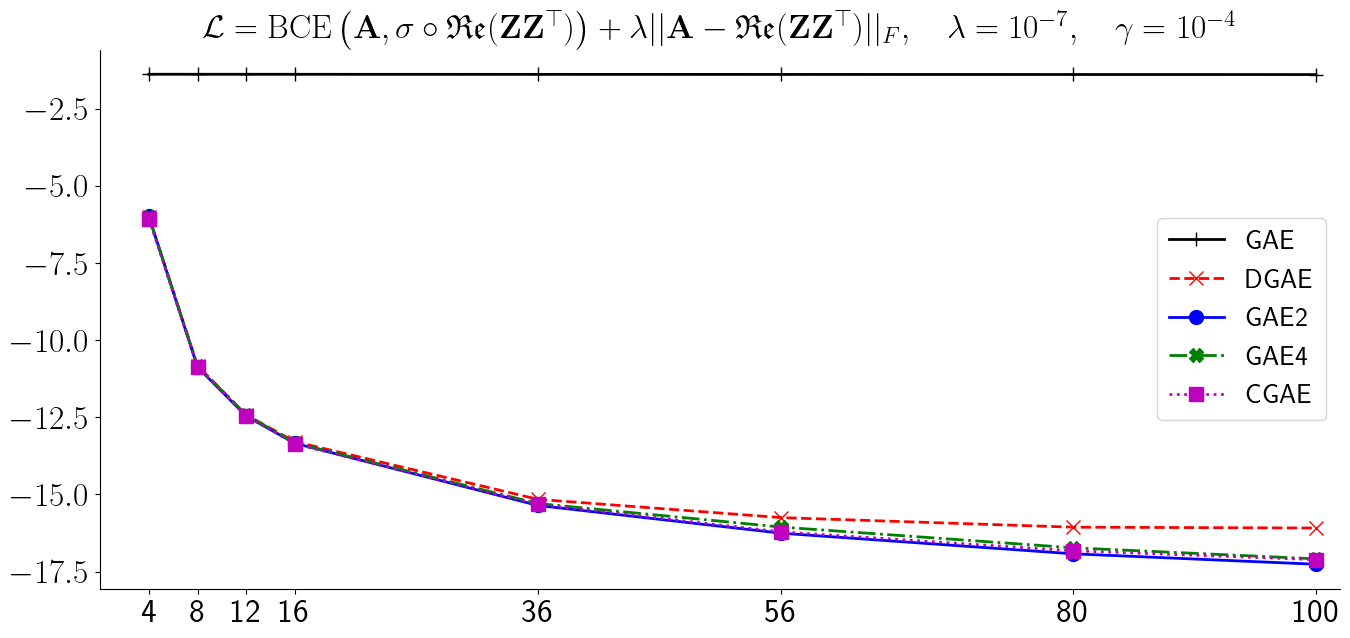}
    \caption{$\lambda = 10^{-7}$}
    %\label{thisexample}
  \end{subfigure}
  \hfill
    \begin{subfigure}[b]{0.32\textwidth}
    \includegraphics[width=\textwidth]{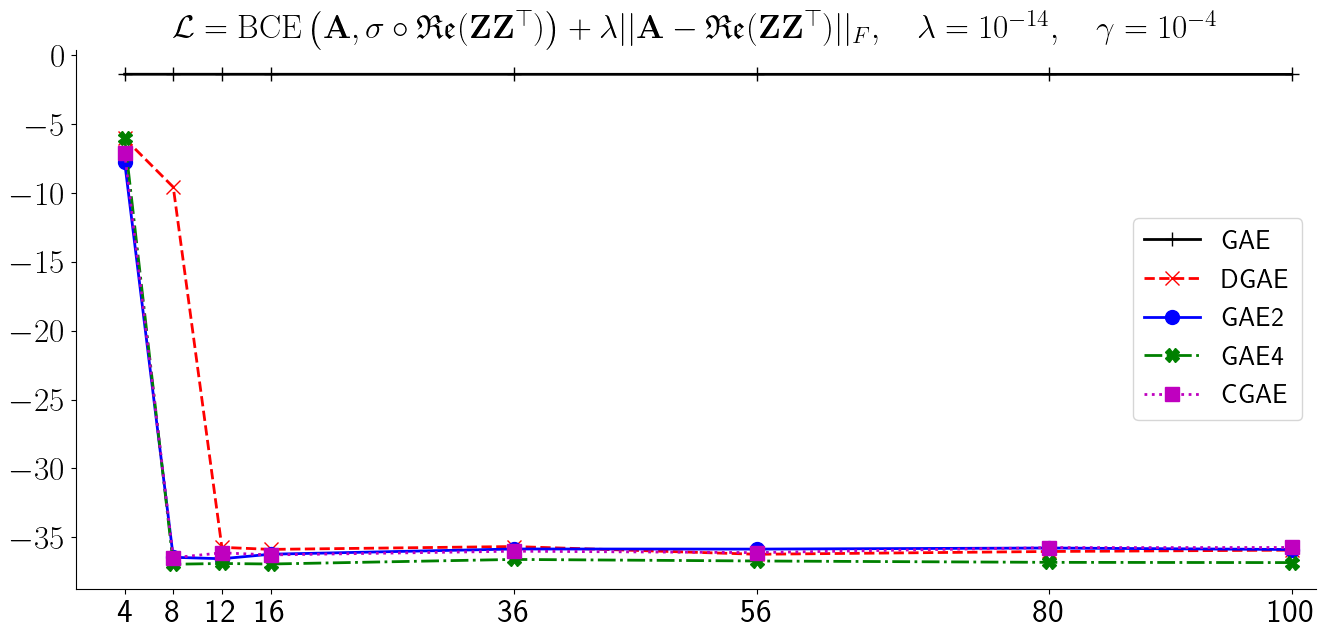}
    \caption{$\lambda = 10^{-14}$}
    %\label{thisexample}
  \end{subfigure}
  \hfill
    \begin{subfigure}[b]{0.32\textwidth}
    \includegraphics[width=\textwidth]{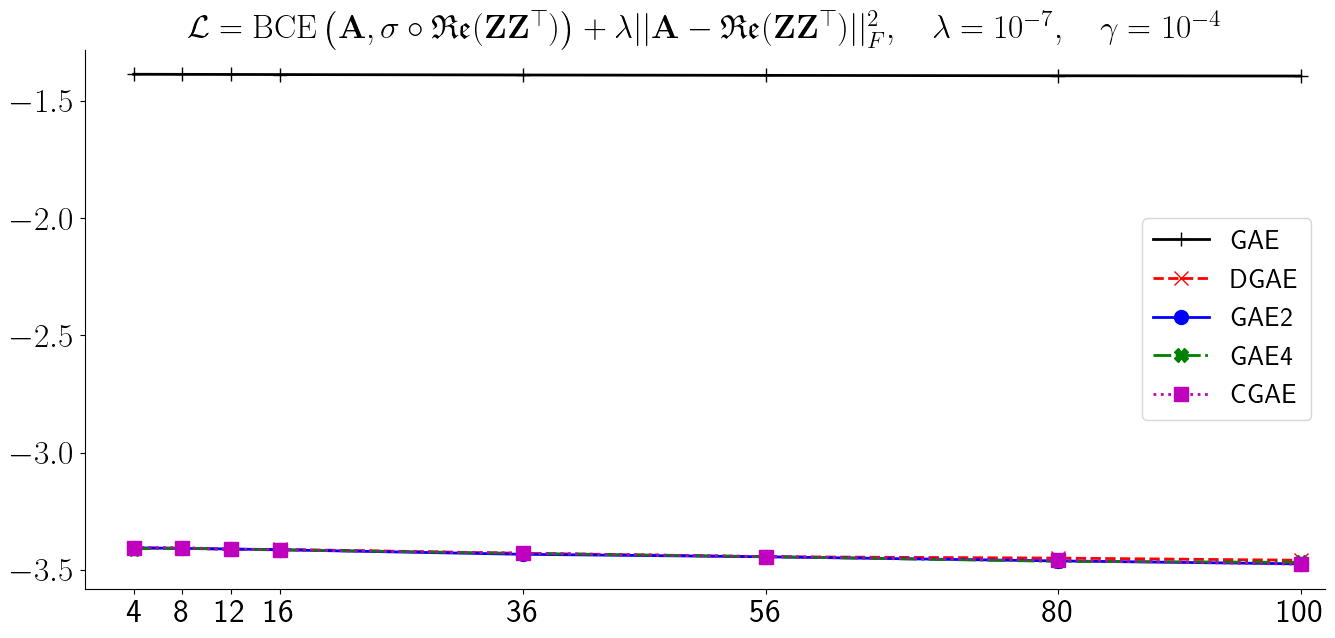}
    \caption{Norm squared loss}
    %\label{thisexample}
  \end{subfigure}
  \hfill
    \begin{subfigure}[b]{0.32\textwidth}
    \includegraphics[width=\textwidth]{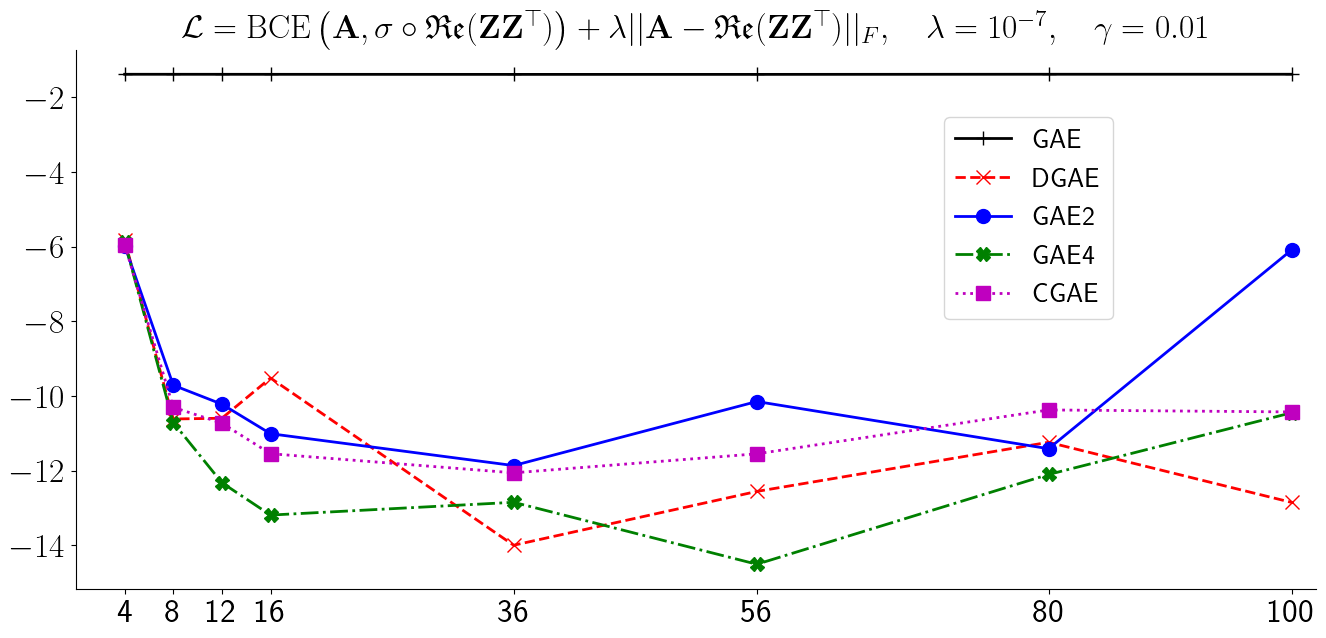}
    \caption{$\gamma = 10^{-2}$}
    %\label{thisexample}
  \end{subfigure}
  \hfill
    \begin{subfigure}[b]{0.32\textwidth}
    \includegraphics[width=\textwidth]{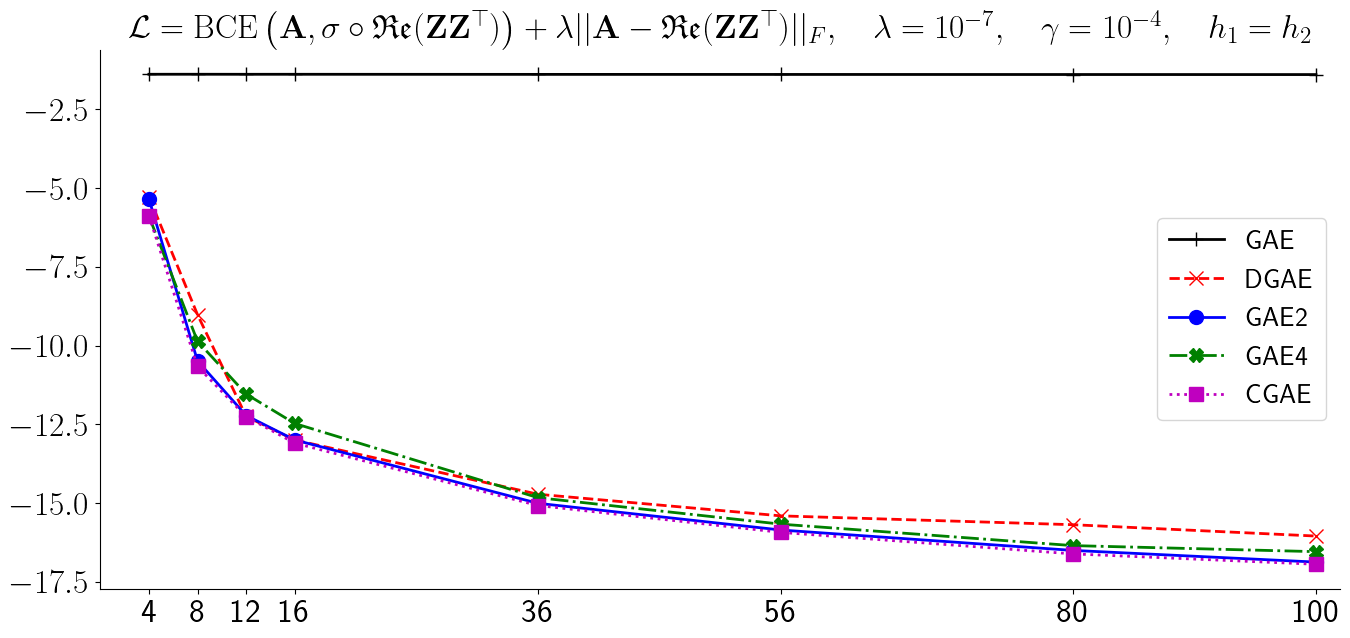}
    \caption{$h_1 = h_2$}
    %\label{thisexample}
  \end{subfigure}
  \hfill
    \begin{subfigure}[b]{0.24\textwidth}
    \includegraphics[width=\textwidth]{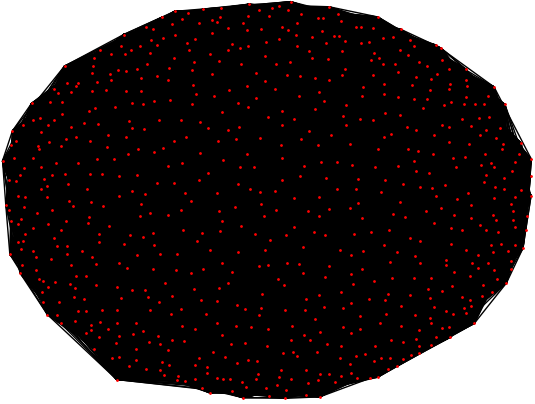}
    \caption{GAE, $h_2 = 597$}
    %\label{thisexample}
  \end{subfigure}
  \hfill
    \begin{subfigure}[b]{0.24\textwidth}
    \includegraphics[width=\textwidth]{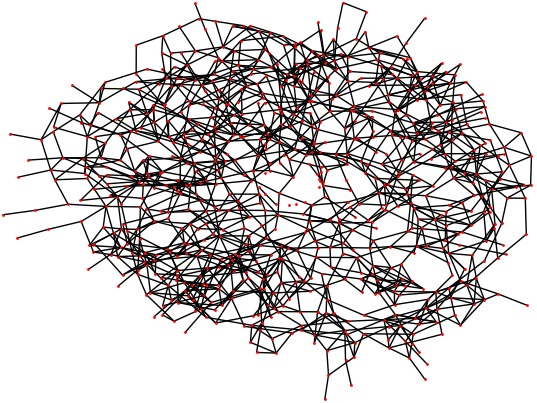}
    \caption{DGAE, $h_2 = 4$}
    %\label{thisexample}
  \end{subfigure}
  \hfill
  \begin{subfigure}[b]{0.24\textwidth}
    \includegraphics[width=\textwidth]{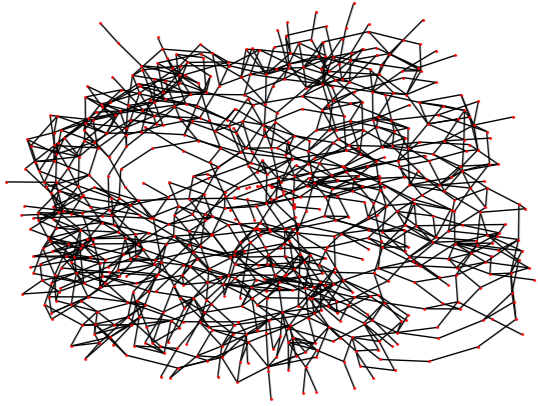}
    \caption{2GAE, $h_2 = 4$}
    %\label{thisexample}
  \end{subfigure}
  \hfill
    \begin{subfigure}[b]{0.24\textwidth}
    \includegraphics[width=\textwidth]{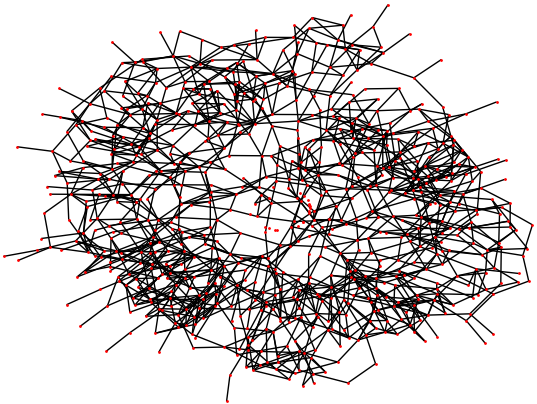}
    \caption{CGAE, $h_2 = 4$}
    %\label{thisexample}
  \end{subfigure}
  \hfill
  \begin{subfigure}[b]{0.24\textwidth}
    \includegraphics[width=\textwidth]{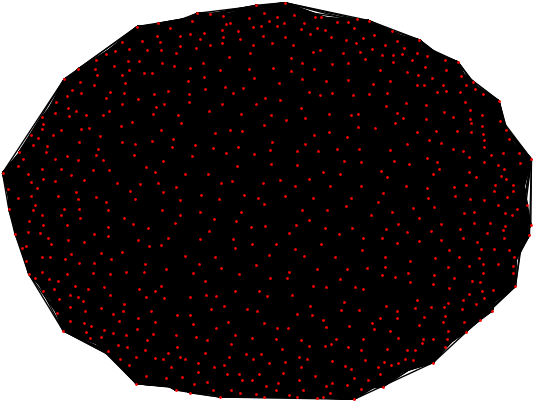}
    \caption{GAE, $h_2 = 597$}
    %\label{thistwoexample}
  \end{subfigure}
  \hfill
  \begin{subfigure}[b]{0.24\textwidth}
    \includegraphics[width=\textwidth]{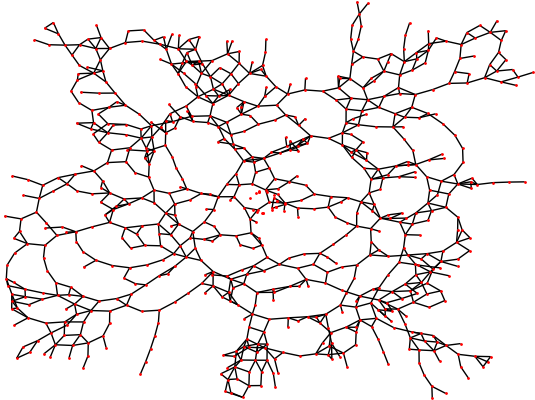}
    \caption{DGAE, $h_2 = 4$}
    %\label{thistwoexample}
  \end{subfigure}
  \hfill
  \begin{subfigure}[b]{0.24\textwidth}
    \includegraphics[width=\textwidth]{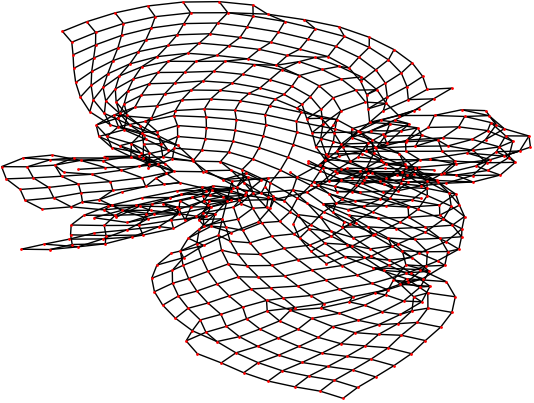}
    \caption{2GAE, $h_2 = 4$}
    %\label{thisexample}
  \end{subfigure}
  \hfill
    \begin{subfigure}[b]{0.24\textwidth}
    \includegraphics[width=\textwidth]{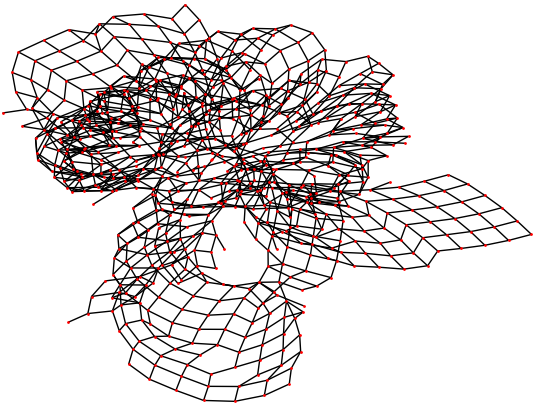}
    \caption{CGAE, $h_2 = 4$}
    %\label{thisexample}
  \end{subfigure}
  %\hfill
\caption{All numerical plots display the log-normalized distance $\log\left(d(\mathbf{A}, \tilde{\mathbf{A}})\right) = \log\left(||\mathbf{A}- \sigma(\tilde{\mathbf{A}})||_{F}^2/N^2\right)$ against latent dimension $h_2$. Training is done for $30000$ epochs on NVIDIA GeForce RTX 3060 GPU. (a-c) varies the regularization $\lambda = 1, 10^{-7}, 10^{-14}$ and observes improved memorization capabilities for small $\lambda$. Examples of probabilistically reconstructed graphs from latent projections in (b,c) are given in (g-j) ($\lambda = 10^{-7}$) and (k-n) ($\lambda = 10^{-14}$). (d) strengthens regularization drastically by squaring the norm, and (e) increases the learning rate by a factor of $\mathcal{O}(10^2)$. The setting in (f) is identical to (b) except that first layer hidden dimensions in all encoders were set to be equal to the latent dimensions each encoder projects to, while $h_1 = 120$ was taken for (a-e). The $27 \times 27$ long grid was chosen as a representative (see also Figure~\ref{lastplot}) but the trends depicted generalize consistently to all grid/chain graphs tested. As all test graphs in the paper are featureless, the latent embedding was learned from the one-hot identity feature $\mathbf{X} = \mathbf{I}$ which trivializes node relational information.}
  \label{largeplot}
\end{figure}

\begin{figure}[h!]
  \begin{subfigure}[b]{0.33\textwidth}
    \includegraphics[width=\textwidth]{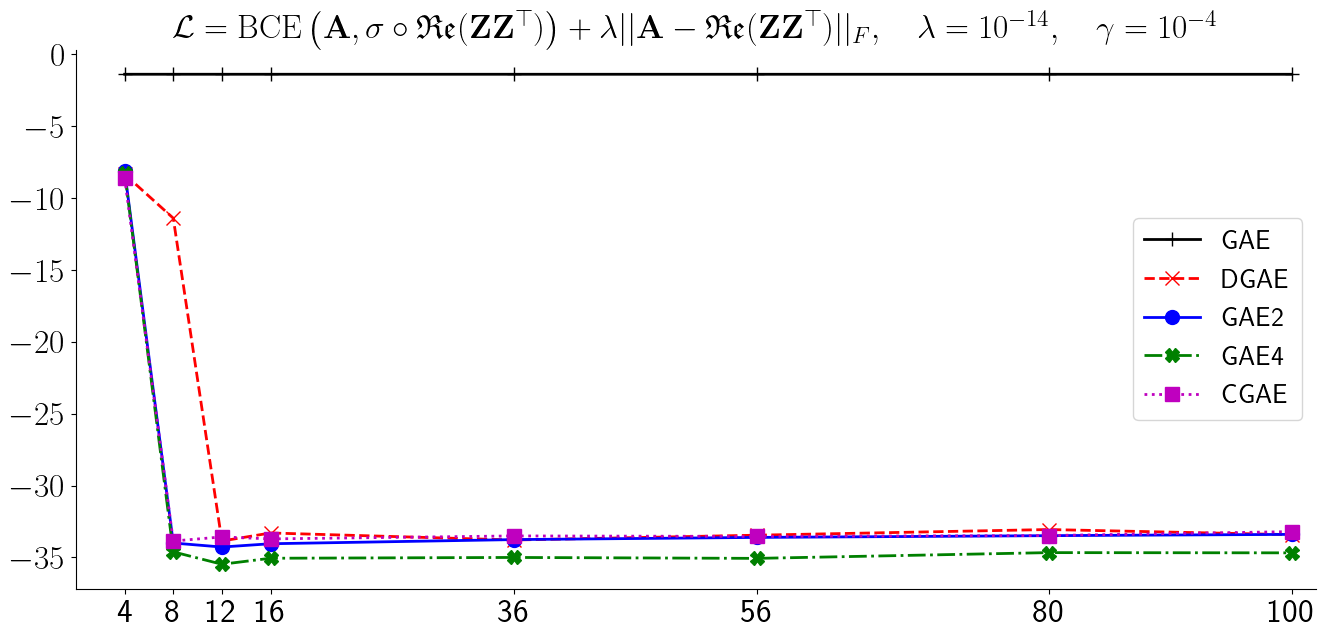}
    \caption{Log-normal distance}
    %\label{thisexample}
  \end{subfigure}
  \hfill
    \begin{subfigure}[b]{0.33\textwidth}
    \includegraphics[width=\textwidth]{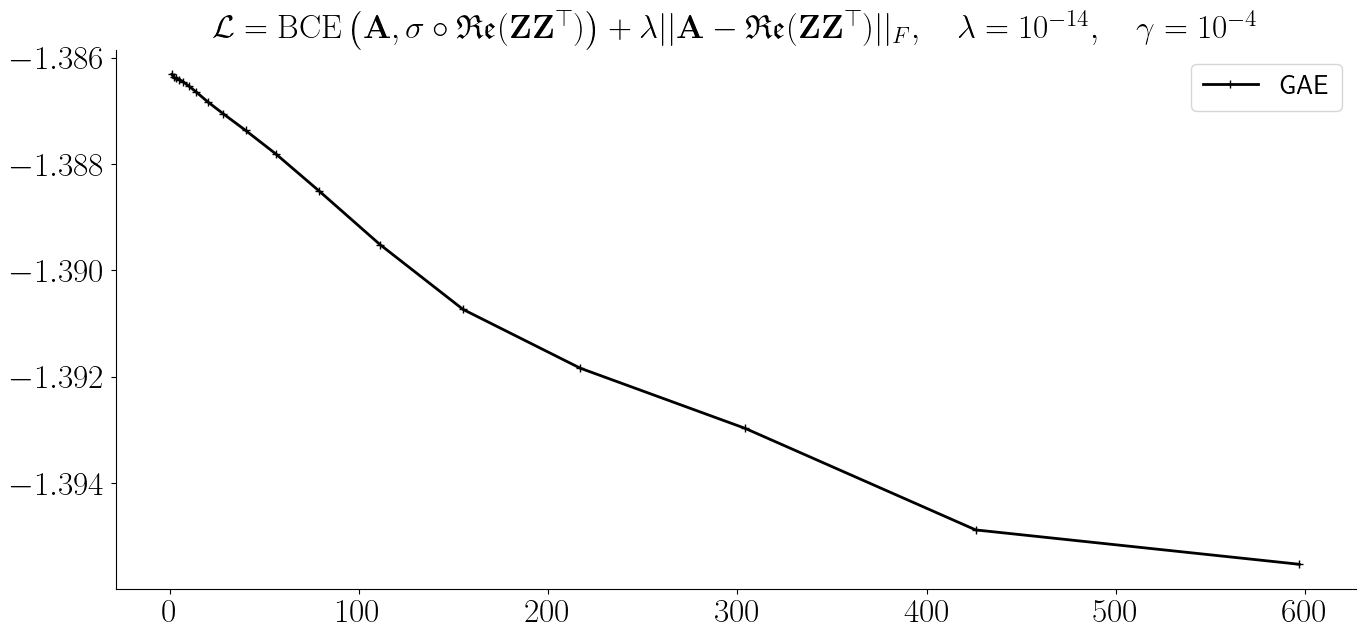}
    \caption{Log-normal distance}
    %\label{thisexample}
  \end{subfigure}
    \hfill
    \begin{subfigure}[b]{0.23\textwidth}
    \includegraphics[width=\textwidth]{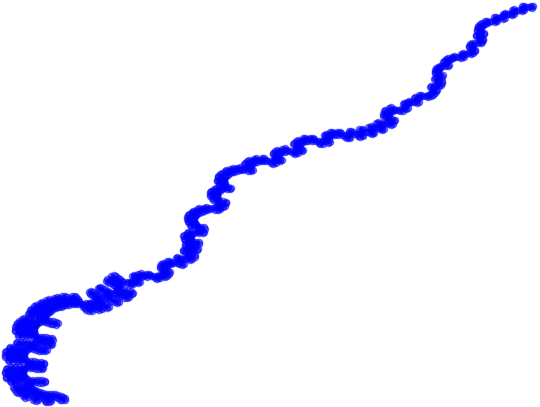}
    \caption{Original}
    %\label{thisexample}
  \end{subfigure}
  \hfill
    \begin{subfigure}[b]{0.24\textwidth}
    \includegraphics[width=\textwidth]{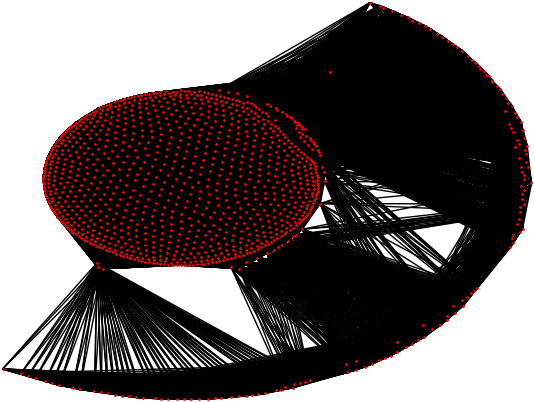}
    \caption{VAE, $h_2 = 4$}
    %\label{thisexample}
  \end{subfigure}
  \hfill
    \begin{subfigure}[b]{0.24\textwidth}
    \includegraphics[width=\textwidth]{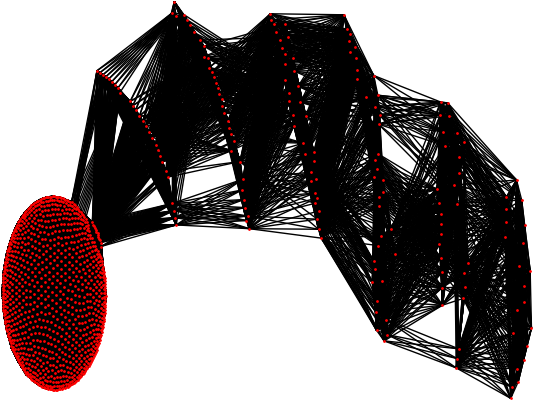}
    \caption{VAE, $h_2 = 8$}
    %\label{thisexample}
  \end{subfigure}
  \hfill
    \begin{subfigure}[b]{0.24\textwidth}
    \includegraphics[width=\textwidth]{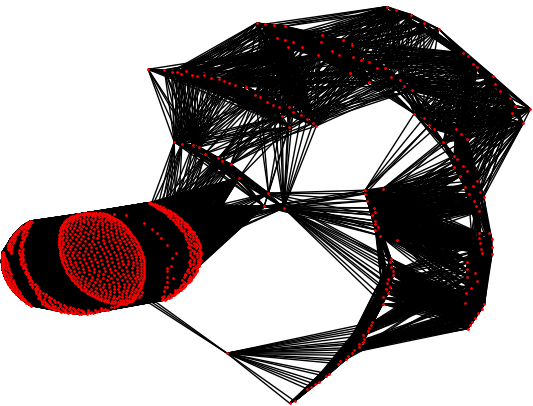}
    \caption{VAE, $h_2 = 12$}
    %\label{thisexample}
  \end{subfigure}
  \hfill
    \begin{subfigure}[b]{0.24\textwidth}
    \includegraphics[width=\textwidth]{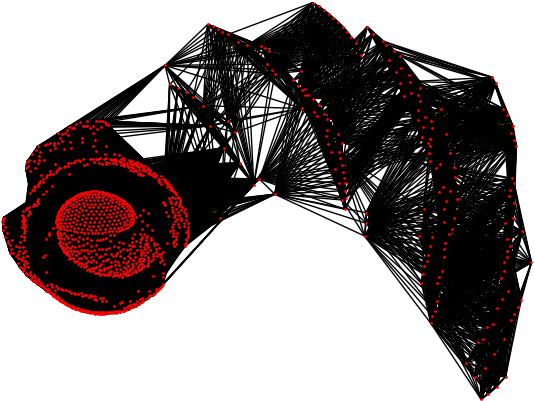}
    \caption{VAE, $h_2 = 16$}
    %\label{thisexample}
  \end{subfigure}
  \hfill
  \begin{subfigure}[b]{0.24\textwidth}
    \includegraphics[width=\textwidth]{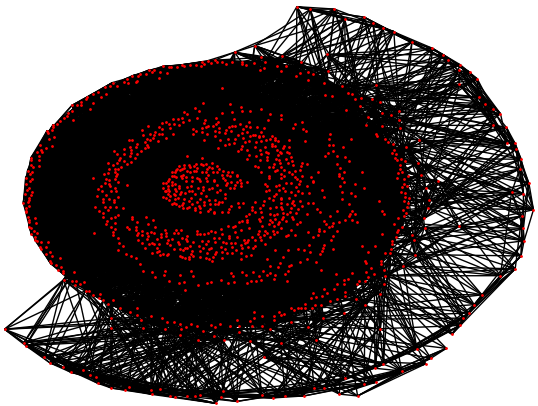}
    \caption{VAE, $h_2 = 217$}
    %\label{thisexample}
  \end{subfigure}
  \hfill
    \begin{subfigure}[b]{0.24\textwidth}
    \includegraphics[width=\textwidth]{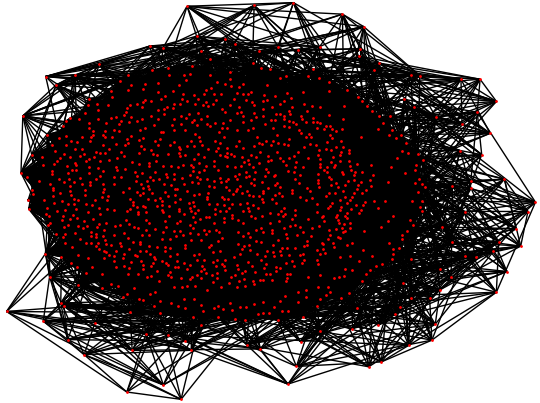}
    \caption{VAE, $h_2 = 304$}
    %\label{thisexample}
  \end{subfigure}
  \hfill
  \begin{subfigure}[b]{0.24\textwidth}
    \includegraphics[width=\textwidth]{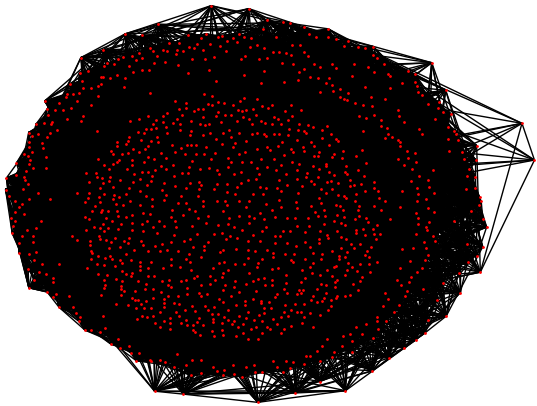}
    \caption{VAE, $h_2 = 426$}
    %\label{thistwoexample}
  \end{subfigure}
  \hfill
  \begin{subfigure}[b]{0.24\textwidth}
    \includegraphics[width=\textwidth]{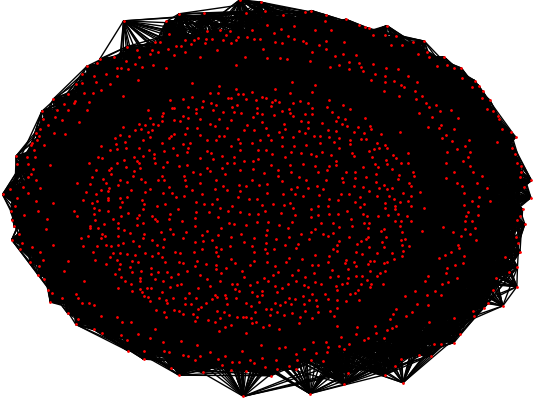}
    \caption{VAE, $h_2 = 597$}
    %\label{thistwoexample}
  \end{subfigure}
  \hfill
  \begin{subfigure}[b]{0.24\textwidth}
    \includegraphics[width=\textwidth]{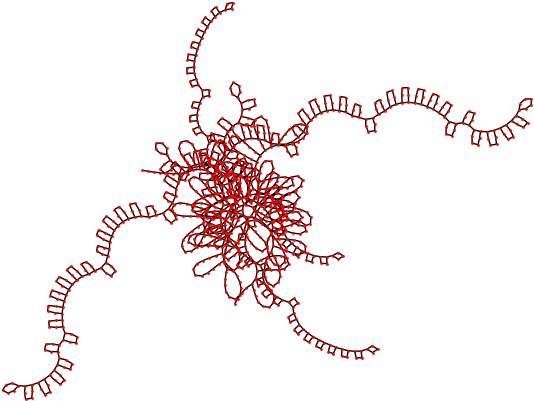}
    \caption{DVAE, $h_2 = 8$}
    %\label{thisexample}
  \end{subfigure}
  \hfill
    \begin{subfigure}[b]{0.24\textwidth}
    \includegraphics[width=\textwidth]{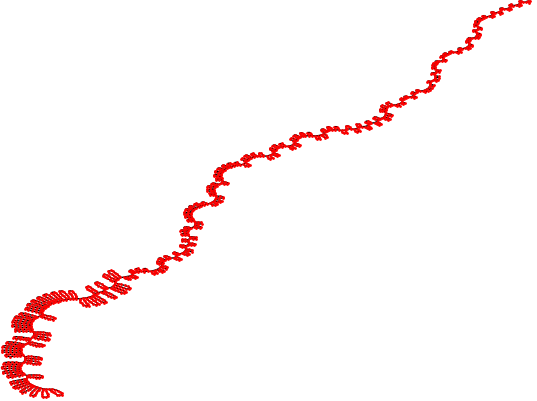}
    \caption{2VAE, $h_2 = 8$}
    %\label{thisexample}
  \end{subfigure}
  \hfill
  \begin{subfigure}[b]{0.24\textwidth}
    \includegraphics[width=\textwidth]{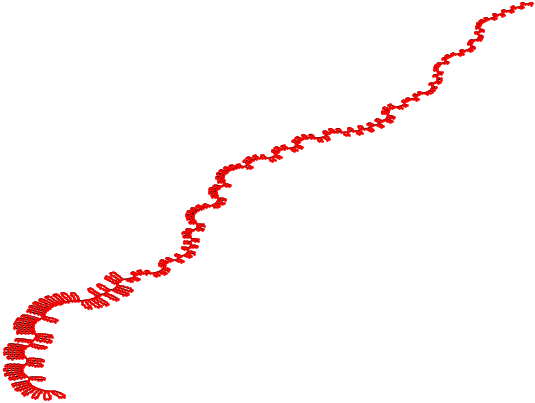}
    \caption{4VAE, $h_2 = 8$}
    %\label{thistwoexample}
  \end{subfigure}
  \hfill
  \begin{subfigure}[b]{0.24\textwidth}
    \includegraphics[width=\textwidth]{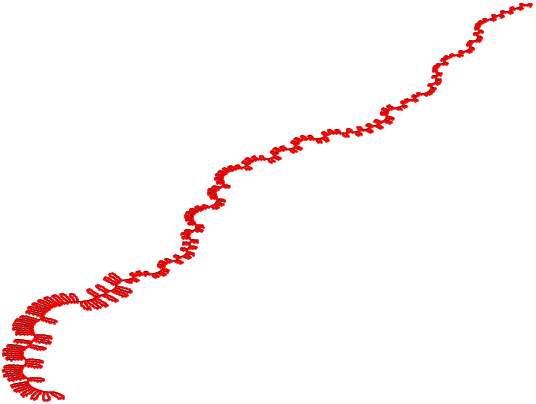}
    \caption{CVAE, $h_2 = 8$}
    %\label{thistwoexample}
  \end{subfigure}
   \hfill
  \begin{subfigure}[b]{0.24\textwidth}
    \includegraphics[width=\textwidth]{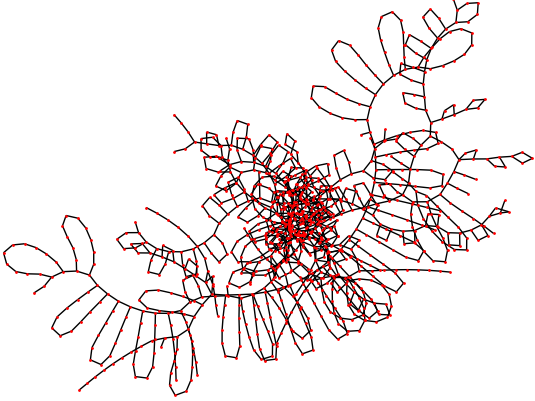}
    \caption{DVAE, $h_2 = 4$}%$0.2$
    %\label{thisexample}
  \end{subfigure}
  \hfill
    \begin{subfigure}[b]{0.24\textwidth}
    \includegraphics[width=\textwidth]{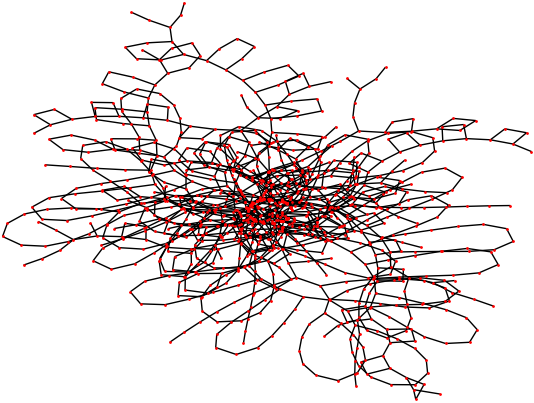}
    \caption{2VAE, $h_2 = 4$}%0.4
    %\label{thisexample}
  \end{subfigure}
  \hfill
  \begin{subfigure}[b]{0.24\textwidth}
    \includegraphics[width=\textwidth]{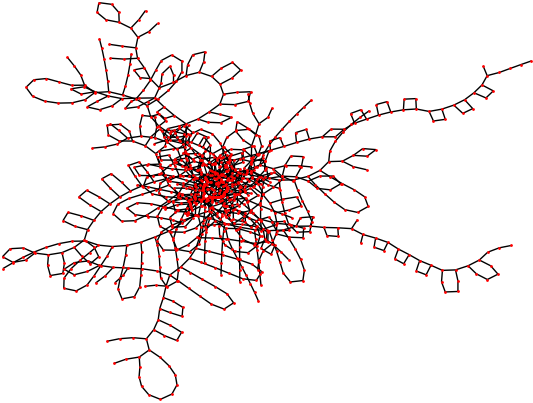}
    \caption{4VAE, $h_2 = 4$}%0.7
    %\label{thistwoexample}
  \end{subfigure}
  \hfill
  \begin{subfigure}[b]{0.24\textwidth}
    \includegraphics[width=\textwidth]{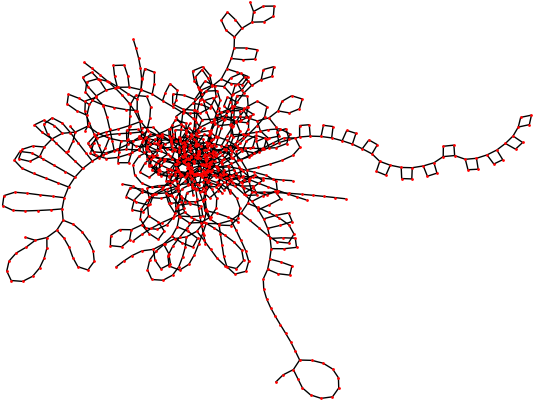}
    \caption{CVAE, $h_2 = 4$} %1.2 200to
    %\label{thistwoexample}
  \end{subfigure}
   \hfill
  \begin{subfigure}[b]{0.24\textwidth}
    \includegraphics[width=\textwidth]{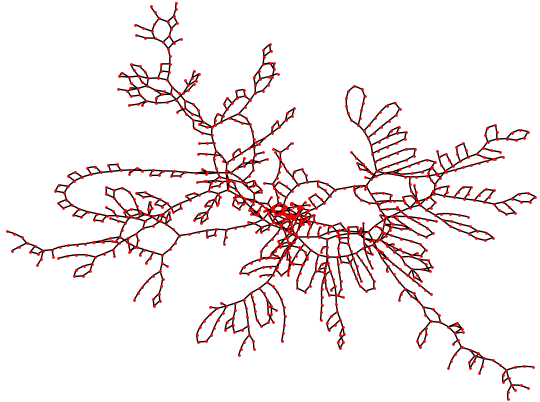}
    \caption{DVAE, $h_2 = 4$}
    %\label{thisexample}
  \end{subfigure}
  \hfill
    \begin{subfigure}[b]{0.24\textwidth}
    \includegraphics[width=\textwidth]{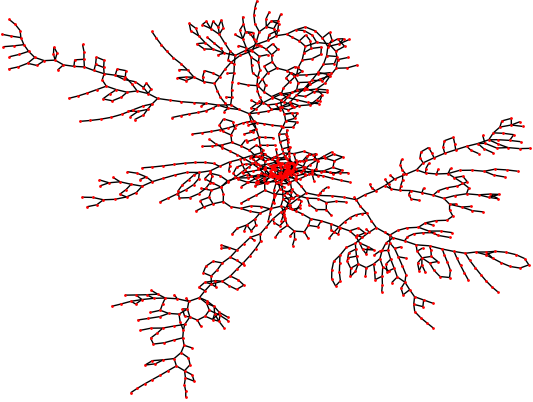}
    \caption{2VAE, $h_2 = 4$}
    %\label{thisexample}
  \end{subfigure}
  \hfill
  \begin{subfigure}[b]{0.24\textwidth}
    \includegraphics[width=\textwidth]{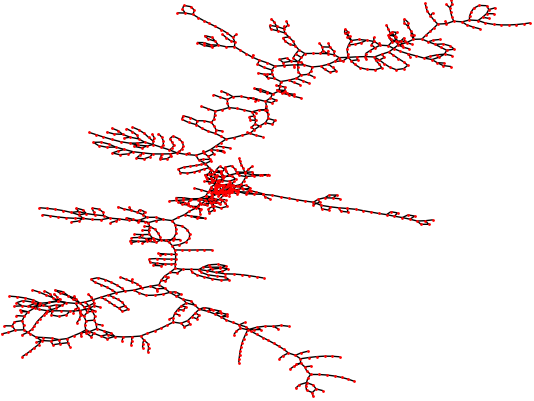}
    \caption{4VAE, $h_2 = 4$}
    %\label{thistwoexample}
  \end{subfigure}
  \hfill
  \begin{subfigure}[b]{0.24\textwidth}
    \includegraphics[width=\textwidth]{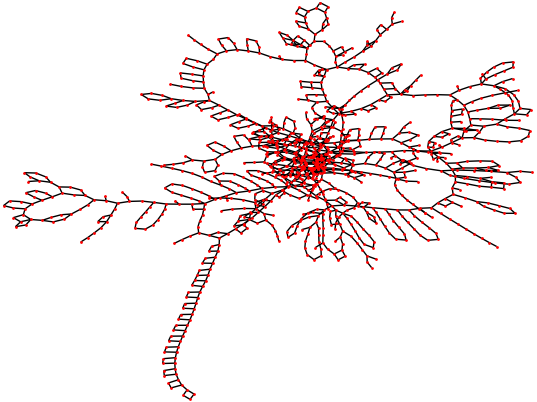}
    \caption{CVAE, $h_2 = 4$}
    %\label{thistwoexample}
  \end{subfigure}
\caption{The long diverse chain (c) is comprised of linking $45$ $4$-cycles, $80$ $6$-cycles, and $45$ $12$-cycles totalling $1200$ nodes. Training is performed under identical hyperparameter settings as Figure~\ref{largeplot} (c), and the VAE setting in Figure $6$ (main text) for (b),(h-k) only, i.e. $h_2 = \lfloor{1.4}^n\rfloor$ for $n = 2:19$ for $h_1$ node count. (l-o),(t-w) give probabilistic reconstructions whereas (d-k),(p-s) give deterministic sign reconstructions based on the low-dimensional latent embedding. In particular, we note the contrast between the reconstruction capabilities illuminated in (h-k) and (l-o). }
  %\label{largeplot}
\end{figure}

\begin{figure}[h!]
      \begin{subfigure}[b]{0.24\textwidth}
    \includegraphics[width=\textwidth]{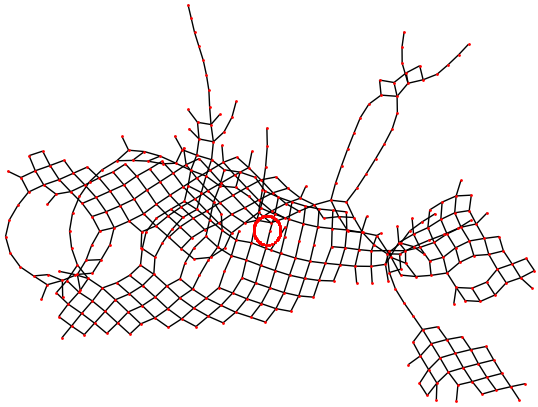}
    \caption{DVAE, $h_2 = 4$}
    %\label{thisexample}
  \end{subfigure}
  \hfill
      \begin{subfigure}[b]{0.24\textwidth}
    \includegraphics[width=\textwidth]{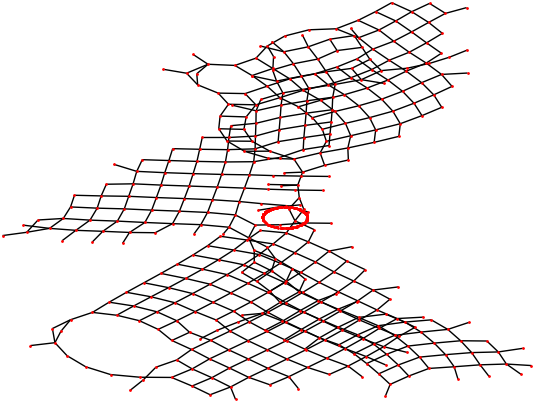}
    \caption{2VAE, $h_2 = 4$}
    %\label{thisexample}
  \end{subfigure}
  \hfill
      \begin{subfigure}[b]{0.24\textwidth}
    \includegraphics[width=\textwidth]{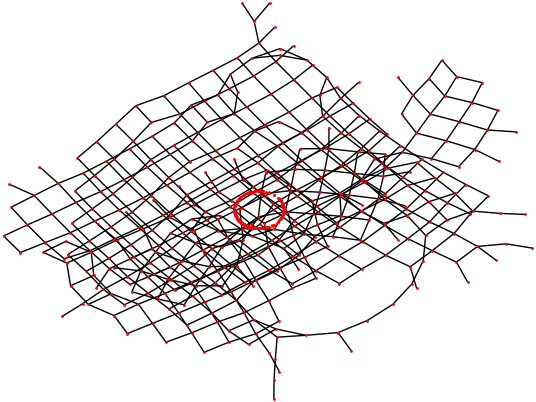}
    \caption{4VAE, $h_2 = 4$}
    %\label{thisexample}
  \end{subfigure}
  \hfill
      \begin{subfigure}[b]{0.24\textwidth}
    \includegraphics[width=\textwidth]{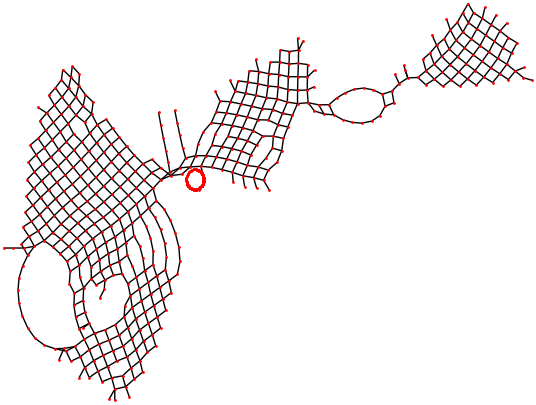}
    \caption{CVAE, $h_2 = 4$}
    %\label{thisexample}
  \end{subfigure}
  \hfill
      \begin{subfigure}[b]{0.24\textwidth}
    \includegraphics[width=\textwidth]{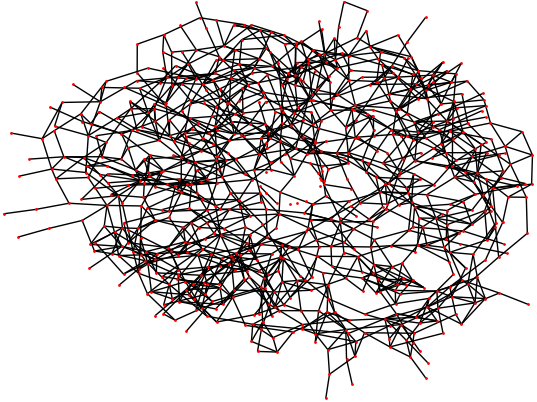}
    \caption{DVAE, $h_2 = 4$}
    %\label{thisexample}
  \end{subfigure}
  \hfill
      \begin{subfigure}[b]{0.24\textwidth}
    \includegraphics[width=\textwidth]{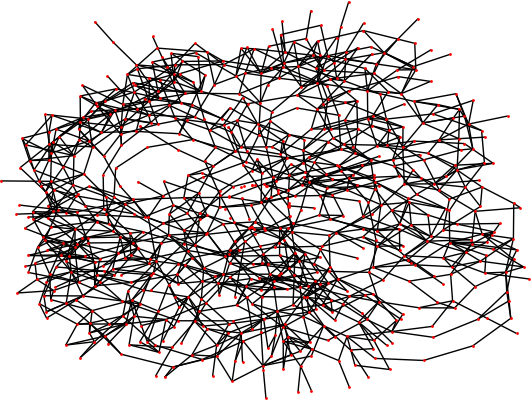}
    \caption{2VAE, $h_2 = 4$}
    %\label{thisexample}
  \end{subfigure}
  \hfill
      \begin{subfigure}[b]{0.24\textwidth}
    \includegraphics[width=\textwidth]{ch2GAEr4prob.png}
    \caption{4VAE, $h_2 = 4$}
    %\label{thisexample}
  \end{subfigure}
  \hfill
      \begin{subfigure}[b]{0.24\textwidth}
    \includegraphics[width=\textwidth]{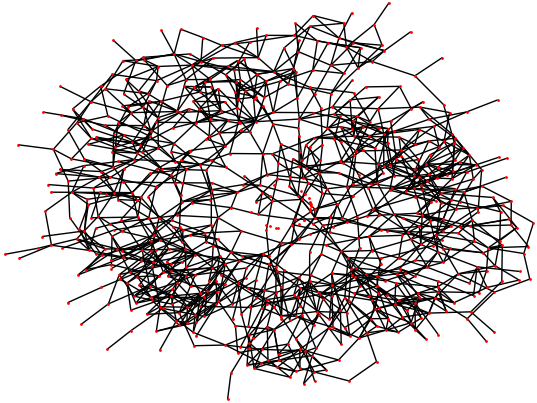}
    \caption{CVAE, $h_2 = 4$}
    %\label{thisexample}
  \end{subfigure}
  \hfill
    \begin{subfigure}[b]{0.24\textwidth}
    \includegraphics[width=\textwidth]{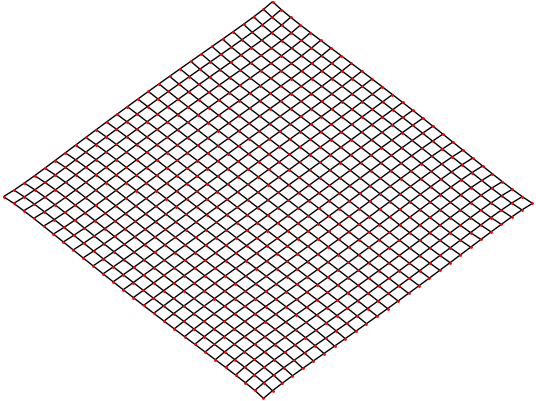}
    \caption{DVAE, $h_2 = 8$}
    %\label{thisexample}
  \end{subfigure}
  \hfill
    \begin{subfigure}[b]{0.24\textwidth}
    \includegraphics[width=\textwidth]{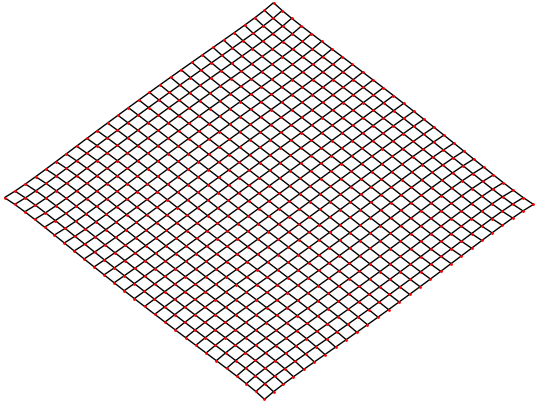}
    \caption{2VAE, $h_2 = 8$}
    %\label{thisexample}
  \end{subfigure}
  \hfill
  \begin{subfigure}[b]{0.24\textwidth}
    \includegraphics[width=\textwidth]{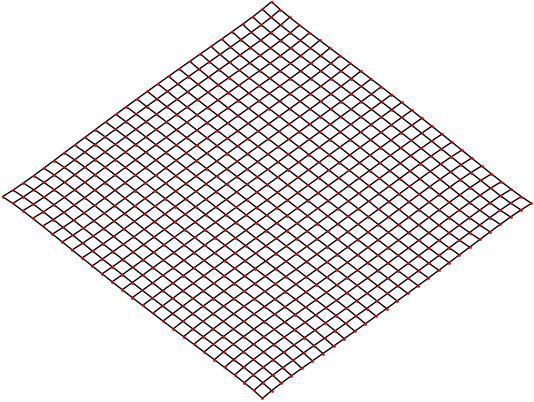}
    \caption{4VAE, $h_2 = 8$}
    %\label{thisexample}
  \end{subfigure}
  \hfill
    \begin{subfigure}[b]{0.24\textwidth}
    \includegraphics[width=\textwidth]{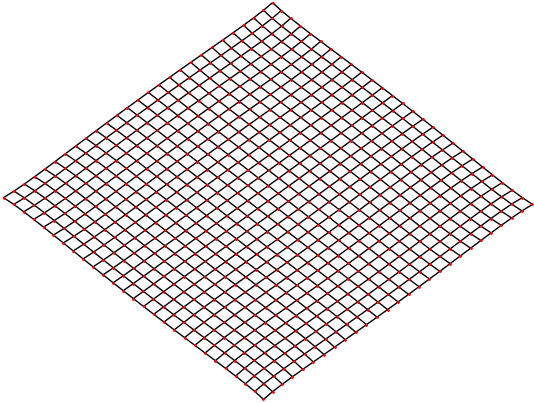}
    \caption{CVAE, $h_2 = 8$}
    %\label{thisexample}
  \end{subfigure}
    \hfill
    \begin{subfigure}[b]{0.24\textwidth}
    \includegraphics[width=\textwidth]{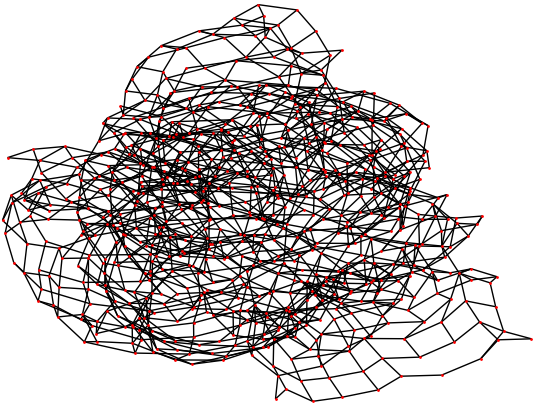}
    \caption{DVAE, $h_2 = 8$}
    %\label{thisexample}
  \end{subfigure}
  \hfill
    \begin{subfigure}[b]{0.24\textwidth}
    \includegraphics[width=\textwidth]{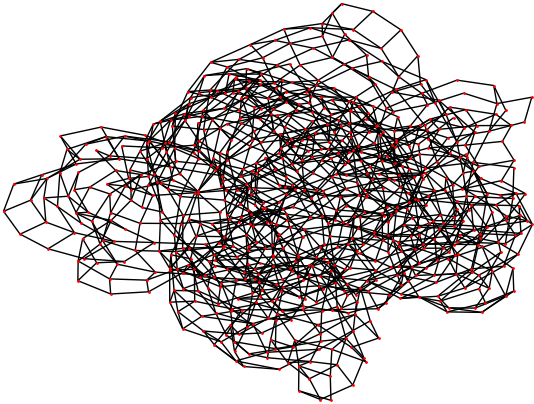}
    \caption{2VAE, $h_2 = 8$}
    %\label{thisexample}
  \end{subfigure}
  \hfill
  \begin{subfigure}[b]{0.24\textwidth}
    \includegraphics[width=\textwidth]{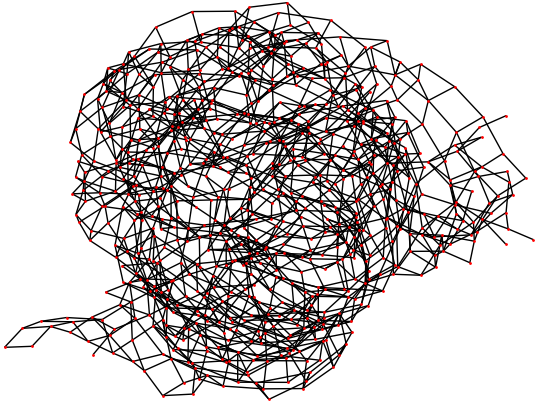}
    \caption{4VAE, $h_2 = 8$}
    %\label{thisexample}
  \end{subfigure}
  \hfill
    \begin{subfigure}[b]{0.24\textwidth}
    \includegraphics[width=\textwidth]{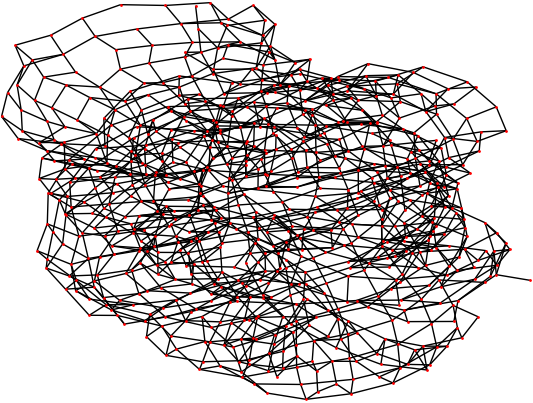}
    \caption{CVAE, $h_2 = 8$}
    %\label{thisexample}
  \end{subfigure}
  \hfill
  \begin{subfigure}[b]{0.24\textwidth}
    \includegraphics[width=\textwidth]{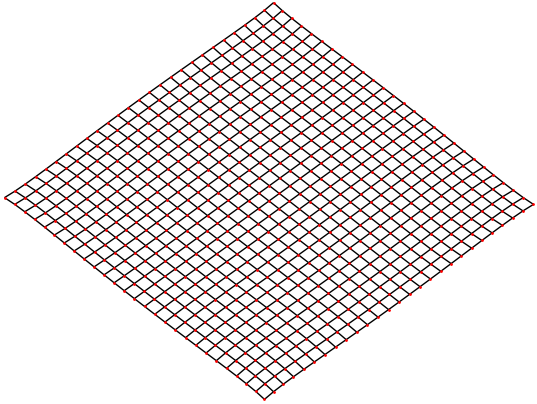}
    \caption{DVAE, $h_2 = 12$}
    %\label{thistwoexample}
  \end{subfigure}
  \hfill
  \begin{subfigure}[b]{0.24\textwidth}
    \includegraphics[width=\textwidth]{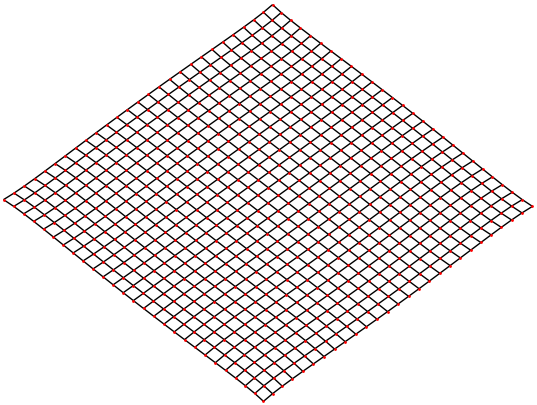}
    \caption{2VAE, $h_2 = 12$}
    %\label{thistwoexample}
  \end{subfigure}
  \hfill
  \begin{subfigure}[b]{0.24\textwidth}
    \includegraphics[width=\textwidth]{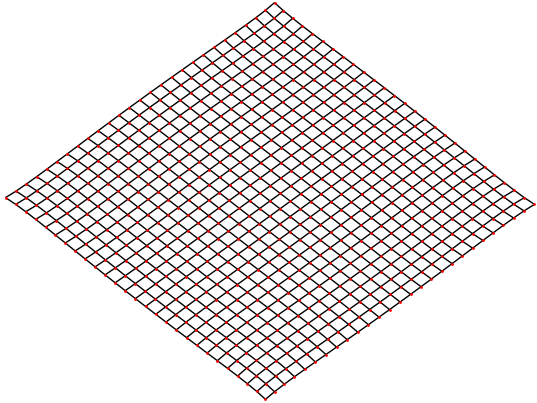}
    \caption{4VAE, $h_2 = 12$}
    %\label{thisexample}
  \end{subfigure}
  \hfill
    \begin{subfigure}[b]{0.24\textwidth}
    \includegraphics[width=\textwidth]{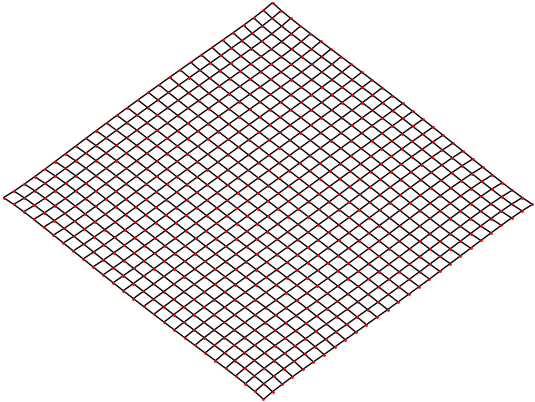}
    \caption{CVAE, $h_2 = 12$}
    %\label{thisexample}
  \end{subfigure}
  \hfill
  \begin{subfigure}[b]{0.24\textwidth}
    \includegraphics[width=\textwidth]{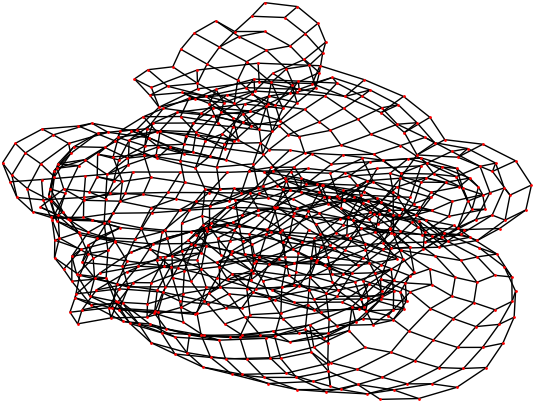}
    \caption{DVAE, $h_2 = 12$}
    %\label{thistwoexample}
  \end{subfigure}
  \hfill
  \begin{subfigure}[b]{0.24\textwidth}
    \includegraphics[width=\textwidth]{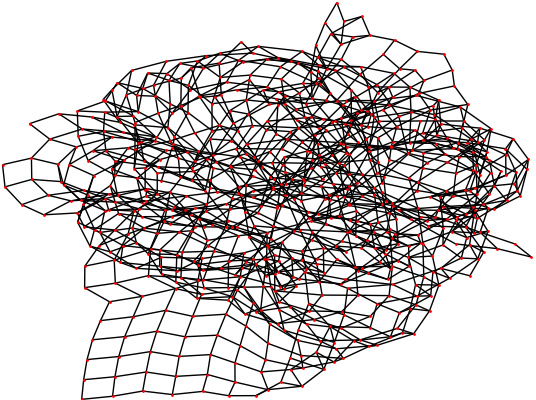}
    \caption{2VAE, $h_2 = 12$}
    %\label{thistwoexample}
  \end{subfigure}
  \hfill
  \begin{subfigure}[b]{0.24\textwidth}
    \includegraphics[width=\textwidth]{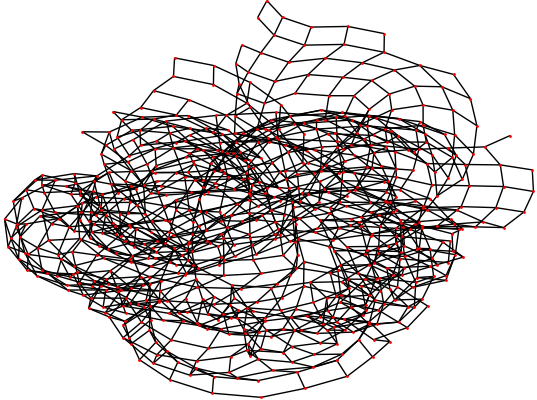}
    \caption{4VAE, $h_2 = 12$}
    %\label{thisexample}
  \end{subfigure}
  \hfill
    \begin{subfigure}[b]{0.24\textwidth}
    \includegraphics[width=\textwidth]{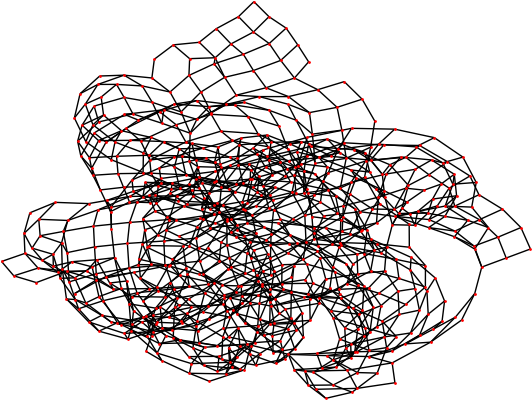}
    \caption{CVAE, $h_2 = 12$}
    %\label{thisexample}
  \end{subfigure}
  %\hfill
\caption{Visualized graph reconstructions for the long grid memorization setting in Figure $6$, main text. The deterministic sign decoder has flawlessly reconstructed the long grid for $h_2 \ge 8$, while the probabilistic decoding $\mathcal{B}\circ \sigma$ is less precise. Decreasing regularization  $\lambda$ allows backpropogation to increase the magnitude of the entries of $\tilde{\mathbf{A}}$, which pushes the probabilistic reconstruction closer to the deterministic region (not shown). Odd rows are decoded deterministically, and even rows probabilistically.}
\label{lastplot}
\end{figure}
%%%%%%%%%%%%%%%%%%%%%%%%%%%%%%%%%%%%%%%%%%%%%%%%%%%%%%%%%%%%%%%%%%%%%%%%%%%%%%%
%%%%%%%%%%%%%%%%%%%%%%%%%%%%%%%%%%%%%%%%%%%%%%%%%%%%%%%%%%%%%%%%%%%%%%%%%%%%%%%

\clearpage
\section{Cora and CiteSeer Networks}\label{realworldgraphnetworks}

In this section, we benchmark the original and augmented shallow embedding schemes against the Cora and CiteSeer datasets (\citealt{CoraCiteSeer}). The hyperparameters used were $h_1 = 2h_2$, $\lambda = 10^{-7}$, $\gamma = 10^{-3}$ and training was done for 20000 epochs for Cora and for 10000 epochs for CiteSeer. The Latent Dimensions entries give the value of $h_2$ used to instantiate the shallow embedding architectures, and the vertical architecture rank entries provide the ranks of the adjacency matrices discovered after training completion. The architecture error entries compute $||\mathbf{A}-\mathbf{\hat{A}}||_{Frob}^2$, which corresponds to double counting any edges memorized as non-edges in the distilled representation, or any non-edges memorized as edges. 95\% confidence intervals are given over $5$ experiments for all entries. % Cora, CiteSeer, Amazon Photos  2708, 3327, 7650 nodes, , and Amazon Photo Network

In our empirical evaluations, the augmented architectures demonstrated substantial enhancements in terms of robustness and efficacy in approximating the graph adjacencies. This observation underscores the potential of the augmented models in capturing intricate graph-based relationships more accurately than their conventional counterparts.

\begin{table}[ht]
\centering

%\begin{table}[h]
\caption{Performance Comparison for Cora}
\label{Cora2to10}
\vskip 0.15in
\begin{center}
\begin{small}
\begin{sc}
\begin{tabular}{lcccccc}
\toprule
Latent Dim & 2 & 4 & 6 & 8 & 10 \\
\midrule
VAE Error ($\times 10^4$)    & 229 $\pm$ 236 & 532 $\pm$ 55 & 574 $\pm$ 17 & 559 $\pm$ 33 & 593 $\pm$ 36 \\
VAE Rank     & 1.8 $\pm$ 0.39 & 4 $\pm$ 0 & 6 $\pm$ 0 & 8 $\pm$ 0 & 9.6 $\pm$ 0.48 \\
DVAE Error   & (183 $\pm$ 219)$10^4$ & $\boldsymbol{10144 \pm 111}$ & 9638 $\pm$ 154 & 8678 $\pm$ 588 & 6876 $\pm$ 551 \\
DVAE Rank    & 1.8 $\pm$ 0.39 & 4 $\pm$ 0 & 5.8 $\pm$ 0.39 & 7.8 $\pm$ 0.39 & 10 $\pm$ 0 \\
CVAE Error   & $\boldsymbol{11435 \pm 420}$ & 10349 $\pm$ 147 & $\boldsymbol{9453 \pm 45}$ & $\boldsymbol{8140 \pm 141}$ & $\boldsymbol{6300 \pm 222}$ \\
CVAE Rank    & 2 $\pm$ 0 & 4 $\pm$ 0 & 6 $\pm$ 0 & 8 $\pm$ 0 & 10 $\pm$ 0 \\
\bottomrule
\end{tabular}
\end{sc}
\end{small}
\end{center}
\vskip -0.1in
%\end{table}

%\begin{table}[h]
\caption{Performance Comparison for Cora}
\label{Cora20to640}
\vskip 0.15in
\begin{center}
\begin{small}
\begin{sc}
\begin{tabular}{lccccccc}
\toprule
Latent Dim & 20 & 40 & 80 & 160 & 320 & 640 \\
\midrule
VAE Error ($\times 10^4$)    & 607 $\pm$ 21 & 594 $\pm$ 9 & 573 $\pm$ 7 & 525 $\pm$ 5 & 442 $\pm$ 27 & 337 $\pm$ 32 \\
VAE Rank     & 17.6 $\pm$ 1 & 31.8 $\pm$ 1.69 & 50.4 $\pm$ 1.33 & 74.6 $\pm$ 1.18 & 96 $\pm$ 17 & 146 $\pm$ 10 \\
DVAE Error   & 2328 $\pm$ 286 & 1203 $\pm$ 309 & 702 $\pm$ 71 & 450 $\pm$ 95 & 290 $\pm$ 107 & 167 $\pm$ 127 \\
DVAE Rank    & 19.8 $\pm$ 0.39 & 38 $\pm$ 1.64 & 68 $\pm$ 2.15 & 99 $\pm$ 2.7 & 133 $\pm$ 5.7 & 136 $\pm$ 17 \\
CVAE Error   & $\boldsymbol{1594 \pm 44}$ & $\boldsymbol{356 \pm 55}$ & $\boldsymbol{88 \pm 43}$ & $\boldsymbol{35 \pm 11}$ & $\boldsymbol{28 \pm 20}$ & $\boldsymbol{22 \pm 5}$ \\
CVAE Rank    & 20 $\pm$ 0 & 40 $\pm$ 0 & 80 $\pm$ 0 & 160 $\pm$ 0 & 320 $\pm$ 0 & 580 $\pm$ 15 \\
\bottomrule
\end{tabular}
\end{sc}
\end{small}
\end{center}
\vskip -0.1in
%\end{table}

%\begin{table}[h]
\caption{Performance Comparison for CiteSeer}
\label{CiteSeer2to10}
\vskip 0.15in
\begin{center}
\begin{small}
\begin{sc}
\begin{tabular}{lcccccc}
\toprule
Latent Dim & 2 & 4 & 6 & 8 & 10 \\
\midrule
GAE Error ($\times 10^4$)    & 179 $\pm$ 321 & 712 $\pm$ 249 & 862 $\pm$ 160 & 956 $\pm$ 17 & 964 $\pm$ 6 \\
GAE Rank     & 1.8 $\pm$ 0.39 & 3.8 $\pm$ 0.39 & 6 $\pm$ 0 & 7.8 $\pm$ 0.39 & 9.2 $\pm$ 0.73 \\
DGAE Error   & $(458 \pm 374)10^4$ & $\boldsymbol{8849 \pm 74}$ & $\boldsymbol{8218 \pm 114}$ & $\boldsymbol{7534 \pm 276}$ & 6600 $\pm$ 274 \\
DGAE Rank    & 2 $\pm$ 0 & 4 $\pm$ 0 & 6 $\pm$ 0 & 8 $\pm$ 0 & 9.8 $\pm$ 0.39 \\
CGAE Error   & $\boldsymbol{25142 \pm 12756}$ & 12060 $\pm$ 3134 & 8455 $\pm$ 159 & 7612 $\pm$ 108 & $\boldsymbol{6512 \pm 137}$ \\
CGAE Rank    & 2 $\pm$ 0 & 4 $\pm$ 0 & 6 $\pm$ 0 & 8 $\pm$ 0 & 10 $\pm$ 0 \\
\bottomrule
\end{tabular}
\end{sc}
\end{small}
\end{center}
\vskip -0.1in
%\end{table}

%\begin{table}[h]
\caption{Performance Comparison for CiteSeer}
\label{CiteSeer20to640}
\vskip 0.15in
\begin{center}
\begin{small}
\begin{sc}
\begin{tabular}{lccccccc}
\toprule
Latent Dim & 20 & 40 & 80 & 160 & 320 & 640 \\
\midrule
GAE Error ($\times 10^4$)    & 967 $\pm$ 10 & 947 $\pm$ 3 & 911 $\pm$ 13 & 868 $\pm$ 2 & 820 $\pm$ 4 & 744 $\pm$ 8 \\
GAE Rank     & 17.8 $\pm$ 0.73 & 33.2 $\pm$ 0.73 & 55.6 $\pm$ 1.71 & 67 $\pm$ 1.75 & 84.4 $\pm$ 1.47 & 110 $\pm$ 4.25 \\
DGAE Error   & 3909 $\pm$ 87 & 2416 $\pm$ 209 & 2968 $\pm$ 2221 & 1884 $\pm$ 330 & $\boldsymbol{1160 \pm 395}$ & 486 $\pm$ 594 \\
DGAE Rank    & 19.6 $\pm$ 0.48 & 35.8 $\pm$ 2.43 & 56.4 $\pm$ 4.75 & 66.2 $\pm$ 2.59 & 74 $\pm$ 4.25 & 96.2 $\pm$ 14 \\
CGAE Error   & $\boldsymbol{3120 \pm 124}$ & $\boldsymbol{1336 \pm 89}$ & $\boldsymbol{1179 \pm 1857}$ & $\boldsymbol{197 \pm 216}$ & 2177 $\pm$ 4153 & $\boldsymbol{69 \pm 44}$ \\
CGAE Rank    & 20 $\pm$ 0 & 40 $\pm$ 0 & 80 $\pm$ 0 & 160 $\pm$ 0 & 316 $\pm$ 6 & 544 $\pm$ 21 \\
\bottomrule
\end{tabular}
\end{sc}
\end{small}
\end{center}
\vskip -0.1in
%end{table}
\end{table}

\end{document}